\theoremstyle{plain}
\newtheorem{theorem}{Theorem}
\newtheorem{proposition}[theorem]{Proposition}
\newtheorem{lemma}[theorem]{Lemma}
\newtheorem{corollary}[theorem]{Corollary}
\theoremstyle{definition}
\newtheorem{definition}[theorem]{Definition}
\theoremstyle{remark}
\newtheorem*{remark*}{Remark}
\title{Causal Discovery with fewer independence tests}
\author{
}
\date{}
\begin{document}
\maketitle

\addtocontents{toc}{\protect\setcounter{tocdepth}{0}}

\begin{abstract}

% Many questions in science center around uncovering causal relationships, highlighting the fundamental importance of causal discovery. The PC algorithm, among the earliest and most widely adopted methods for causal discovery, has inspired extensive subsequent research, broadening its applicability to diverse settings. Nevertheless, most causal discovery algorithms, including PC, frequently involve an impractical exponential number of independence tests for structure recovery. This characteristic makes them impractical for various applications, despite their effectiveness in specific real-world scenarios.
Many questions in science center around the fundamental problem of understanding causal relationships.
However, most constraint-based causal discovery algorithms, including the well-celebrated PC algorithm, often incur an \emph{exponential} number of conditional independence (CI) tests, posing limitations in various applications.
Addressing this, our work focuses on  characterizing what can be learned about the underlying causal graph with a reduced number of CI tests. 
We show that it is possible to a learn a coarser representation of the hidden causal graph with a \emph{polynomial} number of tests. 
This coarser representation, named Causal Consistent Partition Graph (CCPG), comprises of a partition of the vertices and a directed graph defined over its components. 
CCPG satisfies consistency of orientations and additional constraints which favor finer partitions.
Furthermore, it reduces to the underlying causal graph when the causal graph is identifiable. 
As a consequence, our results offer the first efficient algorithm for recovering the true causal graph with a polynomial number of tests, in special cases where the causal graph is fully identifiable through observational data and potentially additional interventions.

\end{abstract}

\section{Introduction}
\label{sec:introduction}

Causal discovery is a fundamental task in various scientific disciplines including biology, economics, and sociology \cite{king2004functional,cho2016reconstructing,tian2016bayesian, sverchkov2017review,rotmensch2017learning,pingault2018using, de2019combining, reichenbach1956direction,woodward2005making,eberhardt2007interventions, hoover1990logic, friedman2000using,robins2000marginal, spirtes2000causation, pearl2003causality}. 
Directed acyclic graphs (DAGs) stand out as a popular choice for representing causal relations, with edge directions signifying the flow of information between variables. The core objective of causal discovery is to identify both the edges and their orientations based on available data. While certain structures can be recovered from observational data \cite{verma1990equivalence}, orienting the full graph often requires additional experiments or interventions.

Research on causal structure learning from observational data dates back to the 1990s \cite{verma1990equivalence, spirtes1989causality}. As a pioneering work in this direction, the PC algorithm \cite{spirtes2000causation}, named after the authors Peter Spirtes and Clark Glymour, still remains one of most popular and widely used algorithms. It recovers the structure using observational data through conditional independence (CI) tests, with the number of tests being exponential in the degree of the graph. 
Following this, many causal discovery algorithms emerged \cite{kalisch2007estimating, brenner2013sparsityboost, alonso2013scaling, schulte2010imap}, accommodating diverse and more general settings, including the presence of latent variables \cite{spirtes1999algorithm, spirtes2013causal} and interventional data \cite{ eberhardt2005number,eberhardt2006n}. However, a common challenge shared by these algorithms is their reliance, to different extents, on an \emph{exponential} number of CI tests in certain graph parameters. This inherent dependence on an exponential number of tests poses practical challenges, making them unsuitable for many real-world scenarios. Moreover, it suggests that achieving \emph{exact} causal structure learning can be highly challenging.

As performing exponential number of tests is limited in many applications, it motivates us to study the following question:
\begin{center}
\emph{What useful information about the underlying causal graph can be inferred with fewer conditional independence tests?}
\end{center}
Aligned with this motivation, our work also explores the role of interventions in the structure learning process. 
% This exploration is important and meaningful because it is easy to create instances where the recovery of structure and orientations using interventional data can be accomplished with only a polynomial number of CI tests. A notable example is when atomic interventional data is available for all vertices 
% \footnote{Given interventional data on node $v$, it is immediate that the set of all dependent nodes conditioned on an empty set comprises solely the descendants of $v$, facilitating the determination of ordering and, consequently, the structure using a polynomial number of tests.}.
% This observation of the usefulness of interventions in the structure recovery process motivates us to pose and study the following intriguing question.
% \begin{center}
%     \emph{Can we use interventional data to infer structural information with a fewer number of CI tests?}
% \end{center}

In our work, we study these questions under standard Markov, faithfulness and causal sufficiency assumptions \citep{lauritzen1996graphical,spirtes2000causation}. The primary contribution of our work is an efficient algorithm that uses a \emph{polynomial} number of CI tests and recovers a representation of the underlying causal graph with observational and optionally interventional datasets. This representation consists of a partition of the vertices and a DAG defined over its components which is consistent with the underlying causal graph.
% \footnote{Consistency, in this context, signifies that our representation establishes a partial order among the subsets of vertices defined by the components of the partitions. This, in turn, induces a partial order on the individual vertices which aligns consistently with the partial order induced by the causal graph.}. 
In addition, our representation is designed to avoid dummy partitions that group all the vertices into a single component. The definition of our representation ensures that the components in the partition satisfy several additional properties, guaranteeing that each component either contains a single vertex or comprises an edge that could only be oriented after an intervention is performed on one of its endpoints. We refer to this representation as the Causally Consistent Partition Graph (CCPG) representation.

An important implication of our results is that if the underlying causal graph is fully identifiable using only observational data, our algorithm yields a CCPG with a partition containing components, each of which is of size one. The size-one property of each component means that our algorithm recovers the true causal graph using only a \emph{polynomial} number of conditional independence tests. We extend this result in the presence of interventions and provide an algorithm that recovers the true causal graph, when the set of interventions provided is sufficient to identify the underlying causal graph. To the best of our knowledge, our algorithms present the first to guarantee recovering the true causal graph using a polynomial number of tests when the graph is either entirely identifiable from observational data or with an additional set of interventions. 

\subsection{Related Works}

%\jj{from caroline: Make sure to refer to and put your findings into context of this paper (and of course the previous Chickering NP hard paper): https://arxiv.org/pdf/1309.6824}

%Causal discovery algorithm

%Conditionining on higher independence tests, saurav paper

%In the sparse setting, PC, and other papers do well.
Efficient algorithms for causal structure learning \cite{spirtes2000causation, claassen2013learning} exist for constant bounded degree graphs, recovering the causal graph with a polynomial number of CI tests. 
For general causal graphs, current methods often entail an exponential number of CI tests, where \cite{xie2008recursive,zhang2024membership} aimed to to reduce such complexity.
For Bayesian network learning, finding a minimal Bayesian network is NP-hard, even with a constant-time CI oracle and nodes with at most $k \geq 3$ parents. \citet{chickering2004large} demonstrated this hardness through a polynomial reduction from the NP-complete problem, Degree-Bounded Feedback Arc Set. These findings highlight the contrast between causal structure and minimal Bayesian network learning, suggesting that causal structure learning is notably more straightforward. Our results further reinforce this notion by identifying a special class of causal graphs that can be recovered with a polynomial number of conditional independence tests. For other hardness results on Bayesian network learning, we refer readers to \citet{bouckaert1994properties, chickering2004large} and references therein.

Learning causal relationships from observational data \cite{verma1990equivalence, spirtes1989causality, spirtes2000causation, chickering2002optimal, geiger2002parameter, nandy2018high} and interventional data \cite{eberhardt2010causal,hu2014randomized,shanmugam2015learning,greenewald2019sample,squires2020active,choo2022verification,choo2023subset,shiragur2024meek} is a well studied problem with a rich literature. We encourage interested readers to explore \citet{glymour2019review,squires2022causal} and references therein for a more comprehensive understanding and further details.

%There is a rich literature on causal discovery from interventional data, and causal discovery algorithms can be broadly classified into two categories: adaptive \cite{shanmugam2015learning,greenewald2019sample,squires2020active,choo2022verification,choo2023subset} versus non-adaptive \cite{eberhardt2005number,eberhardt2006n,eberhardt2010causal,hu2014randomized}.
\subsection{Organization} 
The rest of our paper is organized as follows. \Cref{sec:prelim} is our preliminary section. In \Cref{sec:results}, we provide all the main results of the paper. In \Cref{sec:4} and \Cref{sec:5} combined, we provide our CCPG recovery algorithm when just observational data is available. In \Cref{sec:6}, we extend our results to the case of interventions. We provide numerical results in \Cref{sec:exp}. Finally in \Cref{sec:7}, we conclude with a short discussion and few open directions.

% Exponential number of indepencence tests
% IMEC in the precense of the additional interventional data. The algorithms that require IMEC, mostly start with the essential graph from the obervational data or use PC algorithm in some way. PC algorithm takes exponential in the degree number of independence tests to recover the and all the other results also rely in some sense on the . Trivially because they have to recover strucuture in some way.

%Motivation to our work; We stress on number of indepence tests
%In this work, we initiate the study of number of independence tests needed to recover the causal graph or I MEC in the presenence of internvetions. Through our work we wish to understand questions of the following broad level questions:

%Positive answers and contributions:
%Some partial information can be recovered in poly tests. This partition if much better if the essential graph is already directed. Recoverd partial information of IMEC.
%In particular, if verification number is zero, we can recover the whole graph in polynomial tests.
%In fact, if the intervention data is a verifying set, we can infact recover the whole graph in polynomial number of independece tests. In some sense, interventional data is even helpful in recovering the structure if sufficient.

%Significance of our result:
%First algorithm which does blah blah

\section{Preliminaries}
\label{sec:prelim}

\newcommand{\src}{\mathrm{src}}

\subsection{Graph Definitions} 

Let $\cG$ be a directed acyclic graph (DAG) on $n$ vertices in $V$. 
For a vertex $v \in V$, let $\Pa(v), \Anc(v)$, and $\Des(v)$ denote the \emph{parents}, \emph{ancestors}, and \emph{descendants} of $v$ respectively. Let $\Anc[v] = \Anc(v) \cup \{v\}$ and $\Des[v] = \Des(v) \cup \{v\}$.

For a set of vertices $S\subset V$, denote $\Pa(S)=\cup_{v\in S} \Pa(s)$. Similarly define $\Anc(S), \Des(S), \Anc[S]$, and $\Des[S]$. 
We write $\src(S)$ as the set of \emph{source} nodes within $S$, that is, $$\src(S)=\{v \in S ~|~ \Anc(v) \cap S = \varnothing\}~.$$
Denote $\bar{S}=V\setminus S$. Let $\cG[S]$ be a \emph{subgraph} of $\cG$ by removing all vertices in $\bar{S}$.

A \emph{v-structure} refers to three distinct vertices $u,v,w$ such that $u \to v \gets w$ and $u,w$ are not adjacent.
An edge $u \to v$ is a \emph{covered edge} \cite{chickering2013transformational} if $\Pa[u]=\Pa(v)$. 
A \emph{path} in $\cG$ is a list of distinct vertices, where consecutive vertices are adjacent.
We can associate a \emph{topological ordering} $\pi : V \to [n]$ to any DAG $\cG$ such that any $u\to v$ in $\cG$ satisfy $\pi(u) < \pi(v)$. 
Note that such topological ordering is not necessarily unique.

\subsection{D-Separation and Conditional Independence}

DAGs are commonly used in causality \cite{pearl2009causality}, where vertices represent random variables and their joint distribution $P$ factorizes according to the DAG: $P(v_1, \dots, v_n) = \prod_{i=1}^n P(v_i \mid \Pa(v_i))$. This factorization entails a set of conditional independencies (CIs) in the \emph{observational} distribution $P$. These CI relations are fully characterized by \emph{d-separation} \cite{geiger1990logic}. Formally, for disjoint vertex sets $A,B,C\subset V$, sets $A,B$ are \emph{d-separated} by $C$ if and only if any path connecting $A$ and $B$ in $\cG$ is inactive given $C$. A path is \emph{inactive} given $C$ when it has a \emph{collider}\footnote{Vertex $d$ is a collider on a path iff $\cdot\to d\leftarrow\cdot$ on the path.} $d\not\in \Anc[C]$ or a non-collider $c\in C$; otherwise the path is \emph{active} given $C$. Figure~\ref{fig:d-sep} illustrates these concepts.

\begin{figure}[h]
% \vspace*{-20pt}
    \centering
    \includegraphics[width=0.45\textwidth]{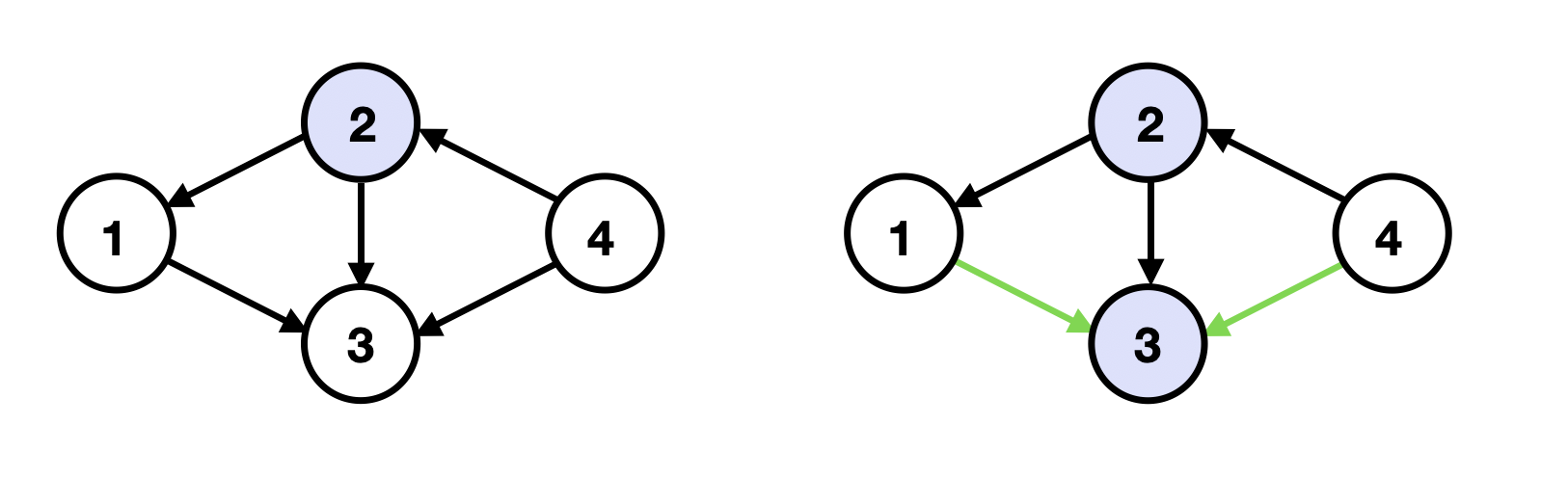}
    % \vspace*{-18pt}
    \caption{\textbf{(Left).} $\{1\}$ and $\{4\}$ are \emph{d-separated} by $\{2\}$, as all paths are \emph{inactive} given $\{2\}$. \textbf{(Right).} $\{1\}$ and $\{4\}$ are \emph{not} d-separated by $\{2,3\}$, as path $1\to 3\leftarrow 4$ is \emph{active} given $\{2,3\}$ by \emph{collider} $3$.}\label{fig:d-sep}
\end{figure}

We write $A\CI B\mid C$\footnote{For simplicity, we also write $A\CI B\mid C$ for potential overlapping sets to denote $A\CI B\mid C\setminus (A\cup B)$} when $A,B$ are conditionally independent given $C$ in the observational distribution $P$. If any set among $A,B,C$ contains only one node, e.g., $A=\{a\}$, we write $a\CI B\mid C$ for simplicity. When $C$ d-separates $A,B$, then it holds that $A\CI B\mid C$ (known as the global Markov property \cite{geiger1990logic}). Under the faithfulness assumption, the reverse also holds, i.e., all CI relations in $P$ are implied by d-separation in $\cG$.

\textbf{Setup.} In this work, we assume that the causal DAG $\cG$ is \emph{unknown}. But we assume causal sufficiency (i.e., no latent confounders), faithfulness and access to enough samples from $P$ to determine if $A\CI B\mid C$ for any $A,B,C\subset V$. As all CIs are implied by d-separations, we may infer information about $\cG$ using these tests.

\subsection{Interventions}

An \emph{intervention} $I\subset V$ is an experiment where the conditional distributions $P(v\mid \Pa(v))$ for $v\in I$ are changed into $P^I(v)$.\footnote{We consider hard interventions in this work.} Such interventions  eliminate the dependency between $v$ and $\Pa(v)$. Let $\cG^I$ denote the modified version of $\cG$, where all incoming edges to $v\in I$ are removed.
Let $P^I$ denote the interventional distribution, i.e., $P^I(v_1,\dots,v_n)=\prod_{v\in I} P^I(v)\prod_{v\not\in I}P(v\mid\Pa(v))$. Then $P^I$ factorizes with respect to $\cG^I$. We denote $A\CI_I B\mid C$ for CI tests in the interventional distribution $P^I$.

\textbf{Setup with Interventions.} Similar to the observational setting, we assume faithfulness of $P^I$ to $\cG^I$ and access to enough samples from $P^I$ to determine if $A\CI_I B\mid C$.

\subsection{Verifying Intervention Sets and Covered Edges}

When it is possible to perform \emph{any number of CI tests}: with observational data, a DAG $\cG$ is in general only identifiable up to its skeleton, v-structures 
\cite{andersson1997characterization}, and possibly additional edges given by the Meek rules \cite{meek1995}. Identifiability can be improved with interventional data \cite{hauser2012characterization}, where $I$ allows us to infer the edge orientation of any edge cut by $I$ and $V\setminus I$.

A \emph{verifying intervention set} $\cI$ for a DAG $\cG$ \cite{choo2022verification} is a set of interventions that fully orients $\cG$, possibly with repeated applications of the Meek rules. We will make use of the following result in our work.
\begin{proposition}[Theorem 9 in \cite{choo2022verification}]\label{prop:1}
   Set $\cI$ is a verifying intervention set if and only if for every covered edge $u\to v$ in $\cG$, there is $|I\cap\{u,v\}|=1$ for some $I\in\cI$. 
\end{proposition}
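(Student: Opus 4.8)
The plan is to recast the statement via Markov equivalence. By definition $\cI$ is a verifying intervention set for $\cG$ exactly when the interventions in $\cI$, followed by the Meek rules, orient every edge of $\cG$; by soundness and completeness of the Meek rules \cite{meek1995} this is equivalent to $\cG$ being the \emph{unique} DAG consistent with the observational distribution and with all of $\{P^{I}:I\in\cI\}$, and by the characterization of interventional Markov equivalence \cite{hauser2012characterization} this in turn says: $\cG$ is the only DAG $\cG'$ that is Markov equivalent to $\cG$ and has $(\cG')^{I}$ Markov equivalent to $\cG^{I}$ for every $I\in\cI$. So it suffices to prove that $\cG$ is this unique DAG if and only if every covered edge $u\to v$ of $\cG$ has $|I\cap\{u,v\}|=1$ for some $I\in\cI$, and I argue both implications by contraposition.

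First suppose some covered edge $u\to v$ of $\cG$ is not cut, i.e.\ every $I\in\cI$ satisfies $\{u,v\}\subseteq I$ or $\{u,v\}\cap I=\varnothing$. Let $\cG'$ be $\cG$ with the edge $u\to v$ reversed; since reversing a covered edge preserves acyclicity and the Markov equivalence class \cite{chickering2013transformational}, $\cG'$ is a DAG Markov equivalent to $\cG$ with $\cG'\ne\cG$. Fix $I\in\cI$. If $\{u,v\}\cap I=\varnothing$, then $u,v\notin I$ forces the parent sets of $u$ and of $v$ to be unchanged in $\cG^{I}$, so $u\to v$ is still covered in $\cG^{I}$ and $(\cG')^{I}$ is $\cG^{I}$ with this covered edge reversed, hence Markov equivalent to $\cG^{I}$. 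If $\{u,v\}\subseteq I$, then the edge between $u$ and $v$ is deleted in both $\cG^{I}$ and $(\cG')^{I}$ while every other edge is treated identically, so $\cG^{I}=(\cG')^{I}$. Thus $\cG'$ witnesses that $\cG$ is not the unique DAG consistent with $\cI$, so $\cI$ is not verifying.

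Conversely, suppose $\cI$ is not verifying, so there is $\cG'\ne\cG$ Markov equivalent to $\cG$ with $(\cG')^{I}$ Markov equivalent to $\cG^{I}$ for all $I\in\cI$; let $R$ be the nonempty set of edges oriented oppositely in $\cG$ and $\cG'$. No edge of $R$ is cut by any $I\in\cI$: if $\{x,y\}\in R$ with $x\to y$ in $\cG$, $y\to x$ in $\cG'$, and (say) $x\in I$, $y\notin I$, then $\cG^{I}$ keeps the edge between $x$ and $y$ (it points into $y\notin I$) but $(\cG')^{I}$ deletes it (it points into $x\in I$), contradicting that $\cG^{I}$ and $(\cG')^{I}$ have the same skeleton; the other case is symmetric. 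Hence it remains only to produce an edge of $R$ that is covered in $\cG$: such an edge is a covered edge of $\cG$ cut by no $I\in\cI$, contradicting the hypothesis and finishing the proof.

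The heart of the argument is therefore the structural claim that two distinct Markov-equivalent DAGs $\cG,\cG'$ must disagree on some edge that is covered in $\cG$, and this is the step I expect to be the main obstacle. One clean route is the transformational characterization \cite{chickering2013transformational}: there is a sequence of covered-edge reversals from $\cG$ to $\cG'$ passing only through DAGs Markov equivalent to $\cG$, and, choosing a shortest such sequence, the first reversed edge is covered in $\cG$ and lies in $R$. A second route works through the essential graph: since $\cG\ne\cG'$, their common essential graph has an undirected edge, lying in some chain component $K$ on which $\cG$ induces a v-structure-free DAG; a source $s$ of $\cG[K]$ and its earliest child $c$ satisfy $\Pa_{\cG[K]}(c)=\{s\}$, and since all vertices of a chain component share the same parents outside $K$ (a consequence of acyclicity together with Meek's first rule), the edge $s\to c$ is in fact covered in $\cG$ and, being undirected in the essential graph, lies in $R$. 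The delicate part under either route is the global consistency bookkeeping — that the first reversal is never undone, respectively that chain components have shared outside-parents and induce v-structure-free subgraphs — since a purely ad hoc extremal choice of an $R$-edge (say the one whose $\cG'$-tail is earliest in a topological order of $\cG'$) handles most but not all cases and is not self-contained.
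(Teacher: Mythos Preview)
The paper does not supply its own proof of this proposition: it is quoted verbatim as Theorem~9 of \cite{choo2022verification} and used as a black box, so there is no in-paper argument to compare against.

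Assessed on its own merits, your argument is sound. The recasting of ``$\cI$ is verifying'' as ``$\cG$ is the unique member of its $\cI$-Markov equivalence class'' is exactly the content of \cite{hauser2012characterization}, and both contrapositive directions are handled cleanly. The one place where you hedge unnecessarily is the Chickering route in the final paragraph: Chickering's theorem already guarantees a sequence of \emph{exactly} $|R|$ covered-edge reversals transforming $\cG$ into $\cG'$. Since each reversal flips one edge and after $|R|$ steps every edge agrees with $\cG'$, each reversal moves an edge from $R$ to agreement and none is ever undone; hence the first reversed edge is automatically covered in $\cG$ and lies in $R$, with no further bookkeeping required. Your alternative essential-graph route also works, but the Chickering argument is already complete as you stated it.
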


The \emph{verification number} $\nu(\cG)$ is defined as the minimum size of any verifying intervention set of $\cG$. This proposition tells us that $\nu(\cG)$ equals to the minimum size of any vertex cover of the covered edges in $\cG$. 

\newcommand{\expt}[1]{\mathbb{E}\left[ #1\right]}
\newcommand{\findsrc}{\mathrm{FindSource}}
\newcommand{\Pt}{\mathcal{P}}
\newcommand{\iDes}[1]{\mathrm{iDes}(#1)}

\section{Main Results}
\label{sec:results}

Here we present our main findings. As highlighted in the introduction, the key contribution of our work lies in recovering a representation of the underlying causal graph that satisfies various desirable properties with very few CI tests. We now provide a formal definition of this representation.

\begin{definition}[CCPG \& $\cI$-CCPG]\label{def:TCG}
    A \emph{Causally Consistent Partition Graph (CCPG)} representation of a DAG $\cG$ on $V$ consists of a partition of $V$ into components $V_1,\dots,V_k$ and a DAG $\cD$ between the components such that,\\
    \textbf{(intra-component property):} for each $i\in [k]$, it holds that  $|\src(V_i)|=1$. Furthermore, if $|V_i|>1$, then $\cG[V_i]$ has at least one covered edge.\\
    \textbf{(inter-component property):} $\cD$ is topologically ordered, i.e., $i\rightarrow j$ in $\cD$ only if $V_i<V_j$. It is also consistent with $\cG$: (1) if there is no directed edge $i\rightarrow j$ in $\cD$, then there are no edges between $V_i$ and $V_j$ in $\cG$; (2) if there is a directed edge $i\to j$ in $\cD$, then there is $u\in V_i$ such that $u\in\Pa(\src(V_j))$. 

    We further define an \emph{Interventional Causally Consistent Partition Graph ($\cI$-CCPG)} representation of $\cG$ with respect to an intervention set $\cI$: an $\cI$-CCPG is a CCPG representation of $\cG$ that additionally satisfies the following \textbf{strong intra-component condition}: for each $i \in [k]$, if $|V_i|>1$ then $\cG[V_i]$ has at least one \emph{unintervened}\footnote{An edge is intervened by $I$ if only one of the vertices is in $I$.} covered edge.
\end{definition}

\begin{figure}[h]
% \vspace*{-20pt}
    \centering
    \includegraphics[width=0.45\textwidth]{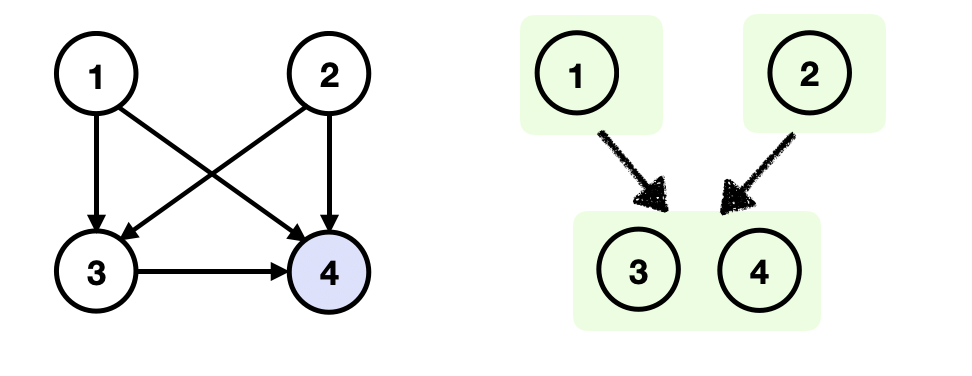}
    % \vspace*{-18pt}
    \caption{\textbf{Example of CCPG \& $\cI$-CCPG.} \textbf{(Left).} Ground-truth $\cG$. \textbf{(Right).} A CCPG representation of $\cG$, where $V_1,V_2,V_3$ are indicated by green boxes and $\cD$ is illustrated in chalk strokes. Vertices $3,4$ can be in one component as $3\to 4$ is a covered edge. For $\cI=\{4\}$, the only $\cI$-CCPG is $\cG$ itself (due to strong intra-component condition in \Cref{def:TCG}).
    }\label{fig:ccpg}
\end{figure}

Figure~\ref{fig:ccpg} illustrates these concepts. Note that when $\cI=\varnothing$, $\cI$-CCPG reduces to CCPG.

In Definition~\ref{def:TCG}, the first property prefers finer partitions, while the second property ensures consistency. Formally, we can show the following properties of these representations, which establish the significance of CCPGs. Proofs for all lemmas in this section can be found in Appendix~\ref{app:a}.

\begin{restatable}[Properties of CCPG]{lemma}{ccpgprop}\label{ccpg:prop}
    For any intervention set $\cI$ (including $\varnothing$), the following arguments hold:
    \begin{itemize}
        \item $\cD=\cG$ (i.e., partitioning $V$ into individual vertices) is a valid $\cI$-CCPG of $\cG$. 
        % \item The minimum number of additional interventions 
        % is larger $|\{i \in [k]~|~|V_i|>1 \}|$.
        \item If the verification number of $\cG$ is zero, i.e., $\nu(\cG)=0$, then $\cD=\cG$ is the unique valid $\cI$-CCPG of $\cG$.
        \item If $\cI$ is a verifying intervention set of $\cG$, then $\cD=\cG$ is the unique valid $\cI$-CCPG of $\cG$.
    \end{itemize}
\end{restatable}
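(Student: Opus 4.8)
The plan is to verify each of the three bullet points in turn, working directly from \Cref{def:TCG} and \Cref{prop:1}.

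For the first bullet, I would take the partition of $V$ into singletons $V_i = \{v_i\}$ and let $\cD = \cG$ (with components ordered by any topological ordering $\pi$ of $\cG$). The intra-component property is immediate: $|\src(V_i)| = |\{v_i\}| = 1$, and since no component has size larger than one, the covered-edge and unintervened-covered-edge requirements are vacuous, so this works for every $\cI$ including $\varnothing$. For the inter-component property, with each $V_i$ a singleton $\{v_i\}$, an edge $i \to j$ in $\cD = \cG$ is exactly an edge $v_i \to v_j$ in $\cG$; topological orderedness follows from choosing the ordering consistent with $\pi$; condition (1) holds because ``no edge $i\to j$ in $\cD$'' means ``no edge $v_i \to v_j$ in $\cG$'', hence no edge between the components; and condition (2) holds because $i\to j$ in $\cD$ means $v_i \to v_j = \src(V_j)$ in $\cG$, so $v_i \in \Pa(\src(V_j))$. (One should also check that $\cD$ must be acyclic as a DAG between components, which follows since $\cG$ is acyclic.) So $\cD = \cG$ is always a valid $\cI$-CCPG.

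For the second and third bullets, the task is uniqueness, and here the key observation is: if a CCPG (or $\cI$-CCPG) has a component $V_i$ with $|V_i| > 1$, then by the intra-component property $\cG[V_i]$ contains a covered edge $u \to v$ (an unintervened one in the $\cI$-CCPG case). For the second bullet, $\nu(\cG) = 0$ means, by \Cref{prop:1} (equivalently, by the characterization $\nu(\cG)$ = minimum vertex cover of the covered edges), that $\cG$ has \emph{no} covered edges at all; in particular $\cG[V_i]$ has no covered edge, contradicting $|V_i|>1$. Hence every component is a singleton. I would then argue that once the partition is all singletons, the inter-component DAG $\cD$ is forced to equal $\cG$: conditions (1) and (2) together pin down exactly which ordered pairs of singletons have an edge — (1) forbids an edge $i\to j$ whenever $v_i, v_j$ are non-adjacent in $\cG$, and if $v_i\to v_j$ in $\cG$ then (since $V_j$ is a singleton, $\src(V_j) = v_j$) we need $i\to j$ or $j\to i$ in $\cD$, but $j\to i$ would by (2) force $v_j\in\Pa(v_i)$, impossible by acyclicity of $\cG$, so $i\to j$; and if $v_j\to v_i$ symmetrically the edge must be $j\to i$. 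Thus $\cD = \cG$.

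For the third bullet, suppose $\cI$ is a verifying intervention set and some component has $|V_i|>1$. Then $\cG[V_i]$ has an unintervened covered edge $u\to v$, meaning $|I \cap \{u,v\}| \ne 1$ for all $I \in \cI$. But $u\to v$ is also a covered edge of $\cG$ itself (covered-ness — $\Pa[u] = \Pa(v)$ in $\cG[V_i]$ — needs a small check that it transfers to $\cG$; actually the cleanest route is to observe that the definition's intra-component condition can be stated so that the covered edge is covered in $\cG$, or to note $V_i$ is ancestrally closed enough that parents agree; I should confirm this detail). Then \Cref{prop:1} applied to $\cG$ says a verifying set must have some $I$ with $|I\cap\{u,v\}|=1$, contradicting that the edge is unintervened. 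So again all components are singletons, and the same argument as in bullet two forces $\cD = \cG$.

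The main obstacle I anticipate is the covered-edge transfer detail in bullets two and three: ensuring that a covered edge of the induced subgraph $\cG[V_i]$ is a covered edge of the full graph $\cG$, so that \Cref{prop:1} and the $\nu(\cG)=0$ characterization apply. This should follow from the structure forced by the inter-component property (incoming edges to $V_i$ only land on $\src(V_i)$, so non-source vertices of $V_i$ have the same parents in $\cG$ as in $\cG[V_i]$), but it is the one place where I would need to argue carefully rather than quote a definition; everything else is a direct unwinding of \Cref{def:TCG}.
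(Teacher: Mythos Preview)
Your approach is essentially the same as the paper's, but considerably more thorough: the paper's entire proof is three sentences that invoke \Cref{prop:1} for the third bullet and assert the rest. Your careful verification of bullet one and of why all-singleton partitions force $\cD=\cG$ fills in details the paper omits.

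One point to flag on the covered-edge transfer concern you raise. Your proposed fix---that the inter-component property forces all incoming edges to $V_i$ to land on $\src(V_i)$---does not actually follow from \Cref{def:TCG}: condition (2) only guarantees that \emph{some} vertex of $V_i$ is a parent of $\src(V_j)$ when $i\to j$ in $\cD$; it does not restrict where other cross-component edges land. The cleanest resolution (and almost certainly the intended reading, consistent with how the paper uses ``covered edge'' elsewhere, e.g.\ in \Cref{lem:main-lem}) is that the intra-component clause refers to an edge that is covered in the full graph $\cG$, not merely in the induced subgraph $\cG[V_i]$. Under that reading the transfer issue vanishes, $\nu(\cG)=0$ directly rules out any component of size $>1$, and your argument for bullets two and three goes through cleanly. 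The paper's own proof implicitly relies on this same reading without comment.
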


%$$(1) TCG always exists. Consider the DAG itself, (2) if $\cG$ has no covered edge, then its only TCG representation is itself.
% note that: (1) $\cI$-TCG always exists; consider the DAG itself, and (2) for a verifying intervention set $\cI$, the only $\cI$-TCG is the DAG itself.

% The second property in the above definition capture the consistent ordering and the first captures the some sense niceness of the TCG. The more formal description of the consistency of TCG is captured by the following result.
% \begin{corollary}
% Let $\cG$ be a DAG and $\cD$ be its topological order, then for any valid topological ordering on the partition of the vertex set, there exists an ordering for the vertices within the each subset $V_{i}$ such that the resulting ordering will be a valid order for $\cG$.        
% \end{corollary}

The key algorithmic contribution of our work, proven in Section~\ref{sec:5}, lies in an efficient algorithm that learns a valid CCPG with only \emph{polynomial} number of CI tests.

\begin{theorem}[Learning CCPG]\label{thm:main-1}
    Given observational data, there exists an efficient algorithm that performs at most $\cO(n^5)$ CI tests, and outputs a CCPG representation.
\end{theorem}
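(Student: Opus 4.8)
The plan is to build the partition $V_1,\dots,V_k$ together with the component DAG $\cD$ by a peeling procedure that removes one component at a time, in topological order. We maintain a set $R\subseteq V$ of not-yet-assigned vertices, initialized to $V$, under the invariant that $R$ is closed under descendants in $\cG$; equivalently, the already-formed components $V_1,\dots,V_{i-1}$ constitute an ancestral set $\bar R$. The purpose of this invariant is a localization property for d-separation: for $u,v\in R$, conditioning on $\bar R$ (possibly together with a subset of $R$) cannot open or block paths running \emph{through} $\bar R$, so the informative CI relations among vertices of $R$ are governed by the induced subgraph $\cG[R]$ alone. Concretely, one checks that for $u,v\in R$ we have $u\CI v\mid\bar R$ iff $u$ and $v$ share no common ancestor inside $\cG[R]$ (a vertex counting as its own ancestor); consequently only polynomially many conditioning sets -- those assembled from $\bar R$ together with small subsets of $R$ -- are ever needed, which is what keeps the test count polynomial.

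Each iteration does two things. First, a subroutine built around $\findsrc$ identifies the next component $W$: informally, $W$ is the smallest subset of $R$ that is ancestral within $\cG[R]$ and ``closed'' under edges whose orientation cannot be decided by CI tests, i.e., covered edges of $\cG$. Operationally, one locates a candidate source and then grows $W$ by repeatedly absorbing any boundary vertex $c$ whose connecting edge is covered in $\cG$, stopping at a fixpoint; covered-ness of such an edge reduces, via $\Pa[u]=\Pa(v)$ together with $\bar R$ being ancestral, to a bounded collection of parent/separation statements, each decidable with few CI tests (conditioning on $\bar R$ plus a small subset of $R$), and correctness of the identification follows from the localization property and faithfulness. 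One then checks that the resulting $W$ satisfies the intra-component property -- $|\src(W)|=1$, and $\cG[W]$ has a covered edge whenever $|W|>1$ -- and that $R\setminus W$ remains closed under descendants. Second, we set the edges of $\cD$ entering $V_i=W$: for each earlier component $V_j$ we test whether some $u\in V_j$ is a parent of $\src(W)$ -- a parent test needing only $\cO(n)$ CI tests because $V_j$ lies inside the ancestral set $\bar R$ -- and put $j\to i$ in $\cD$ exactly when such a parent exists. We then recurse on $R\setminus W$. That $\cD$ is topologically ordered is immediate from the peeling order; the two consistency conditions hold because, the component order refining a topological order of $\cG$, any $\cG$-edge between $V_j$ ($j<i$) and $V_i$ must point from $V_j$ to $V_i$, so ``no $\cD$-edge'' forces ``no $\cG$-edge'', while ``$j\to i$ in $\cD$'' was placed precisely because some $u\in V_j$ is a parent of $\src(V_i)$.

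For the accounting: there are at most $n$ iterations; within one, identifying and growing $W$ costs $\cO(n^2)$ tests per absorbed vertex over at most $n$ absorption rounds, and fixing the incoming $\cD$-edges costs $\cO(n^2)$ tests, giving $\cO(n^4)$ per iteration and $\cO(n^5)$ overall (the stated bound absorbs the polynomial factors hidden in the subroutines). Together with the maintained invariants this yields a partition of $V$ and a component DAG satisfying both the intra- and inter-component properties of Definition~\ref{def:TCG}, i.e., a valid CCPG. The main obstacle is the component-identification subroutine: a priori, deciding adjacency, parenthood, or covered-ness of a single edge can require exponentially many conditioning sets, and the real content of the theorem is that (i) the ancestral structure of $\bar R$, combined with the fact that $\cG$ only ever has to be resolved one topological layer at a time, collapses each such decision to polynomially many CI tests, and (ii) the absorption rule can be made to fire \emph{exactly} on edges whose orientation is genuinely unverifiable, so that the descendant-closedness invariant survives and every multi-vertex component legitimately carries a covered edge. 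This is where the technical development of Sections~\ref{sec:4} and~\ref{sec:5} does the work.
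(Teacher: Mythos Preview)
Your high-level scaffold---peel off components in topological order while maintaining an ancestral set $\bar R$, and use the localization $u\CI v\mid\bar R$ iff $u,v$ share a common ancestor in $\cG[R]$---matches the paper's strategy, and that localization is essentially Lemma~\ref{lem:one-node-anc-src}. But the proposal has a genuine gap at precisely the step you label ``the main obstacle.'' You assert that deciding whether a boundary edge $u\to c$ is covered ``reduces\dots to a bounded collection of parent/separation statements, each decidable with few CI tests (conditioning on $\bar R$ plus a small subset of $R$),'' yet you never specify these tests, and the paper's explicit position (Section~\ref{sec:3-1}) is that obtaining adjacency or parenthood information of this kind may entail exponentially many CI tests. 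Concretely: even if $u$ is a source of $R$ and you know $\Pa(u)\subseteq\bar R$, checking $\Pa(c)=\Pa[u]$ still requires ruling out parents of $c$ in $R\setminus\{u\}$, and the natural test $c\CI v\mid\bar R\cup\{u\}$ also fires whenever $v\in\Des(c)$, so it does not isolate parenthood. Your final sentence defers the resolution to ``the technical development of Sections~\ref{sec:4} and~\ref{sec:5},'' but that development does \emph{not} implement your absorption rule.

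What the paper actually does is the opposite of edge-by-edge covered-ness testing. For a prefix set $S$, it defines three exclusion sets $D_S,E_S,F_S$ purely through CI-test patterns with conditioning sets of the form $S$, $S\cup\{w\}$, $S\cup\{v\}$, $S\cup\{v,w\}$, justified by the proxy v-structure and proxy Meek-rule statements (Lemmas~\ref{lemma:v-generalization} and~\ref{lem:general-meekrule-1}), which certify non-descendancy \emph{without} any adjacency knowledge. Algorithm~\ref{alg:prefix_subset} then returns $U=\bar S\setminus(D_S\cup E_S\cup F_S)$; Theorem~\ref{thm:prefix-subset} shows $\src(\bar S)\subseteq U$ and $S\cup U$ is prefix; and the covered-edge property of each resulting component is a \emph{consequence}, not a test: Lemma~\ref{lem:main-lem} proves that any non-source $w$ surviving $D_S\cup E_S\cup F_S$ must have a covered edge between its unique source ancestor and $\Anc[w]$. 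So the paper infers the existence of a covered edge from the \emph{failure} of exclusion, rather than verifying covered-ness directly. Without an argument of this kind, your growth procedure is unimplemented and the $\cO(n^5)$ bound does not follow.
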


This result extends to the interventional setting as follows; the proof is given in Section~\ref{sec:6}.
\begin{theorem}[$\cI$-Learning CCPG]\label{thm:main-2}
    Given observational data and interventional data from interventions in $\cI$, there exists an efficient algorithm that performs at most $\cO(n^5)+|\cI|\cdot\cO(n^3)$ CI tests, and outputs an $\cI$-CCPG representation.
\end{theorem}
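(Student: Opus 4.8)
The plan is to treat \Cref{thm:main-1} as a black box and add a cheap per-intervention refinement. First I would run the observational algorithm of \Cref{thm:main-1} to obtain a valid CCPG $(V_1,\dots,V_k,\cD)$ of $\cG$ with $\cO(n^5)$ CI tests. As will be extracted from the construction in \Cref{sec:5}, this run also recovers, for each component $V_i$, the subgraph $\cG[V_i]$ up to its Markov equivalence class; indeed, since the algorithm stopped refining at $V_i$, any edge of $\cG[V_i]$ that the observational essential graph orients would have already permitted a finer split, so the only edges obstructing a finer partition are the undirected ones inside components, and by \Cref{prop:1} orienting such an edge provably requires an intervention on one of its endpoints.

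Next, for each $I\in\cI$ and each component $V_i$, I would use the interventional distribution $P^I$ to orient every edge $\{u,v\}\subseteq V_i$ that is cut by $I$ (exactly one endpoint in $I$): if $v\in I$, then this edge survives in $\cG^I$ iff it is oriented $v\to u$ in $\cG$ and is deleted iff it is oriented $u\to v$, which we decide by testing $u\CI_I v\mid S$ over the $\cO(|V_i|)$ conditioning sets suggested by the (now known) skeleton of $\cG[V_i]$. This is $\cO(|V_i|^3)$ tests per component, hence $\sum_i\cO(|V_i|^3)=\cO(n^3)$ per intervention and $|\cI|\cdot\cO(n^3)$ in total; combined with the first step this matches the stated budget. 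Let $R$ collect all orientations obtained this way; they are sound by the interventional identifiability guarantee \cite{hauser2012characterization}.

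To produce the output I would, inside each component $V_i$, add the orientations of $R$ that lie in $V_i$, close under the Meek rules within $\cG[V_i]$, and refine $V_i$ along the newly oriented edges into parts each having a unique source and no oriented internal edge; the inter-component DAG is the induced quotient, which refines $\cD$. Soundness is immediate since $R$ and the Meek closure assert only correct orientations, so unique source per part and inter-component consistency of \Cref{def:TCG} are inherited from the validity of the CCPG we started from. For the \textbf{strong intra-component condition}, suppose an output part $W\subseteq V_i$ has $|W|>1$ yet every covered edge of $\cG[W]$ is intervened by $\cI$. Then $\{I\cap W : I\in\cI\}$ is a collection of interventions on $W$ whose cut edges cover all covered edges of $\cG[W]$, so by \Cref{prop:1} applied to the DAG $\cG[W]$ it is a verifying intervention set of $\cG[W]$; hence orienting those cut edges inside $W$ and closing under the Meek rules orients all of $\cG[W]$. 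But the refinement performed exactly these orientations on $W$ (and, with extra context from $V_i$, at least this many), so $\cG[W]$ retains no undirected edge and $W$ cannot have size $>1$ --- a contradiction. Thus every output part of size $>1$ keeps an unintervened covered edge.

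The step I expect to be the main obstacle is making the ``localization to a component'' rigorous: relating the global interventional distribution $P^I$, which is faithful to $\cG^I$ on all of $V$, to the orientations one may soundly and efficiently deduce inside a single component when $I$ straddles the inside and outside of $V_i$, and verifying that $\{I\cap W\}_{I\in\cI}$ interacts with $\cG[W]$ exactly as \Cref{prop:1} and the Meek-rule closure prescribe --- for which one uses that a CCPG component has a unique source and behaves as an ``ancestrally self-contained'' unit relative to the earlier components in $\cD$. The remaining bookkeeping --- that refining into parts preserves a unique source, and that the quotient DAG is topologically ordered, consistent with $\cG$, and re-merges to $\cD$ --- mirrors the corresponding arguments for \Cref{thm:main-1}.
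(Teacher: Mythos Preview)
Your plan has a genuine gap at its very first move: you assume that the observational run of \Cref{thm:main-1} hands you, for each component $V_i$, the skeleton (or even the Markov equivalence class) of $\cG[V_i]$. It does not. By \Cref{def:TCG} the CCPG output is only the partition $V_1,\dots,V_k$ together with the inter-component DAG $\cD$; nothing in the construction of \Cref{sec:5} recovers intra-component adjacencies, and your parenthetical ``as will be extracted from the construction'' is wishful. Without the skeleton of $\cG[V_i]$ you cannot (i) identify which intra-component pairs are even edges, let alone which are cut by $I$, (ii) carry out your proposed orientation test ``over the $\cO(|V_i|)$ conditioning sets suggested by the (now known) skeleton'', or (iii) run Meek rules inside $V_i$, since Meek closure needs both the skeleton and the v-structures as input. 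Your verification of the strong intra-component condition then collapses: the appeal to \Cref{prop:1} on $\cG[W]$ is fine as a statement about verifying sets, but the conclusion ``orienting those cut edges and closing under Meek rules orients all of $\cG[W]$'' presupposes exactly the essential-graph information you do not have. A secondary issue is that your refinement step (``refine $V_i$ along the newly oriented edges into parts each having a unique source'') is not an operation that automatically preserves the intra- and inter-component properties of \Cref{def:TCG}; asserting that these are ``inherited'' is not a proof.

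The paper takes a different route that sidesteps the need for any intra-component skeleton. Rather than treating \Cref{thm:main-1} as a black box and refining afterwards, it folds the interventional data \emph{into} the prefix-vertex-set subroutine of \Cref{sec:4}: at each prefix $S$, for every $I\in\cI$ it computes an additional exclusion set $J_S^I$ (built from $\Des(I\setminus S)$ and auxiliary sets $H_S^I(v)$) using $\cO(n^2)$ interventional CI tests, and shows that $J_S^I$ enjoys the same two structural properties as $D_S,E_S,F_S$ --- it is downward closed and disjoint from $\src(\bar S)$. Adding $J_S^I$ to the exclusion in Algorithm~\ref{alg:prefix_subset} therefore still yields a valid, strictly larger prefix set, but now one that is automatically finer whenever an intervention cuts a covered edge that would otherwise have blocked a split; the strong intra-component condition then follows from an interventional analogue of \Cref{lem:main-lem}. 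The $\cO(n^5)+|\cI|\cdot\cO(n^3)$ count comes from $\cO(n)$ iterations of this augmented subroutine, each costing $\cO(n^4)+|\cI|\cdot\cO(n^2)$.
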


Combining these results with the properties of CCPG in Lemma~\ref{ccpg:prop}, this provides an efficient algorithm for learning the causal graph with polynomial number of conditional independence tests under certain cases, detailed below.

\begin{corollary}[\textbf{Causal Discovery with Polynomial CI Tests}]\label{coro}
For a DAG $\cG$ and its verifying intervention set $\cI$ (can be $\varnothing$), our algorithm recovers the full causal graph with at most $\cO(n^5)+|\cI|\cdot\cO(n^3)$ CI tests.
\end{corollary}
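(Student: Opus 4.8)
The plan is to derive Corollary~\ref{coro} by directly combining the algorithmic theorems with the structural properties of CCPGs. There are two cases, corresponding to whether $\cI = \varnothing$ or not. In the case $\cI = \varnothing$, the hypothesis that $\cI = \varnothing$ is a verifying intervention set of $\cG$ means, by Proposition~\ref{prop:1}, that $\cG$ has no covered edges, i.e. $\nu(\cG) = 0$. We then invoke Theorem~\ref{thm:main-1} to obtain, with at most $\cO(n^5)$ CI tests, \emph{some} CCPG representation $(\{V_1,\dots,V_k\}, \cD)$ of $\cG$. By the second bullet of Lemma~\ref{ccpg:prop}, when $\nu(\cG) = 0$ the unique valid CCPG is $\cD = \cG$ with every component a singleton; hence the output of the algorithm must be exactly $\cG$. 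This recovers the full causal graph with $\cO(n^5)$ tests, matching the claimed bound since $|\cI| = 0$.

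For the case $\cI \neq \varnothing$, the hypothesis is that $\cI$ is a verifying intervention set of $\cG$. Here I would apply Theorem~\ref{thm:main-2} with the given observational and interventional data: this performs at most $\cO(n^5) + |\cI|\cdot\cO(n^3)$ CI tests and outputs \emph{some} $\cI$-CCPG representation of $\cG$. Now invoke the third bullet of Lemma~\ref{ccpg:prop}: since $\cI$ is a verifying intervention set, $\cD = \cG$ is the \emph{unique} valid $\cI$-CCPG of $\cG$. Therefore the algorithm's output is necessarily $\cG$ itself, so the true causal graph is recovered within the stated test budget. Note that this argument subsumes the first case (take $\cI = \varnothing$ and use that $\cI$-CCPG reduces to CCPG), so one can present it uniformly, but it may be cleaner to state the $\cI = \varnothing$ specialization explicitly since Theorem~\ref{thm:main-1} is the sharper statement there.

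The only genuine content beyond bookkeeping is making sure the uniqueness claims in Lemma~\ref{ccpg:prop} apply: the algorithm is only guaranteed to return \emph{a} valid (($\cI$-))CCPG, so the whole argument hinges on the output space being a singleton under the stated hypotheses. That is exactly what bullets two and three of Lemma~\ref{ccpg:prop} provide, so no extra work is needed — the corollary is essentially a syllogism once those lemmas are granted. I expect the main (and only) subtlety to be the translation "$\cI = \varnothing$ is a verifying intervention set $\iff \nu(\cG) = 0$," which follows immediately from Proposition~\ref{prop:1} (the empty set is a vertex cover of the covered edges exactly when there are none), so that the $\cI = \varnothing$ branch of the corollary genuinely falls under bullet two of Lemma~\ref{ccpg:prop}. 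No routine calculation is involved; the proof is a few lines citing Theorems~\ref{thm:main-1}, \ref{thm:main-2} and Lemma~\ref{ccpg:prop}.
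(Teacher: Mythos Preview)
Your proposal is correct and matches the paper's approach: the paper itself presents Corollary~\ref{coro} as an immediate consequence of combining Theorems~\ref{thm:main-1} and~\ref{thm:main-2} with the uniqueness statements in Lemma~\ref{ccpg:prop}, without giving a separate proof. Your write-up spells out exactly this syllogism, including the observation that the $\cI=\varnothing$ case reduces to $\nu(\cG)=0$ via Proposition~\ref{prop:1}.
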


We remark here that \citet{eberhardt2012number} provides a construction of verifying intervention set of size $\log_2(n)$ that is independent of the underlying DAG $\cG$. Together with Corollary~\ref{coro}, this implies that our algorithm can learn the full causal graph with at most $\cO(n^5)$ CI tests.

To the best of our knowledge, our results present the first formal characterization of the information recoverable about general causal graphs using polynomial number of CI tests. 
Prior works showed that it is possible to learn \emph{sparse} causal graphs with $n^{\cO(k)}$ CI tests \cite{claassen2013learning, spirtes2000causation}. Here $k$ is an upper bound on the vertex degrees. Note that, the number of CI tests our algorithm requires, $\cO(n^5)$, is a polynomial of $n$ that is independent of any graph parameters. 

%In light of this discussion, 

%\label{sec:3-1}
\subsection{Proxy V-structure and Meek Rule Statements}\label{sec:3-1}

In our derivations, we will make use of the following results, which we believe is of separate interest well beyond the scope of this work.

\begin{restatable}[Proxy V-Structure]{lemma}{proxyv}\label{lemma:v-generalization}
   Let $S\subseteq V$ and $u,v,z \in V \backslash S$. If $u \CI v|S$ and $u \not \CI v|S \cup \{z \}$, then $u,v \not \in \Des[z]$.\footnote{Similar argument is also proven in Lemma 1 of \citet{magliacane2016ancestral}.}
\end{restatable}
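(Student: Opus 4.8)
The plan is to pass to the graphical picture via faithfulness and then do a path-surgery argument. By faithfulness, $u \CI v \mid S$ says that $S$ d-separates $u$ and $v$ in $\cG$, and $u \not\CI v \mid S\cup\{z\}$ says that $S\cup\{z\}$ does \emph{not} d-separate them, so fix a path $P$ between $u$ and $v$ that is active given $S\cup\{z\}$ but inactive given $S$. (We may assume $u,v,z$ are distinct; otherwise the two hypotheses are contradictory given the convention on overlapping conditioning sets.) Since every non-collider of $P$ lies outside $S\cup\{z\}\supseteq S$, the only way $S$ can block $P$ is at a collider, so there is a collider $d$ on $P$ with $d\notin\Anc[S]$; and since $P$ is active given $S\cup\{z\}$ we have $d\in\Anc[S\cup\{z\}]=\Anc[S]\cup\Anc[z]$, hence $d\in\Anc[z]$. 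I will also use the clean consequence $\Des[d]\cap S=\varnothing$: indeed $d\notin S$, and a proper descendant of $d$ lying in $S$ would force $d\in\Anc(S)$.

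By symmetry it suffices to prove $u\notin\Des[z]$, so suppose toward a contradiction that $u\in\Des[z]$, and choose $d$ to be the collider of $P$ with $d\notin\Anc[S]$ that is \emph{closest to $v$} along $P$, so that every collider of $P$ strictly between $d$ and $v$ lies in $\Anc[S]$. Since $d\in\Anc[z]$ and $u\in\Des[z]$ with $u\neq z$, concatenating a directed path $d\rightsquigarrow z$ with a directed path $z\rightsquigarrow u$ and taking a shortest subwalk gives a simple directed path $T$ from $d$ to $u$; all its vertices lie in $\Des[d]$, hence outside $S$. Now form the walk $C$ obtained by traversing $T$ in reverse (from $u$ to $d$) and then following the subpath $P[d,v]$ (from $d$ to $v$); recall that the first edge of $P[d,v]$ points into $d$ because $d$ is a collider of $P$, so $d$ is a non-collider on $C$. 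Deleting loops by cutting at the vertex shared by $T$ and $P[d,v]$ that is closest to $v$ along $P[d,v]$ (this vertex always exists since $d$ itself is shared) produces a simple path $C'$ from $u$ to $v$.

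The remaining task is to verify that $C'$ is active given $S$, which contradicts $u\CI v\mid S$ by faithfulness. On the reversed directed portion coming from $T$, every internal vertex is a non-collider lying in $\Des[d]$, hence not in $S$; the splice vertex is incident to an out-edge of $T$, so it too is a non-collider of $C'$, and it lies in $\Des[d]$ and thus outside $S$; on the portion coming from $P[d,v]$, each internal vertex has the same two neighbors as on $P$, so it inherits its collider status from $P$, and non-colliders lie outside $S\cup\{z\}\supseteq S$ while colliders lie strictly between $d$ and $v$ on $P$ and hence in $\Anc[S]$ by the choice of $d$. The main obstacle is precisely this bookkeeping — turning the concatenation $C$ into a genuine simple path and checking that choosing $d$ closest to $v$ (among colliders outside $\Anc[S]$) guarantees no new collider on the $P$-portion needs to be ``justified'' by $z$; once that is in place the contradiction is immediate, and the symmetric argument gives $v\notin\Des[z]$ as well.
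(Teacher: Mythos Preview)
Your proof is correct and follows essentially the same strategy as the paper's: both pick the collider on $P$ outside $\Anc[S]$ that is closest to $v$, construct a directed path from it to $u$ through $z$, and splice this with the $d$--$v$ segment of $P$ to produce a $u$--$v$ path active given $S$, contradicting $u\CI v\mid S$. Your write-up is in fact more careful than the paper's about the distinctness of $u,v,z$ and about loop-deletion when the directed path and the $P$-segment overlap; the paper simply concatenates and leaves the simplicity check implicit.
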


\begin{definition}[Prefix Vertex Set]
We call $S \subseteq V$ a prefix vertex set if it satisfies: for all $w \in S$, $\Anc[w] \cap \Bar{S} =\varnothing$ (vertices in $S$ appear first in the topological order).
\end{definition}

\begin{restatable}[Proxy Meek Rule 1]{lemma}{proxym}\label{lem:general-meekrule-1}
    Let $S$ be a prefix subset. If $u,v$ and $w$ are such that $u \in S$, $v, w \not \in S$ and $u \not \CI v |S$ and $u \CI w|S\ \cup \{ v\}$ then $v \not \in \Des[w]$.
\end{restatable}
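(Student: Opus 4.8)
The plan is to argue by contradiction. Suppose $v\in\Des[w]$; I will build a path from $u$ to $w$ that is active given $S\cup\{v\}$, which by faithfulness contradicts $u\CI w\mid S\cup\{v\}$. If $v=w$, then (using the overlapping‑set convention for the conditioning set) $u\CI w\mid S\cup\{v\}$ is the same statement as $u\CI v\mid S$, immediately contradicting $u\not\CI v\mid S$; so assume $v\in\Des(w)$ and fix a directed path $\rho\colon w=z_0\to z_1\to\cdots\to z_\ell=v$ with $\ell\ge1$. Two observations use only that $S$ is a prefix set and $w\notin S$: first, $\Anc[S]=S$, and a vertex in $S\cap\Des[w]$ would force $w\in\Anc[x]\subseteq S$, so $S\cap\Des[w]=\varnothing$; in particular no $z_i$ lies in $S$. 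Second, $u\notin\Des[w]$ (else $w\in\Anc[u]\subseteq S$), so $u$ is not on $\rho$.

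Using $u\not\CI v\mid S$ and the Markov property, fix a path $\pi\colon u=a_0,a_1,\ldots,a_t=v$ that is active given $S$; note $t\ge1$. I claim its last edge $\{a_{t-1},a_t\}$ is oriented into $v$. If instead it is $v\to a_{t-1}$, walk backwards along $\pi$ from $v$: inductively, each internal vertex $a_i$ already has the edge from $a_{i+1}$ pointing into it, so $a_i$ is either a collider of $\pi$ or is forced to orient its other edge as $a_i\to a_{i-1}$. But $a_i$ cannot be a collider: a collider of an active path lies in $\Anc[S]=S$, while the directed segment $v\to a_{t-1}\to\cdots\to a_i$ puts $a_i\in\Des(v)\subseteq\Des(w)$, and $S\cap\Des[w]=\varnothing$. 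Hence the backward walk forces $\pi$ to be the directed path $v\to a_{t-1}\to\cdots\to a_0=u$, so $u\in\Des(w)$, contradicting the second observation. Thus $\pi$ ends with an edge into $v$.

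Now splice $\pi$ and $\rho$. Let $a_m$ be the first vertex of $\pi$ that also lies on $\rho$, say $a_m=z_p$ (it exists since $a_t=v=z_\ell$, and $m\ge1$ by the second observation), and set $W^\ast=(a_0,\ldots,a_m=z_p,z_{p-1},\ldots,z_0=w)$. By minimality of $m$, $W^\ast$ is a simple path from $u$ to $w$. It is active given $S\cup\{v\}$: the internal $\pi$‑vertices $a_1,\ldots,a_{m-1}$ retain their collider/non‑collider status, colliders among them remain in $\Anc[S]\subseteq\Anc[S\cup\{v\}]$ and non‑colliders remain outside $S$ and differ from $v$; the internal $\rho$‑vertices $z_{p-1},\ldots,z_1$ are non‑colliders lying outside $S$ (first observation) and differ from $v$; and at the junction $a_m=z_p$, if the incident $\pi$‑edge points into $a_m$ then $a_m$ is a collider of $W^\ast$ lying in $\Anc[v]$ (via the tail $z_p\to\cdots\to z_\ell=v$ of $\rho$), while if it points out of $a_m$ then $a_m$ is a non‑collider with $a_m=z_p\notin S$ and $a_m\neq v$ — the latter because $a_m=v$ forces $m=t$, whence the $\pi$‑edge at $a_m$ points into $v$ by the claim just proved. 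In every case no internal vertex of $W^\ast$ blocks it, so $u\not\CI w\mid S\cup\{v\}$, a contradiction, and therefore $v\notin\Des[w]$.

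The step I expect to be the crux is the claim that $\pi$ may be taken to enter $v$: because $v$ lies in the conditioning set, any active path we route through $v$ must see $v$ as a collider, and this is exactly what makes the splice with $\rho$ go through. The prefix hypothesis enters throughout via the clean consequences $\Anc[S]=S$ and $S\cap\Des[w]=\varnothing$. The remaining work is bookkeeping the degenerate configurations $v=w$, $t=1$, $m=t$ (so $a_m=v$), $p=0$ (so $a_m=w$ and $W^\ast$ is merely a prefix of $\pi$), and $\ell=1$. This argument is self‑contained; \Cref{lemma:v-generalization} is the companion statement with the roles of creating versus destroying a dependence swapped, and is not needed here.
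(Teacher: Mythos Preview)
Your proof is correct and follows the same strategy as the paper's: argue by contradiction, show that the active $u$–$v$ path given $S$ must end with an edge into $v$, and then splice it with the directed $w$-to-$v$ path to obtain an active $u$–$w$ path given $S\cup\{v\}$. Your version is in fact a bit more careful—you splice at the first intersection point $a_m=z_p$ to keep $W^\ast$ simple, whereas the paper simply concatenates (implicitly relying on the walk formulation of d-separation)—and your backward-induction argument for the ``edge into $v$'' claim, using $S\cap\Des[w]=\varnothing$, differs only in tactic from the paper's, which instead tracks a crossing edge $u'\to v'$ from $S$ to $\bar S$ and analyzes the first collider after it.
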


The results stated above serve as proxy statements of v-structure and Meek Rule 1. Given the CI tests in the preceding lemmas, if we additionally have confirmed adjacencies between specific pairs of vertices, stronger statements could be made; e.g.,in Lemma~\ref{lemma:v-generalization}, we could conclude that $z$ is a child (or descendant) of both $u$ and $v$, and in Lemma~\ref{lem:general-meekrule-1} that $w$ is a child of $v$. However, since our lemma statements do not assume any knowledge of adjacency, we can only ascertain weaker statements, in the first case that $u$ and $v$ are not descendants of $z$, and in the second case that $v$ is not a descendant of $w$.

While our proxy results reveal weaker relationships among variables, they achieve this using a constant number of CI tests. Uncovering stronger relationships requires adjacency information, which may entail an exponential number of CI tests. Our main contribution lies in leveraging these weaker relationships, along with other favorable properties embedded in our algorithm, to implement the prefix vertex procedure and reconstruct the CCPG representation of the underlying causal graph using few CI tests.

% We remark that if $u \ci v|S$ and $u \not \ci v|S \cup \{ z\}$, then it may not hold that $z \in \Des[u] \cup \Des[v]$. Such a statement may not hold even when $S$ is a prefix subset. For instance, consider the graph: $1 \to 2$, $1 \to 3$, $2\to 4$, $3 \to 4$, $5 \to 3$, $5 \to 6$, $3 \to 7$, $6 \to 7$. \kk{add image instead} Note that $2 \ci 6| \varnothing$ and $2 \not \ci 6|3$. However $3$ is not a descendant of both $2$ or $6$. Furthermore, $\varnothing$ is a prefix subset. \kk{add something similar for meek rule 1}
% \kk{Either add these tightness examples as part of lemma statements or move it to later sections.}

\section{Prefix Vertex Set}\label{sec:4}

The core component of our CCPG algorithm involves a procedure that produces a series of prefix vertex sets. 
% In Section~\ref{sec:3-1}, we showed how prefix vertex sets can be used to obtain information about the underlying $\cG$ via a generalization of Meek Rule 1. 
We now show how such prefix vertex sets can be learned by performing few CI tests.

This will be helpful to obtain a CCPG representation of $\cG$, since the components $V_1,\dots,V_k$ satisfy that $V_1\cup\dots\cup V_i$ is a prefix vertex set for all $i\in [k]$.

\subsection{Algorithm for Learning}\label{sec:4-1}

We begin by presenting our algorithm for learning a prefix vertex set. This algorithm takes as input a prefix vertex set $S\subsetneq V$ (which can be $\varnothing$) and produces a larger prefix vertex set $S'$.

The analysis of Algorithm~\ref{alg:prefix_subset} will be provided in the next section. For an input prefix vertex set $S\subsetneq V$, it makes use of three types of CI tests, which we formalize below.\footnote{We note that $u,v,v'$ and $w$ in the definitions below are mutually distinct. We omit writing this for simplicity.}

\begin{definition}[Type-I Set $D_S$]
For all $w\in \bar{S}$, let $w\in D_S$ if and only if $u\CI v\mid S~\text{and}~u\not\CI v\mid S\cup\{w\}$ for some $u\in V$ and $v\in\bar{S}$.
\end{definition}

By the proxy v-structure in Lemma~\ref{lemma:v-generalization}, these two CI tests indicate that $v\not\in\Des[w]$. Therefore $w$ can potentially be a descendant of $v$. Thus $D_S$ contains vertices that are potential descendants of some other vertex in $\bar{S}$. We will rule out this set when searching for prefix $S'\supsetneq S$. Similarly, we can define a type-II set $E_S$.

\begin{definition}[Type-II Set $E_S$]
For all $w\in \bar{S}\setminus D_S$, let $w\in E_S$ if and only if $u\CI v'\mid S\cup\{v\}$ and $u\not\CI v'\mid S\cup\{v,w\}$ for some $u\in S$ and $v,v'\in\bar{S}\setminus D_S$.
\end{definition}

We also exclude any type-III set $F_S$, defined as follows.

\begin{definition}[Type-III Set $F_S$]
For all $w\in \bar{S}\setminus D_S$, let $w\in F_S$ if and only if $u\not\CI v\mid S$, $u\CI w\mid S\cup\{v\}$, and $v\not\CI w\mid S$ for some $u\in S$ and $v\in\bar{S}\setminus D_S$.
\end{definition}

%\textcolor{red}{Could you write out this paragraph in a clearer way. I am lost by which CI test you are talking about.}\jj{Is this better?} 
By the proxy Meek Rule 1 in Lemma~\ref{lem:general-meekrule-1}, the first two CI tests $u\not\CI v\mid S$ and $u\CI w\mid S\cup\{v\}$ guarantee that $v\not\in\Des[w]$. The remaining CI test $v\not\CI w\mid S$ is to ensure that we do not exclude too many vertices in $F_S$, in particular $\src(\bar{S})$, as we show in the next section.

\begin{algorithm}[tbh!]
\begin{algorithmic}[1]
\caption{Learning a Prefix Vertex Set}
\label{alg:prefix_subset}
    \STATE \textbf{Input:} A prefix vertex set $S\subsetneq V$. CI queries from $\cG$. 
    \STATE \textbf{Output:} A prefix vertex set $S'$ such that $S'\supsetneq S$.
    \STATE Compute type-I set $D_S$.
    \STATE Compute type-II and III sets $E_S,F_S$.    
    \STATE Let $U = \bar{S}\setminus (D_S\cup E_S\cup F_S)$.
    \STATE \textbf{return} $S'=S\cup U$.
\end{algorithmic}
\end{algorithm}

\subsection{Correctness and Guarantees}\label{sec:4-2}

Algorithm~\ref{alg:prefix_subset} satisfies the following guarantees. All omitted proofs can be found in Appendix~\ref{app:b}.

\begin{theorem}\label{thm:prefix-subset}
Algorithm~\ref{alg:prefix_subset} outputs a prefix vertex set in $\cO(n^4)$ number of CI tests. In addition, this prefix vertex set contains all the remaining source nodes, i.e., $\src(\bar{S})\subset S'$.
\end{theorem}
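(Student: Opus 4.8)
The plan is to establish two things: (i) the output $S' = S \cup U$ is a prefix vertex set that strictly contains $S$, and (ii) every remaining source node lies in $S'$, with the total cost being $\cO(n^4)$ CI tests. The bound on CI tests is the easy part: computing $D_S$ ranges over the choice of $w \in \bar S$, $u \in V$, $v \in \bar S$, giving $\cO(n^3)$ triples each needing $\cO(1)$ tests; $E_S$ and $F_S$ similarly range over $\cO(n^4)$ and $\cO(n^3)$ tuples respectively, so the dominant term is $\cO(n^4)$ (one expects $E_S$ to be the bottleneck, since it quantifies over $w, u, v, v'$). Strict containment $S' \supsetneq S$ will follow from (ii), provided $\bar S \neq \varnothing$: since $\src(\bar S)$ is always nonempty for a nonempty $\bar S$, once we show $\src(\bar S) \subseteq S'$ we get $U \neq \varnothing$.

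For the prefix property, I would argue by contradiction: suppose $S'$ is not a prefix vertex set, so some $w \in U$ has an ancestor $x \in \bar{S'} \subseteq \bar S \setminus U$, and WLOG take $w$ to be $\pi$-minimal among such bad vertices so that we may additionally assume $x$ is an immediate parent of $w$ with $x$ itself having no ancestor outside $S'$ — actually more carefully, I'd pick an edge $x \to w$ with $x \in \bar{S'}$ and $w \in U$ and $x$ minimal. Then $x \in D_S \cup E_S \cup F_S$. The strategy is to show that in each of these three cases, the membership certificate for $x$ (a pair/triple of CI tests witnessing $x$'s membership) can be "pushed" to also witness that $w$ belongs to the same set, contradicting $w \in U$. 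For $D_S$: if $x \in D_S$ via $u \CI v \mid S$, $u \not\CI v \mid S \cup \{x\}$, then since $x \to w$ and (by minimality of $x$) $x \notin \Des[w]$... hmm, the right move is the other direction — I'd want to conclude $w$ activates the same path, using that conditioning on a descendant $x$ of a collider also conditioning on $w$... Let me reconsider: the cleanest route is to use Lemma~\ref{lemma:v-generalization} contrapositively together with the fact that $w$ being a descendant of $x$ (no: $x$ is an ancestor of $w$) — so $w \in \Des[x]$, hence $\Des[w] \subseteq \Des[x]$ and $\Anc[x] \subseteq \Anc[w]$. I would show that whatever CI fact put $x$ into the forbidden set, the corresponding CI fact with $x$ replaced by $w$ also holds (moving the conditioning vertex downstream preserves the relevant collider-activation or path-blocking behavior because $w$ is a descendant of $x$), so $w$ would have been excluded too.

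The main obstacle — and the part requiring genuine care — is the source-preservation claim $\src(\bar S) \subseteq S'$, i.e., showing that no source node of $\bar S$ lands in $D_S$, $E_S$, or $F_S$. Let $s \in \src(\bar S)$; since $S$ is a prefix set, $\Anc(s) \subseteq S$, so $s$ has no ancestor in $\bar S$. I must rule out all three membership certificates for $s$. For $D_S$: if $u \CI v \mid S$ but $u \not\CI v \mid S \cup \{s\}$, Lemma~\ref{lemma:v-generalization} gives $u, v \notin \Des[s]$; I need to derive a contradiction — presumably by showing the newly-active path through the collider whose descendant is $s$ must exit into $\bar S \setminus S$ and, combined with $\Anc(s) \subseteq S$, force an ancestor of $s$ outside $S$. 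For $E_S$ and $F_S$ the arguments are analogous but use the proxy Meek rule (Lemma~\ref{lem:general-meekrule-1}) and the extra third CI test in the definition of $F_S$, whose entire purpose (as the text flags) is precisely to prevent excluding $\src(\bar S)$ — so I'd expect the $F_S$ case to hinge on showing that $v \not\CI w \mid S$ fails when $w = s$, or that the conjunction of all three tests is contradictory for a source node. I would set up each case by carefully tracing the d-connecting/d-separating paths relative to the prefix set $S$, repeatedly using that a source of $\bar S$ has all its ancestors inside $S$, and that conditioning sets here are of the form $S$, $S \cup \{v\}$, or $S \cup \{v, w\}$ with the extra vertices in $\bar S$.
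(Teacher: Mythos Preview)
Your approach is essentially the paper's: show that each of $D_S$, $E_S$, $F_S$ is (a) disjoint from $\src(\bar S)$ and (b) closed downward under descent, which together yield both conclusions; the paper packages exactly these facts into Lemmas~\ref{lem:DS-ES-property} and~\ref{lem:FS-property} and then the theorem is a two-line invocation. Your contradiction framing for the prefix property (pick $w\in U$ with an ancestor $x\in\bar{S'}$ and argue $w$ should have been excluded) is just the contrapositive of closure-under-descent; the minimality maneuver is a distraction and not needed.

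One place your sketch is too optimistic: the heuristic ``push the CI certificate for $x$ down to its descendant $w$'' works for $D_S$ and $E_S$, because there the vertex in question sits in the \emph{conditioning set}, and moving it downstream can only help activate colliders. In the definition of $F_S$, however, the vertex $w$ appears as an \emph{endpoint} of the tests $u\CI w\mid S\cup\{v\}$ and $v\not\CI w\mid S$, not as a conditioning variable. Replacing $w$ by a descendant $y$ can destroy the first relation, i.e., one may have $u\not\CI y\mid S\cup\{v\}$, in which case $y\notin F_S$ with the same witnesses. The paper handles this bifurcation by showing that when this happens, $y$ lands in $E_S$ instead: take the active $u$--$y$ path given $S\cup\{v\}$, extend it by the directed $w\to\cdots\to y$ path; the extension forces $y$ to be a collider (otherwise one contradicts $u\CI w\mid S\cup\{v\}$), which produces $u\not\CI w\mid S\cup\{v,y\}$ alongside $u\CI w\mid S\cup\{v\}$ --- an $E_S$ certificate for $y$. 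So the closure you actually need is $\Des[F_S]\subseteq E_S\cup F_S$, not $\Des[F_S]\subseteq F_S$, and the mechanism is not simply ``same certificate, vertex moved down.'' The rest of your outline (CI-test counts, the role of the third test in $F_S$ for protecting $\src(\bar S)$) matches the paper.
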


For its proof, we will make use of the following properties of the type-I, II, and III sets (illustrated in Figure~\ref{fig:property-D-E-F}).

\begin{figure}[t]
% \vspace*{-20pt}
    \centering
    \includegraphics[width=0.28\textwidth]{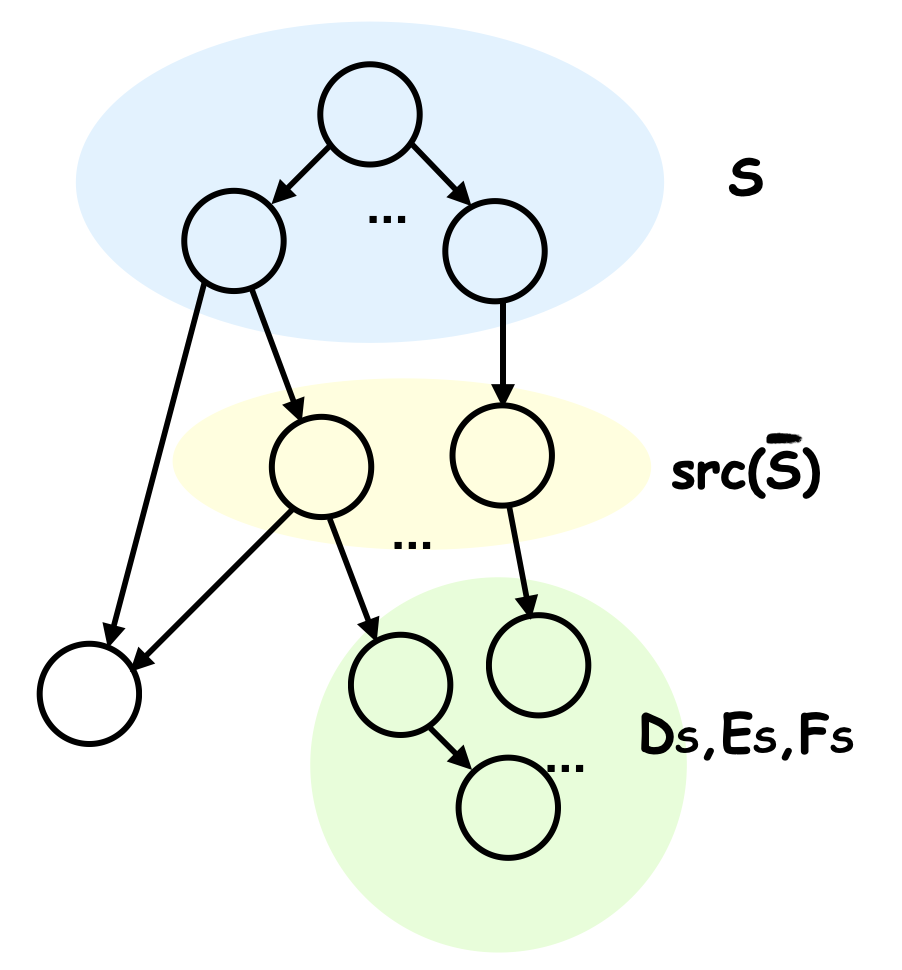}
    % \vspace*{-18pt}
    \caption{$D_S,E_S,F_S$ satisfy that (1) they contain all downstream vertices of any vertex in them; (2) they do not intersect with $\src(\bar{S})$.}\label{fig:property-D-E-F}
\end{figure}

\begin{restatable}{lemma}{lemDSESproperty}\label{lem:DS-ES-property}
    Let $S$ be a prefix vertex set. If $w\in D_S$, then $\Des[w]\subset D_S$. Furthermore, $D_S\cap \src(\bar{S})=\varnothing$. The same properties hold for $E_S$.
\end{restatable}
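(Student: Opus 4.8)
The plan is to prove the two claimed properties of $D_S$ separately, then observe that the argument for $E_S$ is a structurally identical variant once we unwind the relevant CI characterizations via the proxy lemmas.

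\textbf{Downstream-closure of $D_S$.} Suppose $w \in D_S$, witnessed by $u \in V$, $v \in \bar S$ with $u \CI v \mid S$ and $u \not\CI v \mid S \cup \{w\}$. By the Proxy V-Structure lemma (Lemma~\ref{lemma:v-generalization}), this already gives $u, v \notin \Des[w]$. Now take any $x \in \Des[w]$, $x \neq w$; I want to show $x \in D_S$, i.e., that $x$ also "re-activates" some CI the same way. The idea is that conditioning on a descendant of $w$ also opens the collider path responsible for the dependence. Concretely, since $u \not\CI v \mid S\cup\{w\}$, there is an active path $\gamma$ from $u$ to $v$ given $S \cup \{w\}$ that is \emph{inactive} given $S$ — so $\gamma$ must pass through $w$ as a collider whose descendant set meets $S \cup \{w\}$ only via $w$ itself (otherwise $\gamma$ would already be active given $S$). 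Since $x \in \Des(w)$, the vertex $w$ is still a collider with a descendant in $S \cup \{x\}$, so $\gamma$ is active given $S \cup \{x\}$ as well; and $\gamma$ remains inactive given $S$ (conditioning set is the same as before on the non-$x$ side, and we must double-check no \emph{non-collider} of $\gamma$ lies in $\Des[w]$ — but any such non-collider being a descendant of $w$ would, combined with the edge into $w$, create a directed cycle unless it equals $w$; and $w$ is a collider, not a non-collider, on $\gamma$). Hence $u \CI v \mid S$ and $u \not\CI v \mid S \cup \{x\}$, so $x \in D_S$. I should be careful to also handle $x = w$ (trivial) and to confirm $x \in \bar S$: since $S$ is a prefix set and $x \in \Des[w]$ with $w \in \bar S$, we have $x \in \bar S$.

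\textbf{Disjointness from $\src(\bar S)$.} Suppose for contradiction $w \in D_S \cap \src(\bar S)$. Being in $D_S$ gives, via Lemma~\ref{lemma:v-generalization}, some $v \in \bar S$ with $v \notin \Des[w]$ such that the reactivation occurs. But the downstream-closure just proved says $\Des[w] \subseteq D_S$; more usefully, the active-path analysis above shows the collider $w$ lies on an active path given $S \cup \{w\}$, meaning $w$ has both an incoming edge on that path and the path is inactive given $S$. I would argue instead directly: $w \in \src(\bar S)$ means $\Anc(w) \cap \bar S = \varnothing$, i.e., $\Pa(w) \subseteq S$. On the path $\gamma$ active given $S\cup\{w\}$ but inactive given $S$, the vertex $w$ is a collider, so it has two parents on $\gamma$; at least one of them, call it $p$, is not in $S$ — because if both were in $S$ then $\gamma$'s segment entering $w$ would already... — hmm, actually both parents of $w$ being in $S$ doesn't block the collider. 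Let me re-route: the cleaner contradiction is that $\gamma$ being inactive given $S$ forces \emph{every} collider on $\gamma$ (other than $w$) to have no descendant in $S$, and every non-collider to avoid $S$; when we switch to $S \cup \{w\}$ only $w$'s collider-status changes, so $\gamma$'s two neighbors on $\gamma$ adjacent to $w$ are its parents, and tracing $\gamma$ from $w$ toward $u$ (say), the first vertex is a parent $p \in \Pa(w) \subseteq S$; but then $p \in S$ is a non-collider on $\gamma$ (it has the edge $p \to w$ outgoing plus one more edge), making $\gamma$ inactive given $S \cup \{w\}$ — contradiction. So $w \notin D_S$.

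\textbf{The $E_S$ case.} For $E_S$, the witnessing tests are $u \CI v' \mid S \cup \{v\}$ and $u \not\CI v' \mid S \cup \{v, w\}$ with $u \in S$, $v, v' \in \bar S \setminus D_S$. Apply Lemma~\ref{lemma:v-generalization} with the base conditioning set $S \cup \{v\}$ in place of $S$: it yields $u, v' \notin \Des[w]$ (and $u \in S$ is automatic). The very same active-path argument — now with conditioning sets $S \cup \{v\}$ and $S \cup \{v, x\}$ for $x \in \Des[w]$ — shows $x \in E_S$ (the extra fixed vertex $v$ plays no role in the collider-at-$w$ analysis), using that $\Des[w] \subseteq \bar S \setminus D_S$: downstream vertices stay in $\bar S$ since $S$ is prefix, and stay out of $D_S$ by... wait, that's backwards — I need $\Des[w] \cap D_S = \varnothing$ to keep $x$ a \emph{legal} witness slot, but $w \notin D_S$ and $D_S$ is downstream-closed, so if some $x \in \Des[w]$ were in $D_S$ then $\Des[x] \subseteq D_S$; that doesn't immediately force $w \in D_S$. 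The fix: $E_S$'s own definition ranges $w$ over $\bar S \setminus D_S$, so I only need to show each $x \in \Des[w]$ lies in $\bar S \setminus D_S$, and indeed $x \notin D_S$ because $x \in \Des[w]$ and if $x \in D_S$ then by closure $w$'s... no. Let me just note the correct dependency: we established $\src(\bar S) \cap D_S = \varnothing$ and downstream-closure of $D_S$; since $w \notin D_S$, no \emph{ancestor} of a non-$D_S$ vertex need be outside $D_S$, so I should prove $\Des[w] \cap D_S = \varnothing$ as a separate small step: if $x \in \Des[w] \cap D_S$, closure gives $\Des[x] \subseteq D_S$, but that's consistent — so this actually requires the contrapositive of closure applied upward, which is false in general. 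The genuine fix is to prove the $E_S$ statements \emph{without} routing through membership constraints: just show $x \in \bar S$ (prefix) and that $x$ re-activates the relevant CI, and separately that $\src(\bar S) \cap E_S = \varnothing$ by the identical parent-in-$S$ contradiction (with base set $S \cup \{v\}$, noting $w$'s parents still lie in $S \subseteq S \cup \{v\}$ when $w \in \src(\bar S)$); the fact that formally $E_S$ only collects $w \notin D_S$ is harmless since $\Des[w] \subseteq \bar S$ and we separately already know such $x$ satisfy the reactivation property, so they'd be placed in $D_S$ or $E_S$ — and downstream-closure is a statement about the union-free set as literally defined, which I will state carefully.

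\textbf{Main obstacle.} The crux is the d-separation path-surgery argument: showing that if conditioning on $w$ re-activates a path, then conditioning on any descendant $x$ of $w$ does too (and conversely $w$'s being a source precludes re-activation). This requires careful bookkeeping of collider versus non-collider status along the active path and ruling out that introducing $x$ accidentally blocks the path at a non-collider — which cannot happen since the only structural change is at the collider $w$, and $\Des[x] \subseteq \Des[w]$ keeps $w$'s collider condition satisfied. I expect this to be where most of the care goes; everything else is routine given the proxy lemmas.
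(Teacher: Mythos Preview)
Your downstream-closure argument for $D_S$ has two genuine gaps. First, you assert that the active path $\gamma$ ``must pass through $w$ as a collider.'' This is not guaranteed: $\gamma$ being active given $S\cup\{w\}$ but inactive given $S$ only forces some collider $c$ on $\gamma$ to lie in $\Anc[w]\setminus\Anc[S]$; the vertex $w$ itself need not appear on $\gamma$ at all. Second, your justification that no non-collider of $\gamma$ lies in $\Des[w]$ does not work. You say such a non-collider ``combined with the edge into $w$'' would create a cycle, but that reasoning applies only when the non-collider is \emph{adjacent} to $w$ on $\gamma$ (so that $z\to w$ while $z\in\Des(w)$). A descendant $x$ of $w$ can sit elsewhere on $\gamma$ as a non-collider without any cycle arising from the edges of $\gamma$; your argument does not exclude this.

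The paper handles exactly this case by abandoning $\gamma$ when $x$ is a non-collider on it: it takes the leftmost and rightmost colliders $k,k'$ on $\gamma$ lying in $\Anc[w]\setminus\Anc[S]$ and splices in the directed paths $k\to\cdots\to x$ and $k'\to\cdots\to x$ to build a new path $Q$ on which $x$ is a collider, then checks $Q$ is active given $S\cup\{x\}$. (One can alternatively argue directly that $x$ cannot be a non-collider on $\gamma$: follow an outgoing edge from $x$ along $\gamma$ until reaching a collider $c\in\Des(x)\subset\Des(w)$ --- the endpoints $u,v\notin\Des[w]$ by the proxy v-structure lemma, so a collider is reached --- and then $c\notin\Anc[S\cup\{w\}]$ since $S$ is prefix and $c\neq w$, contradicting activity. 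But this is not your cycle argument.) Your $E_S$ treatment inherits the same flaw, and your worry about whether $\Des[w]\subset\bar S\setminus D_S$ is a separate loose end; the paper bypasses that by a different contradiction for $E_S$: assuming $u\CI v'\mid S\cup\{v,y\}$, it traces $y$'s outgoing edge on the path to a collider, forces $v\in\Des(w)$, and then $\Anc[w]\setminus\Anc[S\cup\{v\}]=\varnothing$ contradicts the reactivation. Your disjointness-from-$\src(\bar S)$ argument is essentially right, but you should justify why $w$ itself is the collider on $\gamma$ there --- it follows because $w\in\src(\bar S)$ gives $\Anc[w]\setminus S=\{w\}$.
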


\begin{restatable}{lemma}{lemFSproperty}\label{lem:FS-property}
    Let $S$ be a prefix vertex set. If $w\in F_S$, then $\Des[w]\subset E_S\cup F_S$. Furthermore, $F_S\cap \src(\bar{S})=\varnothing$.
\end{restatable}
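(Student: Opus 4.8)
I would prove the two assertions separately; both reduce to turning the CI patterns in the definitions of $E_S$ and $F_S$ into d-separation arguments, using the proxy lemmas of \Cref{sec:3-1} to keep track of descendant relations.

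Consider first $F_S\cap\src(\bar S)=\varnothing$. Suppose $w\in F_S$, witnessed by $u\in S$ and $v\in\bar S\setminus D_S$ with $u\not\CI v\mid S$, $u\CI w\mid S\cup\{v\}$, and $v\not\CI w\mid S$. Applying \Cref{lem:general-meekrule-1} to the first two relations yields $v\notin\Des[w]$. If moreover $w\in\src(\bar S)$, then $\Pa(w)\subseteq S$, and since $S$ is a prefix set $\Anc[S]=S$, so in fact $\Anc(w)\subseteq S$. Pick a $v$--$w$ path $P$ active given $S$, which exists because $v\not\CI w\mid S$. The edge of $P$ at $w$ cannot point into $w$: its tail would be a parent of $w$, hence an element of $S$ serving as a non-collider in the conditioning set. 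So $P$ leaves $w$ along an out-edge, and every subsequent vertex of $P$ is a descendant of $w$, hence lies in $\bar S=V\setminus\Anc[S]$ and cannot be an active collider on an $S$-active path. Therefore $P$ is a directed path $w\to\cdots\to v$, giving $v\in\Des(w)$ and contradicting $v\notin\Des[w]$. Hence $w\notin\src(\bar S)$.

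For the first assertion, fix $w\in F_S$ with witnesses $u,v$ as above (so again $v\notin\Des[w]$), take $x\in\Des(w)$ with $x\neq w$, and fix a directed path $w=r_0\to r_1\to\cdots\to r_m=x$; note each $r_i$ lies in $\Des[w]\subseteq\bar S$ and $r_i\neq v$. I would split on whether $u\CI x\mid S\cup\{v\}$. If $u\CI x\mid S\cup\{v\}$, I claim $x$ meets the $F_S$-conditions with the same $u,v$: the relation $u\not\CI v\mid S$ is unchanged, $u\CI x\mid S\cup\{v\}$ is the case hypothesis, and $v\not\CI x\mid S$ follows by taking a $v$--$w$ path active given $S$, truncating it at the first vertex it shares with $\{r_0,\dots,r_m\}$, and splicing on the remaining directed segment to $x$ --- the spliced-in vertices are descendants of $w$, so they avoid $S$, differ from $v$, and appear as non-colliders, so the path stays active. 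If instead $u\not\CI x\mid S\cup\{v\}$, I claim $x$ meets the $E_S$-conditions, certified by $u$, $v$, and $w$ as the three distinguished vertices in that definition (legitimate since $v,w\in\bar S\setminus D_S$): the relation $u\CI w\mid S\cup\{v\}$ is exactly our witness for $w\in F_S$, while for $u\not\CI w\mid S\cup\{v,x\}$ one starts from a $u$--$x$ path $Q$ active given $S\cup\{v\}$, notes that $Q$ must reach $x$ through an in-edge --- otherwise splicing on the reversed directed segment $x\leftarrow r_{m-1}\leftarrow\cdots\leftarrow w$ gives a $u$--$w$ path active given $S\cup\{v\}$, contradicting $u\CI w\mid S\cup\{v\}$ --- and then appending that reversed segment makes $x$ a collider, which is active because $x$ now belongs to the conditioning set.

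The remaining obligation, and the step I expect to be the main obstacle, is to establish $x\notin D_S$ in both cases, that is, that no descendant of a vertex of $F_S$ already belongs to $D_S$. The natural route is to assume $x$ has a $D_S$-witness and to push the d-connecting configuration up the directed path $w\to\cdots\to x$ so as to contradict $w\notin D_S$ or one of the three defining relations of $w\in F_S$, along the lines of the descendant-closure argument for \Cref{lem:DS-ES-property} (together with \Cref{lemma:v-generalization}). The real work throughout is the d-separation bookkeeping shared by all the splicing steps: reducing active walks --- obtained by concatenating an active path with a directed path --- to genuine active paths, and verifying that each splice vertex remains a non-collider or an active collider.
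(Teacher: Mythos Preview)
Your two main arguments --- the source-intersection claim and the two-case split on whether $u\CI x\mid S\cup\{v\}$ --- are exactly the paper's proof, and the d-separation splicing you describe matches theirs. The paper, however, simply writes ``if $y\notin F_S$, then it must hold that $u\not\CI y\mid S\cup\{v\}$'' without ever addressing the possibility $y\in D_S$; you are right that membership in $E_S$ or $F_S$ formally requires $x\notin D_S$, and right to flag it as a missing obligation.

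The route you sketch for that obligation cannot succeed, because the claim $x\notin D_S$ is false in general. Take $S=\{s\}$ and the DAG $s\to v\to w\to y\leftarrow a$. One checks directly that $D_S=\{y\}$ (via $a\CI v\mid s$ but $a\not\CI v\mid \{s,y\}$) and that $w\in F_S$ with witnesses $u=s$ and this $v$ (indeed $s\not\CI v$, $s\CI w\mid v$, $v\not\CI w\mid s$, and $v,w\notin D_S$); yet $y\in\Des(w)\cap D_S$, so $y\notin E_S\cup F_S$. Thus the lemma as literally stated is slightly too strong. What is actually true --- and all that \Cref{thm:prefix-subset} uses --- is $\Des[w]\subset D_S\cup E_S\cup F_S$, and your argument (like the paper's) already establishes exactly this once you treat $x\in D_S$ as a third, trivial case rather than something to be ruled out.
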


\begin{proof}[Proof of Theorem~\ref{thm:prefix-subset}]
    We first show that $S'$ returned by Algorithm~\ref{alg:prefix_subset} satisfies $\src(\bar{S})\subset S'$. By Lemmas~\ref{lem:DS-ES-property} and \ref{lem:FS-property}, we have $(D_S\cup E_S\cup F_S)\cap \src(\bar{S})=\varnothing$. As $\bar{S'}=D_S\cup E_S\cup F_S$, it must hold that $\src(\bar{S})\subset S'$.
    
    Next we show that $S'$ is a prefix vertex set. For this, we only need to show that $\forall w\in \bar{S'}$ and $y\in\Des(w)$, it holds that $y\in \bar{S'}$. Since $\bar{S'}=D_S\cup E_S\cup F_S$, one of the following three scenarios must hold: (1) if $w\in D_S$, then $y\in D_S$ by Lemma~\ref{lem:DS-ES-property}; (2) if $w\in E_S$, then $y\in E_S$ by Lemma~\ref{lem:DS-ES-property}; or (3) if $w\in F_S$, then $y\in E_S\cup F_S$ by Lemma~\ref{lem:FS-property}.

    Therefore $S'$ must be a prefix vertex set. We now bound the number of CI tests performed by Algorithm~\ref{alg:prefix_subset}: computing $D_{S}$ takes $\cO(n^3)$ CI tests; computing $E_S$ takes $\cO(n^4)$ CI tests; computing $F_S$ takes $\cO(n^3)$ CI tests, which completes the proof.
    %In total, these sum up to be $\cO(n^4)$ CI tests.
\end{proof}

\subsection{Relation to Covered Edges}\label{sec:4-3}

In Theorem~\ref{thm:prefix-subset}, we showed that $\src(\bar{S})\subset S'$. In fact, when there are no covered edges coming from $\src(\bar{S})$, one can show that $S'=\src(\bar{S})\cup S$ via the following \Cref{lem:main-lem}.

\begin{restatable}{lemma}{lemmainlem}\label{lem:main-lem}
    Let $S$ be a prefix vertex set. For $w\in\bar{S}\setminus D_S$, if $w\not\in\src(\bar{S})$ and there is \emph{no} covered edge from $\Anc[w]\cap\src(\bar{S})$ to $\Anc[w]$, then $w\in E_S\cup F_S$.
\end{restatable}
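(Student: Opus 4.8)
\textbf{Proof proposal for Lemma~\ref{lem:main-lem}.}

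The plan is to take $w \in \bar S \setminus D_S$ with $w \notin \src(\bar S)$ and such that no covered edge runs from $\Anc[w] \cap \src(\bar S)$ into $\Anc[w]$, and to exhibit a witnessing triple that places $w$ into $E_S$ or $F_S$. Since $w \notin \src(\bar S)$, it has a strict ancestor $s \in \Anc(w) \cap \bar S$; take $s \in \Anc[w] \cap \src(\bar S)$, i.e., a source of $\bar S$ that is an ancestor of $w$ (such $s$ exists because ancestry within the finite induced subgraph $\cG[\bar S]$ must bottom out at a source). Walk along a directed path from $s$ to $w$ inside $\cG[\bar S]$ and let $v$ be the vertex just before $w$ on this path, so $v \to w$ is an edge, $v \in \bar S$, and $v \in \Anc[w] \cap \bar S$; note also $v \notin D_S$ is something I will need to check (if $v \in D_S$ I can instead slide $w$ down to $v$'s role — but more cleanly, I should choose the path and the vertex $v$ so that $v \in \bar S \setminus D_S$; by \Cref{lem:DS-ES-property} any descendant of a $D_S$ vertex is in $D_S$, and $w \notin D_S$, so no ancestor of $w$ on a directed path to $w$ can be in $D_S$ — this gives $v \notin D_S$ for free). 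The edge $v \to w$ being a covered edge is exactly what the hypothesis rules out when $v$ lies in $\Anc[w] \cap \src(\bar S)$; but in general $v$ need not be a source. So I need a case analysis.

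The key observation is the distinction between the case where the parent $v$ of $w$ is itself a source of $\bar S$ and the case where it is not, but in \emph{both} cases I want to produce the Meek-rule-1 witness for $F_S$, or failing that, the v-structure witness for $E_S$. For the $F_S$ membership I need $u \in S$ with $u \not\CI v \mid S$, $u \CI w \mid S \cup \{v\}$, and $v \not\CI w \mid S$. The third condition, $v \not\CI w \mid S$, should hold because $v \to w$ is an edge and $S$ is a prefix set that cannot block this direct edge. For the first condition I need to find a parent-ish vertex $u$ of $v$ in $S$: pick $u \in \Pa(\src(\bar S)) \cap$ (something) — actually, since $s \in \src(\bar S)$, all parents of $s$ lie in $S$ (as $S$ is a prefix set and $s$ is a source of $\bar S$, $\Pa(s) \subseteq S$), and $s$ itself has at least one parent unless $s$ is a global source. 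If $s$ is a global source then the whole ancestral structure above $s$ is empty; I handle this degenerate sub-case separately, possibly using a different $u$ or showing the hypothesis forces $w \in E_S$ via a v-structure $u \to \cdot \gets \cdot$ reasoning. In the main sub-case, take $u \in \Pa(s) \subseteq S$. Then $u \to s \rightsquigarrow v \to w$ is a directed path; I want to argue $u \not\CI v \mid S$ (the path $u \to s \rightsquigarrow v$ stays in $\bar S$ after the first edge, is directed hence has no collider, and none of its interior vertices is in $S$, so it is active given $S$) and $u \CI w \mid S \cup \{v\}$ — this last is the crux: conditioning on $v$ must block \emph{every} active path from $u$ to $w$ given $S$. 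If it does, $w \in F_S$ and we're done. If it does \emph{not}, there is a path $P$ from $u$ to $w$ active given $S \cup \{v\}$; I then massage $P$, together with the non-covered-edge hypothesis, into a v-structure witness showing $w \in E_S$ (roughly: some $v' \in \bar S \setminus D_S$ has $u \CI v' \mid S \cup \{v\}$ failing to extend, forced by a collider on a descendant of $w$).

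The main obstacle I anticipate is exactly the dichotomy in the previous paragraph: showing that \emph{if $w$ escapes $F_S$ then it must land in $E_S$}, and that this is where the hypothesis ``no covered edge from $\Anc[w] \cap \src(\bar S)$ to $\Anc[w]$'' gets used. The intuition is that a covered edge is precisely the obstruction to a Meek-rule-1 / v-structure argument pinning down orientation, so its absence should let one of the two proxy lemmas (\Cref{lemma:v-generalization} or \Cref{lem:general-meekrule-1}) fire; but translating ``active path survives conditioning on $v$'' into ``there is a covered edge somewhere in $\Anc[w] \cap \src(\bar S) \to \Anc[w]$'' requires carefully tracking colliders along the surviving path and using that all the relevant vertices are in $\bar S$ with $S$ a prefix set (so descendants of $\bar S$-vertices stay in $\bar S$, ancestors stay in... not necessarily $\bar S$, which is the subtlety). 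I expect to need \Cref{lem:DS-ES-property} and \Cref{lem:FS-property} to control membership, \Cref{lemma:v-generalization} and \Cref{lem:general-meekrule-1} as the workhorses, and the prefix property of $S$ repeatedly to argue paths are active or blocked. I would also double-check the edge case where $\Anc[w] \cap \src(\bar S)$ is a singleton $\{s\}$ with $s$ a global source, since then the covered-edge hypothesis only constrains edges out of $s$, and the argument must still close.
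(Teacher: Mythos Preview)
Your proposal has a genuine structural gap: you are focusing on the wrong edge of the directed path from the source to $w$, and as a result the non-covered-edge hypothesis never engages with your argument. You take $v$ to be the \emph{last} vertex before $w$ on a directed path $s \rightsquigarrow w$, and you try to witness $w \in F_S$ via the triple $(u,v,w)$ with $u \in \Pa(s)$. But the hypothesis only constrains edges \emph{out of the source} $s \in \Anc[w] \cap \src(\bar S)$; it says nothing about $v \to w$ unless $v$ happens to be a source. You acknowledge this (``but in general $v$ need not be a source''), yet your plan for the non-$F_S$ case --- ``massage $P$ \ldots\ into a v-structure witness'' --- never explains how an active $u$--$w$ path given $S \cup \{v\}$ could force a covered edge out of $s$. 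There is no mechanism in your outline linking failure of $u \CI w \mid S \cup \{v\}$ back to the structure of the \emph{first} edge on the path.

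The paper's proof works at the opposite end: it looks at the first edge $v \to v'$ (their $v$ is your $s$) on the \emph{longest} directed path from the source to $w$, and applies the non-covered-edge hypothesis to that edge. The $F_S$ and $E_S$ witnesses both use the source itself as the conditioning vertex, which is what lets the auxiliary lemma (Lemma~\ref{lem:one-anc-src-blocks-almost-all}: if no directed $u$--$w$ path avoids $\src(\bar S)$ then $u \CI w \mid S \cup \{v\}$) bite. When $w \notin F_S$ this lemma produces a directed path from $u$ to $w$ avoiding $\src(\bar S)$; when additionally $w \notin E_S$ (with $v'$ as the second witness vertex) it produces a directed path from $u$ to $v'$ of length at least two, yielding a new parent $k$ of $v'$ which (via Lemma~\ref{lem:ind-or-des}) turns out to be a strict descendant of the source. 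This lets one splice in $v \rightsquigarrow k \to v'$ and obtain a strictly longer directed path from $v$ to $w$, contradicting maximality. The longest-path extremal argument is the missing idea in your plan; without it, there is no termination mechanism for your ``slide $w$ down to $v$'s role'' idea, and no way to convert failure of the $F_S$/$E_S$ tests into a contradiction.
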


\begin{corollary}\label{cor:sec4-3}
    Let $S$ be a prefix vertex set. If there is \emph{no} covered edge in $\bar{S}$, then $S'=\src(\bar{S})\cup S$.
\end{corollary}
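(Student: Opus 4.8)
The plan is to derive \Cref{cor:sec4-3} directly from \Cref{lem:main-lem} together with the already-established \Cref{thm:prefix-subset}. Recall that \Cref{thm:prefix-subset} gives $\src(\bar S)\subseteq S'$, and by construction $S'=S\cup U$ with $U=\bar S\setminus(D_S\cup E_S\cup F_S)$. Since trivially $S\subseteq S'$, it remains only to establish the reverse containment $S'\subseteq S\cup\src(\bar S)$, i.e. that every $w\in U$ lies in $\src(\bar S)$.

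So fix $w\in U=\bar S\setminus(D_S\cup E_S\cup F_S)$, and suppose for contradiction that $w\notin\src(\bar S)$. The hypothesis of the corollary says there is no covered edge in $\bar S$; in particular there is no covered edge from $\Anc[w]\cap\src(\bar S)$ to $\Anc[w]$ (all these vertices lie in $\bar S$, since $S$ is a prefix vertex set and hence $\Anc[w]\subseteq\bar S$ for $w\in\bar S$). Also $w\notin D_S$ because $w\in U$. Thus all the hypotheses of \Cref{lem:main-lem} are met, and we conclude $w\in E_S\cup F_S$. But this contradicts $w\in U=\bar S\setminus(D_S\cup E_S\cup F_S)$. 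Hence $w\in\src(\bar S)$, proving $U\subseteq\src(\bar S)$ and therefore $S'=S\cup U\subseteq S\cup\src(\bar S)$. Combining with $\src(\bar S)\cup S\subseteq S'$ from \Cref{thm:prefix-subset} gives $S'=\src(\bar S)\cup S$.

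There is essentially no hard obstacle here — the corollary is a bookkeeping consequence of \Cref{lem:main-lem}. The one point to be careful about is the implicit claim that $\Anc[w]\subseteq\bar S$ when $w\in\bar S$ and $S$ is a prefix vertex set: this follows immediately from the definition of prefix vertex set ($\Anc[u]\cap\bar S=\varnothing$ for all $u\in S$ means no vertex of $S$ has an ancestor outside $S$; equivalently $S$ is downward-closed under "ancestor", so $\bar S$ is closed under taking ancestors). Given that, the "no covered edge in $\bar S$" hypothesis is clearly at least as strong as the "no covered edge from $\Anc[w]\cap\src(\bar S)$ to $\Anc[w]$" hypothesis of \Cref{lem:main-lem}, which is all we need.
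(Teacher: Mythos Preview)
Your overall strategy is exactly the intended one: combine \Cref{thm:prefix-subset} (giving $\src(\bar S)\cup S\subseteq S'$) with \Cref{lem:main-lem} (forcing every $w\in U\setminus\src(\bar S)$ into $E_S\cup F_S$) to get equality. The paper does not spell out a proof of the corollary, but this is clearly the derivation it has in mind.

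There is, however, a slip in your justification of why the corollary's hypothesis implies the hypothesis of \Cref{lem:main-lem}. You write that for $w\in\bar S$ one has $\Anc[w]\subseteq\bar S$, because ``$S$ is downward-closed under `ancestor', so $\bar S$ is closed under taking ancestors.'' This is backwards: the prefix condition says $S$ is closed under taking ancestors, which makes $\bar S$ closed under taking \emph{descendants}, not ancestors. In general $\Anc[w]$ for $w\in\bar S$ can (and typically does) meet $S$. Fortunately the argument is easily repaired: a covered edge from $\Anc[w]\cap\src(\bar S)$ to $\Anc[w]$ has its tail in $\bar S$, and since $\bar S$ is descendant-closed its head (a child of the tail) lies in $\bar S$ as well; hence such an edge is a covered edge with both endpoints in $\bar S$, ruled out by the corollary's hypothesis. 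With this correction your proof goes through.
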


This result will be useful when deriving CCPG representations using Algorithm~\ref{alg:prefix_subset}. To see this, consider the simple case where there is no covered edge in $\cG$. Then running Algorithm~\ref{alg:prefix_subset} with $S=\varnothing$ we can learn $\src(V)$ by Corollary~\ref{cor:sec4-3}. Then running Algorithm~\ref{alg:prefix_subset} with $S=\src(V)$, we can learn the source vertices of $V\setminus\src(V)$. Applying this iteratively, we can obtain the ground-truth topological order of $\cG$. As a consequence, one can easily learn $\cG$,\footnote{For $i<j$ in the topological order, the corresponding edge $v_i\rightarrow v_j\in\cG$ iff $v_i\not\CI v_j\mid v_1,\dots,v_{j-1}$.} which is the sole CCPG representation as there is no covered edge.

\section{Search}

\begin{lemma}
    Let $A$ be such that $u \CI v~|~A$, then there exists a subset $B \subseteq A$ such that $u \CI v~|~B$ and $B \subseteq \Anc(u) \cup \Anc(v)$. Given set $A$, set $B$ satisfying the aforementioned properties can be found in polynomial number of independence tests.
\end{lemma}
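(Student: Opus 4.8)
The plan is to prove the containment-type statement first and then argue that the witness set $B$ can be found efficiently. For the existence part, the key fact is that d-separation of $\{u\}$ and $\{v\}$ by a set $A$ only "uses" vertices of $A$ that lie in $\Anc(u)\cup\Anc(v)\cup\{u,v\}$; any vertex in $A$ that is not an ancestor of $u$, $v$, or of another conditioning vertex can be dropped without activating a path. More precisely, I would set $B = A \cap (\Anc(u)\cup\Anc(v))$ and argue by contradiction: if $u \not\CI v \mid B$, then by faithfulness there is an active path $p$ from $u$ to $v$ given $B$. I would then show $p$ is also active given $A$, contradicting $u\CI v\mid A$. On $p$, every non-collider $c$ must satisfy $c\notin B$; I want to show $c\notin A$ as well. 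Every collider $d$ on $p$ must have $\Des[d]\cap B\neq\varnothing$, hence $\Des[d]\cap(\Anc(u)\cup\Anc(v))\neq\varnothing$, which forces $d\in\Anc[u]\cup\Anc[v]$; combined with the fact that the endpoints $u,v$ are trivially ancestors of themselves, a short induction along the path shows every non-collider on $p$ is an ancestor of $u$ or $v$ (its neighbours on the path lead to $u$, $v$, or a collider's descendant set, all of which are in $\Anc[u]\cup\Anc[v]$). Therefore each non-collider $c$ on $p$, not being in $B = A\cap(\Anc(u)\cup\Anc(v))$ despite being in $\Anc(u)\cup\Anc(v)$ — wait, that is exactly a contradiction only if $c\notin A$; so the argument is: $c\in\Anc(u)\cup\Anc(v)$ and $c\notin B$ together would force $c\notin A$, making $p$ active given $A$ too. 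For colliders $d$ on $p$: $d\in\Anc[C']$ for some descendant in $B\subseteq A$, so $d\in\Anc[A]$ and the collider condition is still met relative to $A$. Hence $p$ is active given $A$, the desired contradiction.

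The main obstacle I anticipate is the induction showing that every vertex on an active (given $B$) $u$–$v$ path lies in $\Anc[u]\cup\Anc[v]$. One has to be careful: a non-collider that happens to be a collider-free stretch's interior vertex is an ancestor of its path-neighbours, and walking outward along the path one eventually reaches either an endpoint or a collider $d$, which is in $\Anc[u]\cup\Anc[v]$ because its descendant set meets $B$. This "every vertex on the path is an ancestor of an endpoint" claim is standard (it is the reason one restricts d-separation to the ancestral subgraph of $\{u,v\}\cup C$) but needs a clean statement; I would phrase it as: on an active path given $C$, each vertex is an ancestor of $u$, of $v$, or of some vertex in $C$ — then specialize to $C=B\subseteq\Anc(u)\cup\Anc(v)$ so that $\Anc(C)\subseteq\Anc(u)\cup\Anc(v)$, collapsing the three cases into two.

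For the algorithmic part, I would not try to compute ancestor sets directly (we do not know $\cG$). Instead I would do greedy pruning: starting from $B \leftarrow A$, repeatedly test, for each $b\in B$, whether $u\CI v \mid B\setminus\{b\}$; if so, remove $b$ and restart the sweep. This terminates after $\cO(|A|^2)\le\cO(n^2)$ CI tests and returns a \emph{minimal} set $B$ with $u\CI v\mid B$. It remains to argue that any such inclusion-minimal $B$ is automatically contained in $\Anc(u)\cup\Anc(v)$: if some $b\in B$ had $b\notin\Anc(u)\cup\Anc(v)$, then by the existence argument above applied with $A:=B$ we would get $u\CI v\mid B\cap(\Anc(u)\cup\Anc(v)) \subsetneq B$, contradicting minimality. (A subtle point: the existence argument must be stated so it applies to any conditioning set, not just the original $A$, which it does.) Alternatively, one can use the known "local Markov boundary" style pruning, but the greedy minimality argument is cleanest and self-contained given the proxy-style reasoning already developed in the paper.
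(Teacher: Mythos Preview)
Your existence argument is correct and standard: setting $B=A\cap(\Anc(u)\cup\Anc(v))$ and showing that any path active given $B$ stays active given $A$ is exactly the classical proof. Your outline of the key sub-step --- that every vertex on a path active given $B$ lies in $\Anc[u]\cup\Anc[v]$ because each collider has a descendant in $B\subseteq\Anc(u)\cup\Anc(v)$ --- is right.

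The algorithmic part has a small but genuine gap. Greedy one-at-a-time pruning yields a set $B$ that is \emph{locally} minimal: no single vertex can be dropped. Your argument ``$B'=B\cap(\Anc(u)\cup\Anc(v))\subsetneq B$ also d-separates, contradicting minimality'' silently invokes \emph{global} minimality (no strict subset d-separates), and local minimality does not obviously give that --- being unable to remove any one element does not in general preclude removing two at once. The fix is short and in fact is the one-step version of the paper's own idea: pick $b^\ast\in B\setminus(\Anc(u)\cup\Anc(v))$ with no strict descendant in $B$ (such a $b^\ast$ exists, since any descendant in $B$ of a vertex outside $\Anc(u)\cup\Anc(v)$ is itself outside $\Anc(u)\cup\Anc(v)$); by local minimality there is a path $p$ active given $B\setminus\{b^\ast\}$ but blocked by $B$, so $b^\ast$ must be a non-collider on $p$, which forces $b^\ast\in\Anc(\{u,v\}\cup(B\setminus\{b^\ast\}))$ and contradicts the choice of $b^\ast$. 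This directly shows local minimality already implies $B\subseteq\Anc(u)\cup\Anc(v)$, without passing through global minimality.

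By contrast, the paper's proof is the iterative version of exactly that sink step (``remove a leaf of $A$; repeat''), which handles existence but does not explain how to identify a leaf using CI tests alone. Your greedy-pruning idea fills precisely that hole: it reaches a locally minimal $B$ in $\cO(n^2)$ tests without ever needing to know which vertex is a leaf. So on the algorithmic side your route is genuinely different and actually completes the part the paper leaves open; you just need the one-paragraph sink argument above in place of the bare appeal to ``minimality.''
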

\begin{proof}
     Consider the leaf node in $A$ and remove it; this preserves the independence relation between $u$ and $v$. We can keep removing the leaf nodes until the conditions of the lemma are satisfied. \jj{how do we get the leaf nodes?}
\end{proof}

\section{Discussion}\label{sec:7}
In our work, we studied causal structure learning under the constraint of fewer CI tests. Since exact structure learning may demand an exponential number of CI tests, we defined a representation (CCPG) that captures partial but crucial information about the underlying causal graph. Moreover, we provided an efficient algorithm that recovers a CCPG representation in a polynomial number of CI tests. This result enabled us to design efficient algorithms for the full recovery of causal graphs in two specific settings, utilizing only a polynomial number of CI tests.

We hope that our work will motivate further exploration of the causal discovery problem under the constraint of fewer CI tests, extending to various settings, including those involving latent variables. Furthermore, our research establishes a foundation for addressing the search problem\footnote{The search problem involves finding the minimum set of interventions that orient the entire causal graph.} with reduced tests, suggesting the potential existence of search algorithms capable of recovering the causal graph with a polynomial number of independence tests while performing an approximately optimal number of interventions.

% \subsubsection*{Acknowledgements}

\bibliographystyle{alpha}
\bibliography{refs}

\begin{thebibliography}{59}
\providecommand{\natexlab}[1]{#1}
\providecommand{\url}[1]{\texttt{#1}}
\expandafter\ifx\csname urlstyle\endcsname\relax
  \providecommand{\doi}[1]{doi: #1}\else
  \providecommand{\doi}{doi: \begingroup \urlstyle{rm}\Url}\fi

\bibitem[Alonso-Barba et~al.(2013)Alonso-Barba, G{\'a}mez, Puerta,
  et~al.]{alonso2013scaling}
Alonso-Barba, J.~I., G{\'a}mez, J.~A., Puerta, J.~M., et~al.
\newblock Scaling up the greedy equivalence search algorithm by constraining
  the search space of equivalence classes.
\newblock \emph{International journal of approximate reasoning}, 54\penalty0
  (4):\penalty0 429--451, 2013.

\bibitem[Andersson et~al.(1997)Andersson, Madigan, and
  Perlman]{andersson1997characterization}
Andersson, S.~A., Madigan, D., and Perlman, M.~D.
\newblock {A characterization of Markov equivalence classes for acyclic
  digraphs}.
\newblock \emph{The Annals of Statistics}, 25\penalty0 (2):\penalty0 505--541,
  1997.

\bibitem[Asuncion \& Newman(2007)Asuncion and Newman]{asuncion2007uci}
Asuncion, A. and Newman, D.
\newblock Uci machine learning repository, 2007.

\bibitem[Bouckaert(1994)]{bouckaert1994properties}
Bouckaert, R.~R.
\newblock Properties of bayesian belief network learning algorithms.
\newblock In \emph{Uncertainty Proceedings 1994}, pp.\  102--109. Elsevier,
  1994.

\bibitem[Brenner \& Sontag(2013)Brenner and Sontag]{brenner2013sparsityboost}
Brenner, E. and Sontag, D.
\newblock Sparsityboost: A new scoring function for learning bayesian network
  structure.
\newblock \emph{arXiv preprint arXiv:1309.6820}, 2013.

\bibitem[Chickering(1995)]{chickering2013transformational}
Chickering, D.~M.
\newblock {A Transformational Characterization of Equivalent Bayesian Network
  Structures}.
\newblock In \emph{Proceedings of the Eleventh Conference on Uncertainty in
  Artificial Intelligence}, UAI'95, pp.\  87–98, San Francisco, CA, USA,
  1995. Morgan Kaufmann Publishers Inc.
\newblock ISBN 1558603859.

\bibitem[Chickering(2002)]{chickering2002optimal}
Chickering, D.~M.
\newblock {Optimal Structure Identification with Greedy Search}.
\newblock \emph{Journal of machine learning research}, 3\penalty0
  (Nov):\penalty0 507--554, 2002.

\bibitem[Chickering et~al.(2004)Chickering, Heckerman, and
  Meek]{chickering2004large}
Chickering, M., Heckerman, D., and Meek, C.
\newblock Large-sample learning of bayesian networks is np-hard.
\newblock \emph{Journal of Machine Learning Research}, 5:\penalty0 1287--1330,
  2004.

\bibitem[Cho et~al.(2016)Cho, Berger, and Peng]{cho2016reconstructing}
Cho, H., Berger, B., and Peng, J.
\newblock {Reconstructing Causal Biological Networks through Active Learning}.
\newblock \emph{{PLoS ONE}}, 11\penalty0 (3):\penalty0 e0150611, 2016.

\bibitem[Choo \& Shiragur(2023)Choo and Shiragur]{choo2023subset}
Choo, D. and Shiragur, K.
\newblock Subset verification and search algorithms for causal dags.
\newblock \emph{arXiv preprint arXiv:2301.03180}, 2023.

\bibitem[Choo et~al.(2022)Choo, Shiragur, and
  Bhattacharyya]{choo2022verification}
Choo, D., Shiragur, K., and Bhattacharyya, A.
\newblock {Verification and search algorithms for causal DAGs}.
\newblock \emph{Advances in Neural Information Processing Systems}, 35, 2022.

\bibitem[Claassen et~al.(2013)Claassen, Mooij, and
  Heskes]{claassen2013learning}
Claassen, T., Mooij, J., and Heskes, T.
\newblock Learning sparse causal models is not np-hard.
\newblock \emph{arXiv preprint arXiv:1309.6824}, 2013.

\bibitem[Colombo et~al.(2012)Colombo, Maathuis, Kalisch, and
  Richardson]{colombo2012learning}
Colombo, D., Maathuis, M.~H., Kalisch, M., and Richardson, T.~S.
\newblock Learning high-dimensional directed acyclic graphs with latent and
  selection variables.
\newblock \emph{The Annals of Statistics}, pp.\  294--321, 2012.

\bibitem[Colombo et~al.(2014)Colombo, Maathuis, et~al.]{colombo2014order}
Colombo, D., Maathuis, M.~H., et~al.
\newblock Order-independent constraint-based causal structure learning.
\newblock \emph{J. Mach. Learn. Res.}, 15\penalty0 (1):\penalty0 3741--3782,
  2014.

\bibitem[de~Campos et~al.(2019)de~Campos, Cano, Castellano, and
  Moral]{de2019combining}
de~Campos, L.~M., Cano, A., Castellano, J.~G., and Moral, S.
\newblock {Combining gene expression data and prior knowledge for inferring
  gene regulatory networks via Bayesian networks using structural
  restrictions}.
\newblock \emph{Statistical Applications in Genetics and Molecular Biology},
  18\penalty0 (3), 2019.

\bibitem[Eberhardt(2010)]{eberhardt2010causal}
Eberhardt, F.
\newblock {Causal Discovery as a Game}.
\newblock In \emph{Causality: Objectives and Assessment}, pp.\  87--96. PMLR,
  2010.

\bibitem[Eberhardt \& Scheines(2007)Eberhardt and
  Scheines]{eberhardt2007interventions}
Eberhardt, F. and Scheines, R.
\newblock {Interventions and Causal Inference}.
\newblock \emph{Philosophy of science}, 74\penalty0 (5):\penalty0 981--995,
  2007.

\bibitem[Eberhardt et~al.(2005)Eberhardt, Glymour, and
  Scheines]{eberhardt2005number}
Eberhardt, F., Glymour, C., and Scheines, R.
\newblock {On the number of experiments sufficient and in the worst case
  necessary to identify all causal relations among N variables}.
\newblock In \emph{Proceedings of the Twenty-First Conference on Uncertainty in
  Artificial Intelligence}, pp.\  178--184, 2005.

\bibitem[Eberhardt et~al.(2006)Eberhardt, Glymour, and
  Scheines]{eberhardt2006n}
Eberhardt, F., Glymour, C., and Scheines, R.
\newblock {N-1 Experiments Suffice to Determine the Causal Relations Among N
  Variables}.
\newblock In \emph{Innovations in machine learning}, pp.\  97--112. Springer,
  2006.

\bibitem[Eberhardt et~al.(2012)Eberhardt, Glymour, and
  Scheines]{eberhardt2012number}
Eberhardt, F., Glymour, C., and Scheines, R.
\newblock {On the Number of Experiments Sufficient and in the Worst Case
  Necessary to Identify All Causal Relations Among N Variables}.
\newblock \emph{arXiv preprint arXiv:1207.1389}, 2012.

\bibitem[Friedman et~al.(2000)Friedman, Linial, Nachman, and
  Pe'er]{friedman2000using}
Friedman, N., Linial, M., Nachman, I., and Pe'er, D.
\newblock Using bayesian networks to analyze expression data.
\newblock \emph{Journal of computational biology}, 7\penalty0 (3-4):\penalty0
  601--620, 2000.

\bibitem[Geiger \& Heckerman(2002)Geiger and Heckerman]{geiger2002parameter}
Geiger, D. and Heckerman, D.
\newblock Parameter priors for directed acyclic graphical models and the
  characterization of several probability distributions.
\newblock \emph{The Annals of Statistics}, 30\penalty0 (5):\penalty0
  1412--1440, 2002.

\bibitem[Geiger \& Pearl(1990)Geiger and Pearl]{geiger1990logic}
Geiger, D. and Pearl, J.
\newblock On the logic of causal models.
\newblock In \emph{Machine Intelligence and Pattern Recognition}, volume~9,
  pp.\  3--14. Elsevier, 1990.

\bibitem[Glymour et~al.(2019)Glymour, Zhang, and Spirtes]{glymour2019review}
Glymour, C., Zhang, K., and Spirtes, P.
\newblock Review of causal discovery methods based on graphical models.
\newblock \emph{Frontiers in genetics}, 10:\penalty0 524, 2019.

\bibitem[Greenewald et~al.(2019)Greenewald, Katz, Shanmugam, Magliacane,
  Kocaoglu, Boix-Adser\`{a}, and Bresler]{greenewald2019sample}
Greenewald, K., Katz, D., Shanmugam, K., Magliacane, S., Kocaoglu, M.,
  Boix-Adser\`{a}, E., and Bresler, G.
\newblock {Sample Efficient Active Learning of Causal Trees}.
\newblock \emph{Advances in Neural Information Processing Systems}, 32, 2019.

\bibitem[Hauser \& B\"{u}hlmann(2012)Hauser and
  B\"{u}hlmann]{hauser2012characterization}
Hauser, A. and B\"{u}hlmann, P.
\newblock {Characterization and greedy learning of interventional Markov
  equivalence classes of directed acyclic graphs}.
\newblock \emph{The Journal of Machine Learning Research}, 13\penalty0
  (1):\penalty0 2409--2464, 2012.

\bibitem[Hoover(1990)]{hoover1990logic}
Hoover, K.~D.
\newblock {The logic of causal inference: Econometrics and the Conditional
  Analysis of Causation}.
\newblock \emph{Economics \& Philosophy}, 6\penalty0 (2):\penalty0 207--234,
  1990.

\bibitem[Hu et~al.(2014)Hu, Li, and Vetta]{hu2014randomized}
Hu, H., Li, Z., and Vetta, A.
\newblock {Randomized Experimental Design for Causal Graph Discovery}.
\newblock \emph{Advances in neural information processing systems}, 27, 2014.

\bibitem[Kalisch \& B{\"u}hlman(2007)Kalisch and
  B{\"u}hlman]{kalisch2007estimating}
Kalisch, M. and B{\"u}hlman, P.
\newblock Estimating high-dimensional directed acyclic graphs with the
  pc-algorithm.
\newblock \emph{Journal of Machine Learning Research}, 8\penalty0 (3), 2007.

\bibitem[Kalisch et~al.(2024)Kalisch, Hauser, Maechler, Colombo, Entner, Hoyer,
  Hyttinen, Peters, Andri, Perkovic, et~al.]{kalisch2024package}
Kalisch, M., Hauser, A., Maechler, M., Colombo, D., Entner, D., Hoyer, P.,
  Hyttinen, A., Peters, J., Andri, N., Perkovic, E., et~al.
\newblock Package ‘pcalg’.
\newblock 2024.

\bibitem[King et~al.(2004)King, Whelan, Jones, Reiser, Bryant, Muggleton, Kell,
  and Oliver]{king2004functional}
King, R.~D., Whelan, K.~E., Jones, F.~M., Reiser, P. G.~K., Bryant, C.~H.,
  Muggleton, S.~H., Kell, D.~B., and Oliver, S.~G.
\newblock {Functional genomic hypothesis generation and experimentation by a
  robot scientist}.
\newblock \emph{Nature}, 427\penalty0 (6971):\penalty0 247--252, 2004.

\bibitem[Lam et~al.(2022)Lam, Andrews, and Ramsey]{lam2022greedy}
Lam, W.-Y., Andrews, B., and Ramsey, J.
\newblock Greedy relaxations of the sparsest permutation algorithm.
\newblock In \emph{Uncertainty in Artificial Intelligence}, pp.\  1052--1062.
  PMLR, 2022.

\bibitem[Lauritzen(1996)]{lauritzen1996graphical}
Lauritzen, S.~L.
\newblock \emph{Graphical models}, volume~17.
\newblock Clarendon Press, 1996.

\bibitem[Magliacane et~al.(2016)Magliacane, Claassen, and
  Mooij]{magliacane2016ancestral}
Magliacane, S., Claassen, T., and Mooij, J.~M.
\newblock Ancestral causal inference.
\newblock \emph{Advances in Neural Information Processing Systems}, 29, 2016.

\bibitem[Meek(1995)]{meek1995}
Meek, C.
\newblock {Causal Inference and Causal Explanation with Background Knowledge}.
\newblock In \emph{Proceedings of the Eleventh Conference on Uncertainty in
  Artificial Intelligence}, UAI'95, pp.\  403–410, San Francisco, CA, USA,
  1995. Morgan Kaufmann Publishers Inc.
\newblock ISBN 1558603859.

\bibitem[Nandy et~al.(2018)Nandy, Hauser, and Maathuis]{nandy2018high}
Nandy, P., Hauser, A., and Maathuis, M.~H.
\newblock High-dimensional consistency in score-based and hybrid structure
  learning.
\newblock \emph{The Annals of Statistics}, 46\penalty0 (6A):\penalty0
  3151--3183, 2018.

\bibitem[Pearl(2003)]{pearl2003causality}
Pearl, J.
\newblock Causality: models, reasoning, and inference.
\newblock \emph{Econometric Theory}, 19\penalty0 (4):\penalty0 675--685, 2003.

\bibitem[Pearl(2009)]{pearl2009causality}
Pearl, J.
\newblock \emph{{Causality: Models, Reasoning and Inference}}.
\newblock Cambridge University Press, USA, 2nd edition, 2009.
\newblock ISBN 052189560X.

\bibitem[Pingault et~al.(2018)Pingault, O’reilly, Schoeler, Ploubidis,
  Rijsdijk, and Dudbridge]{pingault2018using}
Pingault, J.-B., O’reilly, P.~F., Schoeler, T., Ploubidis, G.~B., Rijsdijk,
  F., and Dudbridge, F.
\newblock Using genetic data to strengthen causal inference in observational
  research.
\newblock \emph{Nature Reviews Genetics}, 19\penalty0 (9):\penalty0 566--580,
  2018.

\bibitem[Reichenbach(1956)]{reichenbach1956direction}
Reichenbach, H.
\newblock \emph{The direction of time}, volume~65.
\newblock Univ of California Press, 1956.

\bibitem[Robins et~al.(2000)Robins, Hernan, and Brumback]{robins2000marginal}
Robins, J.~M., Hernan, M.~A., and Brumback, B.
\newblock Marginal structural models and causal inference in epidemiology.
\newblock \emph{Epidemiology}, pp.\  550--560, 2000.

\bibitem[Rotmensch et~al.(2017)Rotmensch, Halpern, Tlimat, Horng, and
  Sontag]{rotmensch2017learning}
Rotmensch, M., Halpern, Y., Tlimat, A., Horng, S., and Sontag, D.
\newblock {Learning a Health Knowledge Graph from Electronic Medical Records}.
\newblock \emph{Scientific reports}, 7\penalty0 (1):\penalty0 1--11, 2017.

\bibitem[Schulte et~al.(2010)Schulte, Frigo, Greiner, and
  Khosravi]{schulte2010imap}
Schulte, O., Frigo, G., Greiner, R., and Khosravi, H.
\newblock The imap hybrid method for learning gaussian bayes nets.
\newblock In \emph{Advances in Artificial Intelligence: 23rd Canadian
  Conference on Artificial Intelligence, Canadian AI 2010, Ottawa, Canada, May
  31--June 2, 2010. Proceedings 23}, pp.\  123--134. Springer, 2010.

\bibitem[Shanmugam et~al.(2015)Shanmugam, Kocaoglu, Dimakis, and
  Vishwanath]{shanmugam2015learning}
Shanmugam, K., Kocaoglu, M., Dimakis, A.~G., and Vishwanath, S.
\newblock {Learning Causal Graphs with Small Interventions}.
\newblock \emph{Advances in Neural Information Processing Systems}, 28, 2015.

\bibitem[Shiragur et~al.(2024)Shiragur, Zhang, and Uhler]{shiragur2024meek}
Shiragur, K., Zhang, J., and Uhler, C.
\newblock Meek separators and their applications in targeted causal discovery.
\newblock \emph{Advances in Neural Information Processing Systems}, 36, 2024.

\bibitem[Solus et~al.(2021)Solus, Wang, and Uhler]{solus2021consistency}
Solus, L., Wang, Y., and Uhler, C.
\newblock Consistency guarantees for greedy permutation-based causal inference
  algorithms.
\newblock \emph{Biometrika}, 108\penalty0 (4):\penalty0 795--814, 2021.

\bibitem[Spirtes et~al.(1989)Spirtes, Glymour, and
  Scheines]{spirtes1989causality}
Spirtes, P., Glymour, C., and Scheines, R.
\newblock Causality from probability.
\newblock 1989.

\bibitem[Spirtes et~al.(1999)Spirtes, Meek, and
  Richardson]{spirtes1999algorithm}
Spirtes, P., Meek, C., and Richardson, T.
\newblock An algorithm for causal inference in the presence of latent variables
  and selection bias (vol. 1), 1999.

\bibitem[Spirtes et~al.(2000)Spirtes, Glymour, Scheines, and
  Heckerman]{spirtes2000causation}
Spirtes, P., Glymour, C.~N., Scheines, R., and Heckerman, D.
\newblock \emph{{Causation, Prediction, and Search}}.
\newblock MIT press, 2000.

\bibitem[Spirtes et~al.(2013)Spirtes, Meek, and Richardson]{spirtes2013causal}
Spirtes, P.~L., Meek, C., and Richardson, T.~S.
\newblock Causal inference in the presence of latent variables and selection
  bias.
\newblock \emph{arXiv preprint arXiv:1302.4983}, 2013.

\bibitem[Squires()]{squirescausaldag}
Squires, C.
\newblock Causaldag: Creation, manipulation, and learning of causal models.
  2018.
\newblock \emph{URL https://github. com/uhlerlab/causaldag}.

\bibitem[Squires \& Uhler(2022)Squires and Uhler]{squires2022causal}
Squires, C. and Uhler, C.
\newblock Causal structure learning: a combinatorial perspective.
\newblock \emph{Foundations of Computational Mathematics}, pp.\  1--35, 2022.

\bibitem[Squires et~al.(2020)Squires, Magliacane, Greenewald, Katz, Kocaoglu,
  and Shanmugam]{squires2020active}
Squires, C., Magliacane, S., Greenewald, K., Katz, D., Kocaoglu, M., and
  Shanmugam, K.
\newblock {Active Structure Learning of Causal DAGs via Directed Clique Trees}.
\newblock \emph{Advances in Neural Information Processing Systems},
  33:\penalty0 21500--21511, 2020.

\bibitem[Sverchkov \& Craven(2017)Sverchkov and Craven]{sverchkov2017review}
Sverchkov, Y. and Craven, M.
\newblock {A review of active learning approaches to experimental design for
  uncovering biological networks}.
\newblock \emph{PLoS computational biology}, 13\penalty0 (6):\penalty0
  e1005466, 2017.

\bibitem[Tian(2016)]{tian2016bayesian}
Tian, T.
\newblock {Bayesian Computation Methods for Inferring Regulatory Network Models
  Using Biomedical Data}.
\newblock \emph{Translational Biomedical Informatics: A Precision Medicine
  Perspective}, pp.\  289--307, 2016.

\bibitem[Verma \& Pearl(1990)Verma and Pearl]{verma1990equivalence}
Verma, T. and Pearl, J.
\newblock Equivalence and synthesis of causal models.
\newblock In \emph{Proceedings of the Sixth Annual Conference on Uncertainty in
  Artificial Intelligence}, pp.\  255--270, 1990.

\bibitem[Woodward(2005)]{woodward2005making}
Woodward, J.
\newblock \emph{{Making Things Happen: A theory of Causal Explanation}}.
\newblock Oxford university press, 2005.

\bibitem[Xie \& Geng(2008)Xie and Geng]{xie2008recursive}
Xie, X. and Geng, Z.
\newblock A recursive method for structural learning of directed acyclic
  graphs.
\newblock \emph{The Journal of Machine Learning Research}, 9:\penalty0
  459--483, 2008.

\bibitem[Zhang et~al.(2024)Zhang, Shiragur, and Uhler]{zhang2024membership}
Zhang, J., Shiragur, K., and Uhler, C.
\newblock Membership testing in markov equivalence classes via independence
  queries.
\newblock In \emph{International Conference on Artificial Intelligence and
  Statistics}, pp.\  3925--3933. PMLR, 2024.

\end{thebibliography}

% \clearpage
% \tableofcontents
% \addtocontents{toc}{\protect\setcounter{tocdepth}{2}}
\newpage
\appendix
\section{Useful Lemmas}\label{app:a}

\subsection{Proof of Lemma~\ref{ccpg:prop}}
\ccpgprop*
\begin{proof}

Note that if $|V_i|=1$, then CCPG is actually $\cG$. If $|V_i|>1$, then it means that there is a covered edge in this subset that is not intervened. By Proposition~\ref{prop:1}, this is impossible when $\cI$ is a verifying intervention set.
\end{proof}

\subsection{Proof of Lemma~\ref{lemma:v-generalization}}

\proxyv*
\begin{proof}
    Let $P$ be an active path (that carries dependency) between $u$ and $v$ when conditioned on $S \cup \{z \}$. Since $u \ci v|S$ and $u \not \CI v|S \cup \{z \}$, we get that there exist a set of vertices $w_1\dots w_{k}$ that are colliders on $P$ and satisfy: $\Des[w_i] \cap S=\varnothing$ and $z \in \Des[w_i]$. 
    % In the remainder of the proof, we show that $u ,v \not \in \Des[w_i]$ for some $i$ and since $\Des[z] \subseteq \Des[w_i]$ for all $i$, we immediately get that, $u, v \not \in \Des[z]$.
    
    Consider the path $P$ and note that it takes the form $P=u-\dots\to w_1 \gets \dots  \to w_i \gets \dots \to w_{k} \gets \dots - v$. 
    Define $P_1= u-\dots \to w_1$ and $P_2=w_{k} \gets \dots - v$. Since all the colliders on paths $P_1, P_2$ are in or have their descendants in $S$ and all non-colliders do not belong to $S$, we have that $P_1,P_2$ are active given $S$.
    We prove our lemma using the proof by contradiction strategy. For contradiction, let us assume that one of the vertices in $\{ u,v\}$ belong to the set $\Des[z]$ and without loss of generality let that vertex be $u$. Then since $z\in\Des[w_i]$ for all $i$, there must be $u\in\Des[w_i]$ for all $i$.

    Since $u \in \Des[w_{k}]$, let $Q$ be the directed path in the graph that connects $w_k$ to $u$. Note that all the vertices in path $Q$ are all descendants of $w_k$ and they do not belong to the set $S$ (because $\Des[w_k]\cap S=\varnothing$). Now consider the path $(P_2,Q)$ that connects vertices $v$ and $u$ and note that the vertex $w$ is a non-collider on the new path $(P_2,Q)$. Since $P_2$ and $Q$ are active paths given $S$ and since $w \not \in S$, we immediately get that the path $(P_2,Q)$ is an active path given $S$, which further implies that $u \not \ci v|S$; a contradiction and we conclude the proof.
\end{proof}

\subsection{Proof of Lemma~\ref{lem:general-meekrule-1}}

\proxym*
\begin{proof}
    Suppose on the contrary that there is $u,v,w$ such that  $u \in S$, $v, w \not \in S$ and $u \not \CI v |S$ and $u \CI w|S\ \cup \{ v\}$ and $v\in\Des[w]$.

    Since $u\not\CI v\mid S$, let $P$ be the active path connecting $u$ and $v$ given $S$. As $u\in S$ and $v\not\in S$, there is an edge $u'- v'$ on $P$ such that $u'\in S$ but $v'\not\in S$. 
    
    Now denote the vertex on $P$ that is immediate next to $v$ as $x$. If $x\leftarrow v$, then there must be a collider on $P$ between $u'$ and $v$ as $S$ is a prefix subset where $u'\in S$ and $v\not\in S$. Let $y$ be the collider on $P$ between $u'$ and $v$ that is closest to $u'$, then $u'\to v'\dots\to y$. Since $P$ is active given $S$, the collider $y\in \Anc[S]$. However $v'\in \Anc[y]\subseteq\Anc[S]$ and $v'\notin S$, a contradiction to $S$ being prefix. Thus there must be $x\to v$ on $P$.

    Since we assumed on the contrary that $v\in\Des[w]$, we can consider the path $Q$ joined by $P$ and the directed path from $w$ to $v$. Compared to $P$, the path $P$ has one additional collider $v$, and has a few additional colliders that lie between $v$ and $w$ which are not in $S$ (as they are all descendants of $w$ and $w\not\in S$). Therefore $Q$ is active given $S\cup\{v\}$. This means $u\not\CI w \mid S\cup \{v\}$, a contradiction. 
\end{proof}

\subsection{Additional Lemma}
In addition, we will make use of the following lemma.

\begin{lemma}\label{lem:general-local-markov}
    For disjoint sets $A,B,C\subset V$, if $\Pa(A)\subset C\cup A$, $\Pa(B)\subset C\cup B$ and $\Des(B)\cap C=\varnothing$, then $A\CI B\mid C$.
\end{lemma}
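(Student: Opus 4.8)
The plan is to prove the contrapositive statement about d-separation: show that $C$ d-separates $A$ from $B$ in $\cG$, which by the global Markov property immediately yields $A\CI B\mid C$. So suppose, for contradiction, that there is a path between some $a\in A$ and some $b\in B$ that is active given $C$, and among all such active paths pick one, $P: a=x_0-x_1-\dots-x_m=b$, with the minimum number of vertices. First I would record two routine reductions. (i) Any sub-path of an active path obtained by deleting a prefix or a suffix of the vertex list is still active given $C$: every surviving internal vertex keeps its two incident edges (so its collider/non-collider status is unchanged) and the deleted endpoint is no longer tested; hence by minimality no internal vertex $x_j$ with $0<j<m$ lies in $A\cup B$. (ii) $m\geq 2$: we cannot have $m=0$ since $A\cap B=\varnothing$, and $m=1$ is impossible because a single edge $a\to b$ would force $a\in\Pa(B)\subseteq C\cup A^c$ contradicting $a\in A$, while $a\gets b$ would force $b\in\Pa(A)\subseteq C\cup B^c$ contradicting $b\in B$ (here using $\Pa(A)\subseteq C\cup A$, $\Pa(B)\subseteq C\cup B$ and disjointness).

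The heart of the argument is a case analysis on the orientation of the edge incident to $b$. If $x_{m-1}\to b$, then $x_{m-1}\in\Pa(B)\subseteq C\cup B$, and since $x_{m-1}$ is internal it is not in $B$, so $x_{m-1}\in C$; but the edge towards $b$ points out of $x_{m-1}$, so $x_{m-1}$ is a non-collider on $P$, and a non-collider lying in $C$ makes $P$ inactive — contradiction. If instead $b\to x_{m-1}$, I would trace the maximal directed sub-path of $P$ emanating from $b$, say $b=x_m\to x_{m-1}\to\dots\to x_i$, where maximality means either $i=0$ or the edge between $x_i$ and $x_{i-1}$ is oriented $x_{i-1}\to x_i$. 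Every $x_j$ with $i\leq j\leq m-1$ is then a descendant of $b$, hence lies in $\Des(B)$, hence (by $\Des(B)\cap C=\varnothing$) is not in $C$. Now split further: if $i=0$, then $x_1\to x_0=a$ gives $x_1\in\Pa(A)\subseteq C\cup A$, and $x_1$ is internal so $x_1\notin A$, forcing $x_1\in C$, which contradicts $x_1\in\Des(B)$; if $i>0$, then by maximality $x_{i-1}\to x_i$ while $x_{i+1}\to x_i$ comes from the directed chain, so $x_i$ is a collider on $P$, and since $x_i\in\Des(B)$ we have $\Des[x_i]\subseteq\Des(B)$, so $\Des[x_i]\cap C=\varnothing$, i.e.\ the collider $x_i$ has no descendant in $C$ — again $P$ is inactive, contradiction.

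Every branch ends in a contradiction, so no active path exists, $C$ d-separates $A$ and $B$, and $A\CI B\mid C$ follows. I expect the only delicate step to be the second main case: setting up the maximal directed sub-path leaving $b$ correctly and checking that its terminal vertex is a genuine collider on $P$ (in particular that $1\leq i\leq m-1$, so there are no off-by-one issues when $m=2$). The three hypotheses are used in exactly the expected places — $\Pa(B)\subseteq C\cup B$ kills the in-edge case at $b$, $\Pa(A)\subseteq C\cup A$ kills the branch where the directed sub-path reaches $a$, and $\Des(B)\cap C=\varnothing$ kills the branch where it terminates at a collider.
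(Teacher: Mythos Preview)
Your proof is correct, and it takes a genuinely different route from the paper's. The paper argues probabilistically: it orders $B$ topologically as $b_1,\dots,b_m$, invokes the local Markov property to obtain $A\CI b_i\mid C\cup\{b_1,\dots,b_{i-1}\}$ for each $i$, and then combines these via the chain rule $P(B\mid C,A)=\prod_i P(b_i\mid C,A,b_1,\dots,b_{i-1})=\prod_i P(b_i\mid C,b_1,\dots,b_{i-1})=P(B\mid C)$. That argument is short but leans on the local Markov property as a black box and leaves implicit the check that $A$ consists of non-descendants of each $b_i$ (which itself needs $\Pa(A)\subseteq C\cup A$ together with $\Des(B)\cap C=\varnothing$). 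Your approach instead establishes d-separation directly by a minimal-active-path contradiction; it is longer but entirely self-contained, and it makes the role of each hypothesis explicit: $\Pa(B)\subseteq C\cup B$ handles the in-edge at $b$, $\Pa(A)\subseteq C\cup A$ handles the sub-case where the directed chain from $b$ reaches $a$, and $\Des(B)\cap C=\varnothing$ kills the collider sub-case. Both arguments are valid; yours is closer in spirit to the d-separation reasoning used elsewhere in the paper.
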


\begin{proof}
    Assume without loss of generality that the vertices in $B$ have the following topological order $b_1,\dots,b_m$.
    Then for any $i\in[m]$, by the local Markov property \cite{spirtes1989causality}, we have $A\CI b_i\mid C\cup \{b_1,\dots,b_{i-1}\}$.
    Therefore using Bayes rule, we have
    \begin{align*}
        P(B\mid C,A) & = P(b_1\mid C,A)P(b_2\mid C,A,b_1)\dots P(b_m\mid C,A,b_1,\dots,b_{m-1}) \\
        & = P(b_1\mid C,A)P(b_2\mid C,b_1)\dots P(b_m\mid C,b_1,\dots,b_{m-1}) \\
        & = P(B\mid C),
    \end{align*}
    and thus $A\CI B\mid C$, which completes the proof.
\end{proof}

\section{Missing Proofs of Prefix Vertex Set}\label{app:b}

\subsection{Proof of Lemma~\ref{lem:DS-ES-property}}

We restate the lemma below.
\lemDSESproperty*

\subsubsection{Type-I Set $D_S$}
\begin{proof}[Proof of Lemma~\ref{lem:DS-ES-property} for $D_S$]
We first show that if $w\in D_S$, then it must hold that $\Des[w]\subset D_S$: since $w\in D_{S}$, there exists a vertex $u\in V$ and $v\in \bar{S}$ such that $u \CI v\mid S$ and $u \not\CI v\mid S\cup\{w\}$. We now show that for any $x\in \Des[w]$, we have $u \not\CI v\mid S\cup\{x\}$.

Since $u \not\CI v\mid S\cup\{w\}$, there is a path $P$ from $u$ to $v$ that is active given $S\cup\{w\}$. Therefore, all non-colliders on $P$ are not in $S\cup\{w\}$ and all colliders on $P$ are in $\Anc[S\cup\{w\}]$. Since $x\in \Des[w]$, all colliders on $P$ are in $\Anc[S\cup\{x\}]$. If all non-colliders are not in $S\cup\{x\}$, then $P$ is an active path from $u$ to $v$ given $S\cup\{x\}$, and thus $u \not\CI v\mid S\cup\{x\}$. Otherwise there is a non-collider on $P$ that is $x$.

Since $u\CI v\mid S$, the path $P$ is inactive given $S$. From above we know that all non-colliders on $P$ are not in $S$. Therefore there exists a collider on $P$ that is not in $\Anc[S]$. Suppose the leftmost and rightmost such colliders are $k,k'$ (it is possible that $k=k'$), then $k,k'$ must be in $\Anc[S\cup\{w\}]\setminus\Anc[S]\subseteq\Anc[w]\subset\Anc[x]$. Consider the path $Q$ in the graph by cutting out the parts between $k,x$ (and $k',x$) on $P$ and replacing them with directed edges from $k$ to $x$ (and from $k'$ to $x$). Compared to $P$, the additional non-colliders on $Q$ are all on the directed path from $k$ to $x$ (or $k'$ to $x$). They are not in $S$ since $k,k'\notin \Anc[S]$, and thus $Q$ has no non-colliders in $S$. 
    
Compared to $P$, there is no collider on $P$ that is not in $\Anc[S]$ and is still on $Q$ by the fact that $k,k'$ are leftmost and rightmost colliders on $P$ that are not in $\Anc[S]$. Therefore, $x$ must be a collider on $Q$, or else $Q$ is active given $S$ and $u\not\CI v\mid S$. Therefore all non-colliders on $Q$ are not in $S\cup\{x\}$. Every collider on $Q$ is either $x$ or a collider of $P$, which is in $\Anc[S\cup\{x\}]$. Thus $Q$ is active given $S\cup\{x\}$. Therefore $u\not\CI v\mid S\cup \{x\}$.

Next we show that $D_S\cap \src(\bar{S})=\varnothing$: for contradiction assume that there exists a vertex $a \in \src(\bar{S})$ such that $a \not \in \bar{S} \backslash D_{S}$, that is $a \in \src(\bar{S})$ and for some vertex $u\in V, v \in \bar{S}$, $v \CI u|S \text{ and } v \not \CI u|S \cup \{a\}$.

Since $v \CI u|S \text{ and } v \not \CI u|S \cup \{a\}$, there exists a path $P$ between $v$ and $u$ which is inactive when conditioned on $S$ but is active upon conditioning on $S \cup \{ a\}$. Moreover, this path contains a vertex $b$ that is a collider on $P$ and satisfies: $a \in \Des[b]$ and $\Des[b] \cap S=\varnothing$. Since $\Des[b] \cap S=\varnothing$, we have that $b \in \bar{S}$. Furthermore, since $b \in \bar{S}, a \in \src(\bar{S})$ and $a \in \Des[b]$, this implies that $b=a$. Therefore, the path $P$ takes the form: $P=v \dots \to a \gets \dots u$. All the colliders on the path $P$ either belong to or have descendant in the set $S \cup \{ a\}$. 

Now consider the path $v \dots \to a$ and note that it is active given $S$. Let $k$ be the number of vertices between $v$ and $a$ on this path $v-v_1\dots v_k\to a$. It is immediate that $v_{k} \in S$ since $a \in \src(\bar{S})$. However, since $v_k \in S$, and since we condition on the set $S$, this should be a collider for the path $Q$ to be active, which is not possible. Thus we get a contradiction, which completes the proof.
\end{proof}

\subsubsection{Type-II Set $E_S$}

\begin{proof}[Proof of Lemma~\ref{lem:DS-ES-property} for $E_S$]
We first show that if $w\in E_S$, it must hold that $y\in E_S$ for any $y\in\Des(w)$: since $w\in E_S$, there is $u\in S$ and $v,v'\in \bar{S}\setminus D_S$, such that $u\CI v'\mid S\cup\{v\}$ and $u\not\CI v'\mid S\cup\{v,w\}$. We will show $u\not\CI v'\mid S\cup\{v,y\}$. 

Since $u\CI v'\mid S\cup\{v\}$ and $u\not\CI v'\mid S\cup\{v,w\}$, by Lemma~\ref{lemma:v-generalization} (note that the set ``$S$'' in the exposition of Lemma~\ref{lemma:v-generalization} can be an arbitrary subset), we know  that $v'\not\in\Des[w]$. Assume on the contrary that $u\CI v'\mid S\cup\{v,y\}$. Since $u\CI v'\mid S\cup\{v\}$ and $u\not\CI v'\mid S\cup\{v,w\}$, there is a path $P$ between $u,v'$ that is active given $S\cup\{v,w\}$ but inactive given $S\cup\{v\}$ or $S\cup\{v,y\}$. 
This means that (1) $y$ is a non-collider on $P$, (2) all colliders on $P$ are in $\Anc[S\cup\{v,w\}]$, (3) $P$ has a collider that is in $\Anc[w]\setminus\Anc[S\cup \{v\}]$.

Note that $v'\not\in\Des[y]$ since $y\in\Des[w]$ but $v'\not\in\Des[w]$.
In addition, we also have $u\not\in\Des[w]$ since $u$ is in the prefix vertex set $S$ but $w\in\bar{S}$. Therefore $y$ being a non-collider on $P$ means there is a collider $x$ on $P$ such that $y\in\Anc(x)$. Note that this collider has to be in $\Anc[S\cup\{v,w\}]$. However, since $y\not\in \Anc[S\cup\{w\}]$ and $y\in\Anc(x)$, it must hold that $x\in \Anc[v]$, which means $y\in\Anc(v)$. Since $y\in\Des(w)$, this means $v\in\Des(w)$, which makes $\Anc[w]\setminus\Anc[S\cup \{v\}]=\varnothing$. This violates (3) above, a contradiction.

Next we show that $E_S\cap \src(\bar{S})=\varnothing$: if there exists $w\in \Bar{S} \backslash D_{S}$ such that $w\in E_S\cap \src(\bar{S})$. Then $u \CI v'\mid S \cup \{v\} \text{ and } u \not \CI v' \mid S \cup \{v,w\}$ for some $u\in S$ and $v,v'\in\bar{S}\setminus D_S$. Thus there is a path $P$ connecting $u$ and $v'$ such that $P$ is active given $S\cup\{v,w\}$ but inactive given $S\cup\{v\}$. 

Therefore all non-colliders on $P$ are not in $S\cup\{v,w\}$, and there is a collider $x$ on $P$ such that $x\in\Anc[w]$. Note that since $w\in\src(\bar{S})$, it must hold that $x\in S$. Since $x$ is a collider, there is $y\neq u$ such that $y\to x$ on $P$. Since $S$ is a prefix and $x\in S$, we have $y\in S$. However, $y$ is a non-collider on $P$, which contradicts $P$ active given $S\cup\{v,w\}$ and completes the proof.
\end{proof}

\subsection{Proof of Lemma~\ref{lem:FS-property}}

\lemFSproperty*

\begin{proof}
We first show that if $w\in F_S$, then for any $y\in\Des(w)$, we have $y\in E_S\cup F_S$. Since $w\in F_S$, we have  $u\not\CI v\mid S, u\CI w\mid S\cup\{v\}$, and $v\not\CI w\mid S$ for some $u\in S$ and $v\in\bar{S}\setminus D_S$.

Since $v\not\CI w\mid S$, there is an active path between $v,w$ given $S$. Consider extending this path by the directed path from $w$ to $y$. Note that none of vertices on the directed path from $w$ to $y$ are in $S$, since $S$ is prefixed and $w\not\in S$. Therefore, this extended path is also active given $S$, which means $v\not\CI y\mid S$. 

Thus, if $y\not\in F_S$, then it must hold that $u\not\CI y\mid S\cup\{v\}$. This means there is an active path, denoted by $P$, between $u$ and $y$ given $S\cup\{v\}$. Consider extending this path by the directed path from $w$ to $y$, denoted as $Q$ (which exists in the graph). Compared to $P$, the additional non-colliders on $Q$ are not in $S\cup\{v\}$: for $S$, this is because $S$ is prefix, $w\not\in S$, and all additional non-colliders are descendants of $w$; for $v$, this is because $v\not\in\Des(w)$ (by Lemma~\ref{lem:general-meekrule-1}, $u\not\CI v\mid S$ and $u\CI w\mid S\cup\{v\}$). Thus $Q$ is active given $S\cup\{v\}$ unless $y$ is a collider on $Q$. Since $u\CI w\mid S\cup\{v\}$, the path $Q$ must be inactive given $S\cup\{v\}$, which means $y$ is a collider on $Q$. This means $Q$ is active given $S\cup\{v,y\}$. Therefore, $u\not\CI w\mid S\cup \{v,y\}$. Together with $u\CI w\mid S\cup \{v\}$, we have $y\in E_S$.

Next we show that $F_S\cap \src(\bar{S})=\varnothing$. Assume on the contrary that $w\in F_S\cap \src(\bar{S})$. Since $w\in F_S$, we have $u \not \CI v|S, v \not \ci w|S \text{ and } u \CI w|S \cup \{v\}$ for some $u\in S$ and $v\in\bar{S}\setminus D_S$. By Lemma~\ref{lem:general-meekrule-1}, we have $v\notin\Des[w]$. However, since $v\not\CI w\mid S$, there must be an active path $P$ between $v$ and $w$ given $S$. This path cannot have any vertex in $S$; otherwise consider the first vertex that is in $S$; since $v\not\in S$ and $S$ is prefix, such vertex must be a non-collider which would make $P$ inactive given $S$. Therefore $P$ is fully in $\bar{S}$. Since $w\in\src(\bar{S})$, we must have $P:v-\dots\leftarrow w$. However since $v\not\in \Des[w]$, there must be an edge $\rightarrow$ on $P$. This means that there must be a collider on $P$. Since this collider is not in $S$, it makes $P$ inactive given $S$, a contradiction.
\end{proof}

\subsection{Remarks}

The above proofs suffice as intermediate results to show Theorem~\ref{thm:prefix-subset}, which we proved in Section~\ref{sec:4-2}.

Regarding Lemma~\ref{lem:main-lem} in Section~\ref{sec:4-3}, we will prove it in Appendix~\ref{app:c} after proving Lemma~\ref{lem:one-node-anc-src}, since it depends on Lemma~\ref{lem:one-node-anc-src}.

\section{Missing Proofs of Causally Consistent Partition Graph Representations}\label{app:c}

\subsection{Proof of Lemma~\ref{lem:one-node-anc-src}}

To prove Lemma~\ref{lem:one-node-anc-src}, we will make use of the following lemma. 

\begin{lemma}\label{lem:prefix_subset_properties}
    Let $S\subseteq V$ be a prefix subset, then the following statements hold:
    \begin{itemize}
        \item $u \CI v~|~S \text{ for all }u,v \in \src(\bar{S})$.
        \item $S \cup U$ is a prefix subset for any $U \subseteq \src(\bar{S})$.
    \end{itemize}
\end{lemma}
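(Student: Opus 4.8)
The plan is to prove both bullets by unwinding the definitions, using Lemma~\ref{lem:general-local-markov} for the conditional-independence claim and a direct case check for the closure claim.

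For the first bullet, fix distinct $u,v\in\src(\bar{S})$. The key observation is that membership in $\src(\bar{S})$ pushes all ancestors into $S$: by definition $\Anc(v)\cap\bar{S}=\varnothing$, so $\Pa(v)\subseteq\Anc(v)\subseteq S$, and likewise $\Pa(u)\subseteq S$. A second observation, using only that $S$ is prefix and $v\in\bar{S}$, is that $\Des(v)\cap S=\varnothing$: any $w\in\Des(v)$ has $v\in\Anc[w]\cap\bar{S}$, so $w$ cannot lie in the prefix set $S$. I would then apply Lemma~\ref{lem:general-local-markov} with $A=\{u\}$, $B=\{v\}$, $C=S$; these sets are disjoint since $u\neq v$ and $u,v\in\bar{S}$, and the three hypotheses $\Pa(A)\subseteq C\cup A$, $\Pa(B)\subseteq C\cup B$, $\Des(B)\cap C=\varnothing$ are exactly the observations above. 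The lemma yields $u\CI v\mid S$.

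For the second bullet, fix $U\subseteq\src(\bar{S})$ and note $\overline{S\cup U}=\bar{S}\setminus U\subseteq\bar{S}$. I would verify the prefix condition for $S\cup U$ by cases on $w\in S\cup U$. If $w\in S$, then $\Anc[w]\cap\bar{S}=\varnothing$ since $S$ is prefix, and since $\overline{S\cup U}\subseteq\bar{S}$ this gives $\Anc[w]\cap\overline{S\cup U}=\varnothing$. If $w\in U\subseteq\src(\bar{S})$, then $\Anc(w)\cap\bar{S}=\varnothing$, i.e.\ $\Anc(w)\subseteq S$, hence $\Anc[w]\subseteq S\cup\{w\}\subseteq S\cup U$, so again $\Anc[w]\cap\overline{S\cup U}=\varnothing$. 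This establishes that $S\cup U$ is a prefix subset.

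There is no substantial obstacle here; the only point requiring care is checking that every hypothesis of Lemma~\ref{lem:general-local-markov} genuinely holds—in particular the disjointness of $A,B,C$ and the descendant condition $\Des(B)\cap C=\varnothing$—which is precisely what the prefix structure of $S$ supplies.
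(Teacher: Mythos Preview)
Your proof is correct. For the second bullet, your direct case check and the paper's contradiction argument are essentially the same unpacking of definitions, just phrased contrapositively.

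For the first bullet you take a genuinely different route. The paper argues by contradiction via d-separation: it assumes an active path $P=u-u_1-\dots-u_k-v$ given $S$ and does a case analysis on the orientation of $u-u_1$, showing either that $P$ is a fully directed path into $v$ (contradicting $v\in\src(\bar S)$) or that $u_1\in S$ is a non-collider on $P$ (contradicting activity). You instead verify the hypotheses of Lemma~\ref{lem:general-local-markov} and invoke it once. Your route is shorter and more modular, leaning on a lemma the paper already proves; the paper's route is self-contained and illustrates directly why the prefix structure blocks every path. Both are perfectly valid.
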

\begin{proof}
    We first prove condition one. For contradiction, we assume that $u \not \ci v|S$, which implies that there exists an active path between $u$ and $v$ when conditioned on $S$. Let $P=u-u_1\dots u_k-v$ be the path and $u_1,\dots, u_k$ be the vertices along the path. Note that $k>0$ as $u$ and $v$ are not connected; because an edge between $u$ and $v$ would mean that one of these vertices is not a source node in $\bar{S}$. 

    Consider $u_1$ and note that there are two possibilities $u \to u_1$ or $u \gets u_1$. We start with the first case $u \to u_1$. Since $S$ is a prefix subset and $u \in \bar{S}$, $u \to u_1$ implies that $u_1 \in \bar{S}$. Since no vertex in $\bar{S}$ is conditioned upon, we have that the vertex $u_1$ is not a collider on the path $P$ and we have that the edge $u_1-u_2$ is directed as $u_1\to u_2$. Repeating the same argument for $u_2$ and all the other vertices in the path, we see that the path $P$ takes the form $P=u\to u_1\to \dots \to u_k\to v$. The previous argument implies that $v \in \Des[u]$ and therefore does not belong to $\src(\bar{S})$, which is a contradiction to our assumption that $u,v \in \src(\bar{S})$.

    Now consider the other case where $u \gets u_1$. Note that since $u \in \src(\bar{S})$, it is immediate that $u_1 \in S$. Furthermore, irrespective of the orientation between $u_1$ and vertex $u_2$, it holds that vertex $u_1$ is a non-collider on the path $P$. Moreover, since $P$ is an active path, $u_1$ should not be conditioned upon. However, since we condition on $S$ and as $u_1 \in S$, we have a contradiction. 

    In the above case analysis, we showed that there does not exist an active path between $u$ and $v$ when conditioned on $S$, which implies that $u \ci v|S$ and we conclude the proof for condition one.
    
    In the remainder of the proof, we focus our attention on condition two. Consider any subset $U \subseteq \src(\bar{S})$. For contradiction assume that $S \cup U$ is not a prefix subset, which implies that there exists a vertex $v \in V \backslash (S \cup U)$ such that $v \in \Anc[u]$ for some vertex $u \in S\cup U$. Since $S$ is a prefix subset, it is immediate that $u \not \in S$. Therefore, the only case is that $u \in U$. Note that both $u$ and $v$ belong to the set $\bar{S}$ and $v \in \Anc(u)$; both these expressions combined contradict the fact that $u \in \src(\bar{S})$. Therefore, it should be the case that $S \cup U$ is a prefix subset and we conclude the proof.
\end{proof}

Now we prove Lemma~\ref{lem:one-node-anc-src} restated below.

\lemonenodeancsrc*

\begin{proof}
We first show that for any $w\in\bar{S}\setminus D_S$, we have $|\Anc[w]\cap \src(\bar{S})|=1$. Since $w\in\bar{S}$, it must hold that $|\Anc[w] \cap \src(\bar{S})| \geq 1$. Assume now that there is $w\in\bar{S}$ such that $|\Anc[w] \cap \src(\bar{S})| \geq 2$. Let $v_1,v_2\in \Anc[w] \cap \src(\bar{S})$ such that $v_1\neq v_2$. By Lemma~\ref{lem:prefix_subset_properties}, we have $v_1\CI v_2\mid S$. Now consider the path $P$ by stitching together the two directed paths, one from $v_1$ to $w$ and another from $v_2$ to $w$. All non-colliders on $P$ are not in $S$ since $S$ is a prefix subset. The only collider on $P$ is $w$. Therefore $P$ is active given $S\cup\{w\}$. We have $v_1\not\CI v_2\mid S\cup\{w\}$, which means $w\in D_{v_1,S}\subset D_S$, a contradiction to $w\not\in D_S$.

Next we show that for any other $w'\in \bar{S}\setminus D_S$, we have $w\not\CI w'\mid S$ if and only if $\Anc[w]\cap \src(\bar{S})=\Anc[w']\cap \src(\bar{S})$. For the if direction, denote $s=\Anc[w]\cap \src(\bar{S})=\Anc[w']\cap \src(\bar{S})$ and consider the trek by joining the two directed paths from $s$ to $w$ and from $s$ to $w'$. Since this path has no colliders and it is fully in $\bar{S}$ (since $S$ is a prefix and $s\in\bar{S}$), it is active given $S$. Thus $w\not\CI w'\mid S$. For the only if direction, assume on the contrary that $w\not\CI w'\mid S$ but $\Anc[w]\cap \src(\bar{S})\neq\Anc[w']\cap \src(\bar{S})$. Let $P$ be the active path between $w,w'$ given $S$. Then all non-colliders on $P$ are in $\bar{S}$ and all colliders on $P$ are in $\Anc[S]=S$. This means that $P$ has no colliders; otherwise this collider is a child of the vertex next to it, which is a non-collider that is in $\bar{S}$. This means there is an edge from $\bar{S}$ to $S$, which is a contradiction with $S$ being prefix. Since $P$ has no colliders, it must satisfy $P\cap\Anc[w]\cap\Anc[w']\neq\varnothing$. However, since $\Anc[w]\cap \src(\bar{S})\neq\Anc[w']\cap \src(\bar{S})$, $|\Anc[w]\cap \src(\bar{S})|=1$ and $|\Anc[w']\cap \src(\bar{S})|=1$, $P\cap\Anc[w]\cap\Anc[w']\neq\varnothing\in \bar{S}$. This means there is a non-collider on $P$ that is in $S$, which would make it inactive given $S$, a contradiction.
\end{proof}

\subsection{Proof of Lemma~\ref{lem:main-lem}}

To prove Lemma~\ref{lem:main-lem}, we will make use of the following results.

\begin{lemma}\label{lem:one-anc-src-blocks-almost-all}
    Let $S$ be a prefix subset and $w\in \bar{S}\setminus (D_S\cup \src(\bar{S}))$. By Lemma~\ref{lem:one-node-anc-src}, let $\Anc[w]\cap \src(\bar{S})=\{v\}$. Then for any $u\in S$, if there is no directed path from $u$ to $w$ that does not intersect $\src(\bar{S})$, then $$u\CI w\mid S\cup\{v\}~.$$
\end{lemma}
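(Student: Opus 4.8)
The plan is to argue by contradiction through d-separation, after a clean restatement of the directed-path assumption. First I would note that every vertex on a directed path from $u$ to $w$ lies in $\Anc[w]$, so by \Cref{lem:one-node-anc-src} such a path meets $\src(\bar S)$ if and only if it passes through $v$ (the unique element of $\Anc[w]\cap\src(\bar S)$). Thus the hypothesis is equivalent to: \emph{every} directed path from $u$ to $w$ passes through $v$; in particular there is \emph{no} directed path from $u$ to $w$ avoiding $v$. Note $u\neq v$ (as $u\in S$, $v\in\bar S$) and $w\neq v$ (as $w\notin\src(\bar S)$ while $v\in\src(\bar S)$); it is the displayed independence that I will contradict.

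\textbf{Structure of an active path.} Suppose $u\not\CI w\mid S\cup\{v\}$ and let $P$ be a path from $u$ to $w$ active given $S\cup\{v\}$. Since $S$ is a prefix set, $\Anc[S]=S$ and $\Anc[v]\subseteq S\cup\{v\}$ (because $v\in\src(\bar S)$ forces $\Anc(v)\subseteq S$), so $\Anc[S\cup\{v\}]=S\cup\{v\}$; hence every collider of $P$ lies in $S\cup\{v\}$ and every non-collider of $P$ lies outside $S\cup\{v\}$. Next I would use the prefix property to show that the only vertex of $P$ in $S$ is its endpoint $u$, and that the edge of $P$ at $u$ is $u\to p_1$: every edge between $S$ and $\bar S$ points out of $S$, so a maximal run of $P$ inside $S$ that is not just $\{u\}$ would contain a vertex of $P$ that is a non-collider and lies in $S$, contradicting activeness. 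Consequently all internal vertices of $P$ lie in $\bar S$, and since colliders of $P$ must lie in $S\cup\{v\}$, the \emph{only} possible collider on $P$ is $v$.

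\textbf{Case analysis.} If $P$ has no collider, then $P$ is a collider-free path whose first edge points out of $u$, so $P$ must be a directed path $u\to p_1\to\cdots\to w$; its internal vertices, being non-colliders, avoid $S\cup\{v\}$, so $P$ is a directed path from $u$ to $w$ missing $v$ — contradicting the reformulated hypothesis. If instead $P$ has exactly one collider, namely $v$, consider the subpath $P'$ of $P$ from $v$ to $w$: it is collider-free and its edge at $v$ points into $v$. A collider-free path has the shape ``up to a peak, then down'', and $P'$ starts by going ``up'' into $v$; it cannot be entirely ``up'' (that would make $w$ a proper ancestor of $v$, impossible since $v\in\src(\bar S)$ and $w\in\bar S$), so $P'$ has a genuine peak vertex $x$ with a directed path $x\to\cdots\to v$. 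Then $x\in\Anc(v)$ and, being internal to $P$, $x\in\bar S$, contradicting $\Anc(v)\cap\bar S=\varnothing$. Both cases are absurd, so $u\CI w\mid S\cup\{v\}$.

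\textbf{Main obstacle.} The delicate step is the middle one: using the prefix property to establish that $P$ leaves $S$ immediately along an outgoing edge of $u$ and never returns to $S$. This is exactly what collapses the argument to ``$P$ has at most one collider, and it must be $v$'', after which the collider-free decomposition $\gets^{c}\to^{d}$ disposes of both cases. The rest is routine path bookkeeping.
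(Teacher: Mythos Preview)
Your proof is correct and takes essentially the same approach as the paper: both argue by contradiction via an active path $P$, use the prefix property to force $P\cap S=\{u\}$ with the first edge pointing out of $u$, and then reduce to $P$ being a directed path from $u$ to $w$ that avoids $\src(\bar S)$. The only cosmetic difference is that the paper disposes of your Case~2 (the unique possible collider being $v$) in a single terse clause (``this is impossible since $v\in\src(\bar S)$ and $S\cup\{v\}$ is prefixed''), whereas you spell out the peak-vertex contradiction explicitly; your treatment is a bit more careful there but not materially different.
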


\begin{proof}
    Assume on the contrary that $u\not\CI w\mid S\cup \{v\}$. Let $P$ be an active path from $u$ to $w$ conditioned on $v$. If $P\cap S\neq \{u\}$, then consider the last node on $P$ that is in $S$. Since $w\not\in S$, this node must be pointing into a node in $\bar{S}$ on $P$, which makes this node a non-collider. However, this node belongs to $S$, which means $P$ is inactive given $S\cup\{v\}$, and thus $P\cap S=\{u\}$. 

    There is also no collider on $P$. Otherwise consider the first collider; it will be in $\bar{S}$ since $P\cap S=\{u\}$. However, since $P$ is active, it will be in $\Anc[S\cup\{v\}]$. This is impossible since $v\in\src(\bar{S})$ and from Lemma~\ref{lem:prefix_subset_properties}, we know that $S\cup \{v\}$ is prefixed. Therefore, $P$ must be a directed path from $u$ to $w$, where the node $x$ adjacent to $u$ is in $\bar{S}$. This means that $x\in \Anc[w]$. If $x\not\in \src(\bar{S})$, then $P$ is a directed path from $u$ to $v$ that does not intersect $\src(\bar{S})$. If $x\in \src(\bar{S})$, since $\Anc[w]\cap \src(\bar{S})=\{v\}$, then it must hold that $x=v$. This means $P$ is inactive given $S\cup\{v\}$, a contradiction.
\end{proof}

\begin{lemma}\label{lem:ind-or-des}
Let $S$ be a prefix subset and $v\in\src(\bar{S})$. Then for any $w\in \bar{S}$, either $v\CI w\mid S$ or $w\in\Des[v]$.
\end{lemma}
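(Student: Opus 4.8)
Let $S$ be a prefix subset and $v \in \src(\bar S)$. Then for any $w \in \bar S$, either $v \CI w \mid S$ or $w \in \Des[v]$.

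\textbf{Proof proposal.} The plan is to argue by contradiction. Suppose $w\notin\Des[v]$ yet $v\not\CI w\mid S$, and fix an active path $P$ between $v$ and $w$ given $S$. I will show that $P$ must in fact be a directed path from $v$ to $w$, which contradicts $w\notin\Des[v]$.

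The first step is to show that $P$ is disjoint from $S$. This is the standard prefix-set argument, used repeatedly in Appendix~\ref{app:b}: scanning $P$ from the endpoint $v\in\bar S$, let $x$ be the first vertex of $P$ lying in $S$ (if any), and let $y$ be its predecessor on $P$, so $y\in\bar S$. The edge $y-x$ cannot be $y\to x$, since that would exhibit an ancestor of $x\in S$ outside $S$, contradicting the prefix property. Hence $x\to y$ on $P$. Since $w\in\bar S$, the vertex $x$ is not an endpoint of $P$ and has a successor on $P$; because $x\to y$, $x$ cannot be a collider, so it is a non-collider on $P$ lying in $S$, which blocks $P$ given $S$ — a contradiction. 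Therefore $P\subseteq\bar S$.

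The second step uses that $S$ is a prefix set, so $\Anc[S]=S$ (immediate from the definition of prefix). Since $P$ is active given $S$ but disjoint from $S$, every collider on $P$ would have to lie in $\Anc[S]=S$, which is impossible; hence $P$ has no colliders. A collider-free path is a trek, of the form $v\gets\cdots\gets m\to\cdots\to w$ for a unique "peak" vertex $m$ (possibly $m=v$ or $m=w$). Finally, invoke $v\in\src(\bar S)$: the vertex adjacent to $v$ on $P$ lies in $\bar S$, so if the edge pointed into $v$ it would be an ancestor of $v$ inside $\bar S$, contradicting $v\in\src(\bar S)$. Thus the edge points out of $v$, which forces the peak of the trek to be $v$ itself, so $P:\ v\to\cdots\to w$ is a directed path and $w\in\Des(v)\subseteq\Des[v]$ — the desired contradiction. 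Hence $v\CI w\mid S$.

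I expect the only delicate point to be the first step, i.e.\ pinning down that $x$ is forced to be a non-collider and thereby blocks $P$; this requires care about the orientation of both edges incident to $x$ on $P$ and the fact that $x\neq w$. Everything after that is routine trek analysis combined with the source property of $v$, and mirrors arguments already carried out for $D_S,E_S,F_S$ in the appendix.
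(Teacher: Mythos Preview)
Your proposal is correct and follows essentially the same line as the paper's own proof: assume $w\notin\Des[v]$ and $v\not\CI w\mid S$, pick an active path $P$, argue that $P$ cannot meet $S$ (via the prefix property), deduce that $P$ has no colliders because $\Anc[S]=S$, and then use $v\in\src(\bar S)$ to force $P$ to be directed out of $v$, contradicting $w\notin\Des[v]$. The only cosmetic differences are that the paper looks at the \emph{last} vertex of $P$ in $S$ rather than the first, and it invokes the source property of $v$ before (rather than after) the collider argument; neither changes the substance.
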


\begin{proof}
    Suppose $w\not\in\Des[v]$; we will show that $v\CI w\mid S$. Assume on the contrary that $v\not\CI w\mid S$. Let $P$ be the active path between $v$ and $w$ given $S$. If $P$ intersects with $S$, then consider the last vertex on $P$ that is in $S$. This vertex must be a non-collider since $w\not\in S$. This contradicts $P$ being active given $S$. Therefore, $P$ is fully in $\bar{S}$. Since $v\in\src(\bar{S})$, we have $P:v\rightarrow \dots w$. Since $w\not\in \Des[v]$, there must be a collider on $P$. However, this collider is not in $S$ and $S$ is prefix, which means that $P$ is active given $S$, a contradiction.
\end{proof}

We now prove Lemma~\ref{lem:main-lem}, restated below. 

\lemmainlem*

\begin{proof}[Proof of Lemma~\ref{lem:main-lem}]
Assume $w \not \in \src(\bar{S})$. For contradiction, assume that \textcolor{black}{there is no covered edge from $\Anc[w]\cap\src(\bar{S})$ to $\Anc[w]$} and $w\in\bar{S}\setminus (D_S\cup E_S\cup F_S)$. 

Since $w\not\in\src(\bar{S})$, there is $v\in\src(\bar{S})\cap \Anc[w]$. As $v\in\Anc[w]$, there is a directed path from $v$ to $w$. Consider the longest directed path $P$ from $v$ to $w$. Let $v'$ be the adjacent vertex to $v$ on $P$, i.e., $P:v\rightarrow v'\dots \rightarrow w$. By Lemma~\ref{lem:DS-ES-property} and $w\not\in D_S$, we must have $v'\not\in D_S$. 
Since $v\rightarrow v'$ is \textcolor{black}{not a covered edge}, there could only be two cases:
\begin{itemize}
    \item There is $u\rightarrow v$ such that $u\not\in\Pa(v')$. Since $v\in \src(\bar{S})$, we must have $u\in S$. Note that $u\not\CI v\mid S$ and $v\not\CI w\mid S$ and the second condition of the lemma is not met. We must have $u\not\CI w\mid S\cup \{v\}$. By Lemma~\ref{lem:one-anc-src-blocks-almost-all}, there must exist a directed path $Q$ from $u$ to $w$ that does not intersect $\src(\bar{S})$. Consider the path between $u$ and $w'$ by joining $Q$ and the directed path from $v'$ to $w$. Since this path does not intersect with $S\cup\src(\bar{S})$ and $w$ is the only collider on it, we know that this path is active given $S\cup\{v\}\cup\{w\}$. Thus $u\not\CI v'\mid S\cup\{v\}\cup\{w\}$. Since the second condition of the lemma is not met, we must have $u\not\CI v'\mid S\cup\{v\}$. By Lemma~\ref{lem:one-anc-src-blocks-almost-all}, there must exist a directed path from $u$ to $v'$ that does not intersect $\src(\bar{S})$. Since $u\not\in\Pa(v')$, this path has at least length two. Let $k$ be the vertex on this path such that $k\to v'$. Note that $v\to v'\leftarrow k$. Therefore $v\not\CI k\mid S\cup \{v'\}$. Since $v'\not\in D_S$, we must have $v\not\CI k\mid S$. By Lemma~\ref{lem:ind-or-des}, we must have $k\in\Des[v]$. Thus we can increase the length of the directed path $P$ by replacing $v\to v'$ with $v\to\dots\to k\rightarrow v'$. This contradicts $P$ being the longest directed path from $v$ to $w$.
    \item There is $k\rightarrow v'$ such that $k\not\in\Pa(v)$. Note that $v\to v'\leftarrow k$. Therefore $v\not\CI k\mid S\cup \{v'\}$. Since $v'\not\in D_S$, we must have $v\not\CI k\mid S$. By Lemma~\ref{lem:ind-or-des}, we must have $k\in\Des[v]$. Thus we can increase the length of the directed path $P$ by replacing $v\to v'$ with $v\to\dots\to k\rightarrow v'$. This contradicts $P$ being the longest directed path from $v$ to $w$.
\end{itemize}
This completes the proof.
\end{proof}

\subsection{Remarks}

The above proofs suffice as intermediate results to show Theorem~\ref{thm:main-1}, which we proved in Section~\ref{sec:5}.

\section{Missing Proofs of Interventions}\label{app:d}

\subsection{Proof of Lemma~\ref{lem:6-1}}

\lemsixone*

\begin{proof}
For each $v\in I$, denote $\Des(v,I)$ as the set of $u\in I$ such that $u\not\CI_I v$. We first show that $\Des(v,I)$ is equal to the set of descendants of $v$ such that there exists a directed path from $v$  which is not cut by $I$.

Let $u\notin I$ such that $u \ci_I v $. If $u\in \Des(v,I)$, then there exists a directed path from $v$ to $u$ in the modified DAG $\cG^I$, which means it is active given $\varnothing$, a contradiction. Therefore the only if direction is proven.

For the if direction, let $u\notin I$ such that $u \not\ci_I v $. Then there is an active path $P:u-\dots-v$ in the modified DAG $\cG^I$. Since $P$ is active given $\varnothing$, there is no collider on $P$. Furthermore, since all incoming edges to any vertex in $I$ are removed in the modified DAG, this path must be $P:u\leftarrow\dots\leftarrow v$ and satisfies $P\cap I=\{v\}$. Thus $u\in\Des(v,I)$.

Next we show that $\cup_{v\in I\setminus S}\Des(v,I)=\Des(I\setminus S)\setminus (I\setminus S)$. This will prove the lemma. 

Since $\Des(v,I)$ is equal to the set of descendants of $v$ such that there exists a directed path from $v$ which is not cut by $I$, it is clear that $\cup_{v\in I\setminus S}\Des(v,I)\subseteq\Des(I\setminus S)\setminus (I\setminus S)$. Now let $u\in \Des(I\setminus S)\setminus (I\setminus S)$. Then there is $v\in I\setminus S$ such that $u\in \Des(v)$. Consider the directed path from $v$ to $u$ and let $v'$ be the last vertex on this path that is in $I$. Then $u\in\Des(v',I)$. By the fact that $S$ is prefix, we have from $v\not\in S$ that $v'\not\in S$. Thus $u\in \Des(v',I)\subset \cup_{v\in I\setminus S}\Des(v,I)$ and $\cup_{v\in I\setminus S}\Des(v,I)\supseteq\Des(I\setminus S)\setminus (I\setminus S)$. We therefore have $\cup_{v\in I\setminus S}\Des(v,I)=\Des(I\setminus S)\setminus (I\setminus S)$.
\end{proof}

\subsection{Proof Lemma~\ref{lem:int-helper}}

\leminthelper*

\begin{proof}
    On one hand, let $u\in\Pa(v)\setminus(S\cup \Des[I\setminus S])$. Clearly $u\not\in S\cup \Des[I\setminus S]$ and $u\not\CI v\mid V\setminus \Des[I\setminus S]$. Thus $u\in H_S^I(v)$, which proves $\Pa(v)\setminus(S\cup \Des[I\setminus S])\subseteq H_S^I(v)$.

    On the other hand, let $u\in H_S^I(v)$. Since $u\not\CI v\mid V\setminus \Des[I\setminus S]$, let $P:u-\dots-v$ be the active path given $V\setminus \Des[I\setminus S]$. Then all non-colliders on $P$ are in $\Des[I\setminus S]$, and all colliders on $P$ are in $\Anc[V\setminus \Des[I\setminus S]]$

    Note that $\Des[I\setminus S]\cap \Anc[V\setminus \Des[I\setminus S]]=\varnothing$. Otherwise let $w\in\Des[I\setminus S]\cap \Anc[V\setminus \Des[I\setminus S]]$. Since $w\in \Anc[V\setminus \Des[I\setminus S]]$, there is $w'\not\in\Des[I\setminus S]$ such that $w'\in\Des[w]$. However $w\in\Des[I\setminus S]$, which means $w'\in\Des[I\setminus S]$. A contradiction.

    If $P$ has a collider $y\to x\leftarrow z$, then $z$ is either a non-collider or $v$. Either case we have $z\in\Des[I\setminus S]$. This means $x\in\Des[I\setminus S]$. However, $x\in S\cup \Anc[V\setminus \Des[I\setminus S]]$ and $\Des[I\setminus S]\cap\Anc[V\setminus \Des[I\setminus S]]=\varnothing$. A contradiction. Thus there is no collider on $P$.

    Denote the vertex next to $u$ on $P$ as $t$, i.e., $P:u-t-\dots -v$. Then $t$ is either a non-collider or $v$. Either case $t\in\Des[I\setminus S]$. Therefore it must be $u\to t$, otherwise $u\in\Des[I\setminus S]$. Furthermore, as $P$ has no colliders, it must be $P:u\to t \dots\to v$. Thus $u\in\Anc(v)$. Together with $u\not\in S\cup \Des[I\setminus S]$, we have $u\in\Anc(v)\setminus(S\cup \Des[I\setminus S])$. This proves $H_S^I(v)\subseteq \Anc(v)\setminus (S\cup \Des[I\setminus S])$. 
\end{proof}

\subsection{Proof of Lemma~\ref{lem:J-S-I}}

\lemJSI*

\begin{proof}
    Assume on the contrary that $v\in J_S^I$ but $w\in \Des(v)$ such $w\not\in J_S^I$. Since $v\in J_S^I$, there could be two cases:
    \begin{itemize}
        \item $v\in\Des(I\setminus S)$. Then clearly $w\in\Des(v)\subseteq\Des(I\setminus S)$. A contradiction.   
        \item $v\in I\setminus S$ and there is $u\in H_S^I(v)\cap \bar{S}$. Then $w\in\Des(v)\subset \Des(I\setminus S)$. A contradiction.
    \end{itemize}
    Next we show that $\src(\bar{S})\cap J_S^I\neq \varnothing$. Note that by $\src(\bar{S})\cap \Des(\bar{S})=\varnothing$ and $\Des(I\setminus S)\subseteq \Des(\bar{S})$, we have $\src(\bar{S})\cap \Des(I\setminus S)=\varnothing$. In addition, for any $v\in \src(\bar{S})\cap (I\setminus S)$, by Lemma~\ref{lem:int-helper}, we have $H_S^I(v)\subset \Anc(v)\setminus S=\varnothing$. Thus $\src(\bar{S})\cap \{v\in I\setminus S:~ H_S^I(v)\cap\bar{S}\neq \varnothing\}=\varnothing$. We then have $\src(\bar{S})\cap J_S^I=\varnothing$.
\end{proof}

\subsection{Remarks}

The above proofs suffice as intermediate results for proving Theorem~\ref{thm:prefix-subset-int}. Then together with Lemma~\ref{lem:int-covered-sep} (proven in Section~\ref{sec:6-1}), we can prove Theorem~\ref{thm:main-2}, which is given in Section~\ref{sec:6-2}.

\section{Details of Numerical Experiments}\label{app:e}

\textbf{Implementation Details.} For \texttt{FCI} and \texttt{RFCI}, we used the implementations in \citet{kalisch2024package}, which is written in R with C++ accelerations. For \texttt{PC} and \texttt{GSP}, we used the implementation in \citet{squirescausaldag}, which us written in python. Our method, \texttt{CCPG}, is written in python. The acceleration of R (with C++) can be viewed by comparing two implementations of PC (stable) in \Cref{fig:runtime}.

\textbf{Remark on causal sufficiency.} Among the constraint-based methods, we marked the ones that do not assume causal sufficiency in \Cref{fig:nsample}. These methods run additional tests to check for unobserved causal variables, and therefore might require more samples compared to, e.g., \texttt{PC}, since the underlying system we test on satisfies causal sufficiency.
\newpage
\section{Misc}

\subsection{part -1}

\begin{proof}
    We first show that $S'$ returned by Algorithm~\ref{alg:prefix_subset} satisfies $\src(\bar{S})\subset S'$. By Lemma~\ref{lem:src-not-in-Ds}, we have $\src(\bar{S})\subset U$. Then by Lemma~\ref{lem:main-lem}, we have $\src(\bar{S})\subset U'$. Thus $\src(\bar{S})\subset S'$. 

    Next we show that $S'$ is a prefix vertex set. We have $\bar{S'}=D_S\cup_{v,w'\in U} (E_v\cup F_{v,w'})$. To show that $S'$ is prefix vertex set, we only need to show that $\forall w\in \bar{S'}$ and $y\in\Des(w)$, there is $y\in \bar{S'}$. Consider the following scenarios:
    \begin{itemize}
        \item if $w\in D_S$, by Lemma~\ref{lem:Ds-has-all-des}, we know $y\in\Des(w)\subset D_S\subset \bar{S'}$.
        \item if $w\in E_v$ for some $v\in U$, we will show $y\in E_v\cup F_{v,w}$. Note that as $v\not\CI w\mid S$, there is an active path between $v,w$ given $S$. Consider extending this path by the directed path from $w$ to $y$. Note that none of vertex on the directed path from $w$ to $y$ is in $S$, as $S$ is prefixed. Therefore this extended path is also active given $S$, which means $v\not\CI y\mid S$. Thus, if $y\not\in E_v$, then there must be $u\not\CI y\mid S\cup\{v\}$. This means there is a active path, denoted by $P$, between $u$ and $y$ given $S\cup\{v\}$. If $P$ is active given $S\cup\{v,w\}$, then we have $u\not\CI y\mid S\cup\{v,w\}$, thus proving $y\in F_{v,w}$. If $P$ is inactive given $S\cup\{v,w\}$, as $P$ is active given $S\cup\{v\}$, vertex $w$ must be a non-collider on $P$. Then the path by cutting out $w$ to $y$ on $P$ is an active path between $u$ and $w$ given $S\cup\{v\}$. This contradicts $u\CI w\mid S\cup\{v\}$ as $w\in E_v$.
        \item if $w\in F_{v,w'}$ for some $v,w'\in U$, we will show $y\in F_{v,w'}$. As $w\in F_{v,w'}$, by Lemma~\ref{lemma:v-generalization}, we know $w'\not\in\Des[w]$. If $y\not\in F_{v,w'}$, then $u\CI w'\mid S\cup\{v,y\}$. Since $u\CI w'\mid S\cup\{v\}$ and $u\not\CI w'\mid S\cup\{v,w\}$, there is a path $P$ between $u,w'$ that is active given $S\cup\{v,w\}$ but inactive given $S\cup\{v\}$ or $S\cup\{v,y\}$. These mean that (1) $y$ is a non-collider on $P$, (2) all colliders on $P$ are in $\Anc[S\cup\{v,w\}]$, (3) $P$ has a collider that is in $\Anc[w]\setminus\Anc[S\cup \{v\}]$. Note that $w'\not\in\Des[y]$ as $y\in\Des[w]$ but $w'\not\in\Des[w]$, and $u\not\in\Des[w]$ as $u$ is in the prefix vertex set $S$ but $w\in\bar{S}$. Therefore $y$ being a non-collider on $P$ means there is a collider $x$ on $P$ such that $y\in\Anc(x)$. Note that this collider has to be in $\Anc[S\cup\{v,w\}]$. However as $y\not\in \Anc[S\cup\{w\}]$ and $y\in\Anc(x)$, there must be $x\in \Anc[v]$, which means $y\in\Anc(v)$. As $y\in\Des(w)$, this means $v\in\Des(w)$, which makes $\Anc[w]\setminus\Anc[S\cup \{v\}]=\varnothing$. This violates (3) above. A contradiction.
    \end{itemize}
Therefore $S'$ must be a prefix vertex set. We now bound the number of CI tests performed by Algorithm~\ref{alg:prefix_subset}: computing $D_{v,S}$ takes $O(n^2)$ CI tests; computing $E_v$ takes $O(n^2)$ CI tests; computing $F_{v,w'}$ takes $O(n^2)$ CI tests. In total, these sum up to be $O(n^4)$ CI tests.
\end{proof}

\begin{lemma}\label{lem:Ds-has-all-des}
    For each $S\subseteq V$ and $v\in\bar{S}$, if $w\in D_{v,S}$, then $\Des[w]\subseteq D_{v,S}$. Therefore if $w\not\in D_S$, there must be $\Anc[w]\cap D_S=\varnothing$.
\end{lemma}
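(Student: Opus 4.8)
The plan is to adapt, essentially verbatim, the argument given for the type-I set $D_S$ in the proof of Lemma~\ref{lem:DS-ES-property}, but keeping the source vertex $v$ fixed throughout so that the witness is never changed. So fix $v\in\bar S$, suppose $w\in D_{v,S}$, and let $u\in V$ be a witness: $u\CI v\mid S$ while $u\not\CI v\mid S\cup\{w\}$. Pick any $x\in\Des[w]$; since $w\in\Des[w]$ the base case is trivial, so we may assume $x\ne w$. It suffices to show $u\not\CI v\mid S\cup\{x\}$ with the same witness $u$, as this is exactly the defining condition for $x\in D_{v,S}$.

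First I would fix a path $P$ from $u$ to $v$ that is active given $S\cup\{w\}$: every non-collider of $P$ lies outside $S\cup\{w\}$, and every collider of $P$ lies in $\Anc[S\cup\{w\}]$. Because $x\in\Des[w]$, every collider of $P$ also lies in $\Anc[S\cup\{x\}]$, so if $x$ is not a non-collider of $P$ then $P$ is already active given $S\cup\{x\}$ and we are done. The remaining case is that $x$ is a non-collider on $P$, and here I would re-route. From $u\CI v\mid S$ the path $P$ must be blocked by $S$; as none of its non-colliders lies in $S$, some collider of $P$ lies outside $\Anc[S]$, and I would let $k,k'$ be the leftmost and rightmost such colliders (possibly $k=k'$). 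Then $k,k'\in\Anc[S\cup\{w\}]\setminus\Anc[S]\subseteq\Anc[w]\subseteq\Anc[x]$. Form a path $Q$ from $u$ to $v$ by replacing the portion of $P$ between $k$ and $x$ with a directed path $k\to\cdots\to x$, and the portion between $k'$ and $x$ with a directed path $k'\to\cdots\to x$; the only new non-colliders introduced lie on these directed segments, hence are (proper) descendants of $k$ or $k'$ and therefore outside $S$.

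Then the argument closes as follows. No collider of $P$ lying outside $\Anc[S]$ survives into $Q$ except possibly $x$, by the choice of $k,k'$ as the two extreme such colliders; hence if $x$ were a non-collider on $Q$, then $Q$ would have no non-collider in $S$ and no collider outside $\Anc[S]$, making $Q$ active given $S$ and contradicting $u\CI v\mid S$. So $x$ is a collider on $Q$, every non-collider of $Q$ avoids $S\cup\{x\}$, and every collider of $Q$ is either $x$ or a collider of $P$ and hence lies in $\Anc[S\cup\{x\}]$; thus $Q$ is active given $S\cup\{x\}$, i.e.\ $u\not\CI v\mid S\cup\{x\}$, as needed. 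Finally, the ``therefore'' clause is the contrapositive: if some $a\in\Anc[w]\cap D_S$, pick $v\in\bar S$ with $a\in D_{v,S}$; the first part gives $\Des[a]\subseteq D_{v,S}\subseteq D_S$, and since $a\in\Anc[w]$ implies $w\in\Des[a]$, we get $w\in D_S$.

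The main obstacle is the re-routing step: one must verify that $Q$ is a genuine path on distinct vertices, and carefully account for which vertices of $P$ remain colliders versus non-colliders in $Q$ after splicing in the two directed segments through $x$ — in particular that $x$ is the only new collider candidate and that exactly the colliders of $P$ lying in $\Anc[S]$ carry over. Once this bookkeeping is in place the rest is routine d-separation manipulation, and indeed the first assertion is precisely the $D_S$ case of Lemma~\ref{lem:DS-ES-property} specialized to a fixed source vertex $v$.
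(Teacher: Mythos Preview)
Your proposal is correct and follows essentially the same approach as the paper's own proof: fix the witness $u$, take an active path $P$ given $S\cup\{w\}$, handle the easy case where $x$ is not a non-collider on $P$, and otherwise splice in directed paths from the extreme non-$\Anc[S]$ colliders $k,k'$ to $x$ to form $Q$, then argue $x$ must be a collider on $Q$ lest $u\CI v\mid S$ be violated. Your contrapositive derivation of the ``therefore'' clause is also the intended one, and the bookkeeping concern you flag about $Q$ is a fair caveat that the paper's proof leaves equally implicit.
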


\begin{proof}
    Since $w\in D_{v,S}$, there exists a vertex $u\in V$ such that $u \CI v\mid S$ and $u \not\CI v\mid S\cup\{w\}$. We now show that for any $x\in \Des[w]$, there is $u \not\CI v\mid S\cup\{x\}$.

    As $u \not\CI v\mid S\cup\{w\}$, there is a path $P$ from $u$ to $v$ that is active given $S\cup\{w\}$. Therefore all non-colliders on $P$ are not in $S\cup\{w\}$ and all colliders on $P$ are in $\Anc[S\cup\{w\}]$. As $x\in \Des[w]$, all colliders on $P$ are in $\Anc[S\cup\{x\}]$. If all non-colliders are not in $S\cup\{x\}$, then $P$ is the active path from $u$ to $v$ given $S\cup\{x\}$. There is $u \not\CI v\mid S\cup\{x\}$. Otherwise there is a non-collider on $P$ that is $x$.

    As $u\CI v\mid S$, path $P$ is inactive given $S$. From above we know that all non-colliders on $P$ are not in $S$. Therefore there exists a collider on $P$ that is not in $\Anc[S]$. Suppose the leftmost and rightmost such colliders are $k,k'$ (it is possible that $k=k'$), then $k,k'$ must be in $\Anc[w]\subset\Anc[x]$. Consider the path $Q$ by cutting out the parts between $k----x$ (and $k'----x$) on $P$ and replacing them with directed edges from $k$ to $x$ (and from $k'$ to $x$). Compared to $P$, the additional non-colliders on $Q$ are all on the directed path from $k$ to $x$ (or $k'$ to $x$), which are not in $S$ as $k,k'\notin \Anc[S]$. Therefore $Q$ has no non-colliders in $S$. 
    
    Compared to $P$, there is no collider on $P$ that is not in $\Anc[S]$ that is still on $Q$ by the fact that $k,k'$ are leftmost and rightmost. Therefore $x$ must be a collider on $Q$, or else $Q$ is active given $S$ and $u\not\CI v\mid S$. Therefore all non-colliders on $Q$ are not in $S\cup\{x\}$. Every collider on $Q$ is either $x$ or a collider of $P$, which is in $\Anc[S\cup\{x\}]$. Thus $Q$ is active given $S\cup\{x\}$. Therefore $u\not\CI v\mid S\cup \{x\}$.
\end{proof}

% \begin{lemma}\label{lem:one-node-anc-src}
%     Let $S$ be a prefix subset, then any $w \in \bar{S} \backslash D_{S}$ satisfies the following,
%             $$|\Anc[w] \cap \src(\bar{S})| =1~.$$
% \end{lemma}

\begin{lemma}\label{lem:src-not-in-Ds}
    Let $S$ be a prefix subset and $D_{S} = \cup_{v \in \bar{S}} D_{v,S}$, then 
    $$\src(\bar{S}) \subseteq V \backslash (S \cup D_{S})=\bar{S} \backslash D_{S}~.$$
    \kk{Need a slightly weaker statement than this, which is $\src(\bar{S}) \cap V \backslash (S \cup D_{S}) \neq \varnothing~.$}
\end{lemma}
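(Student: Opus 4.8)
The plan is to prove the equivalent claim $\src(\bar S)\cap D_S=\varnothing$; since $\src(\bar S)\subseteq\bar S=V\setminus S$ is immediate, this yields $\src(\bar S)\subseteq\bar S\setminus D_S=V\setminus(S\cup D_S)$, and in particular the weaker statement $\src(\bar S)\cap(V\setminus(S\cup D_S))\neq\varnothing$ whenever $\bar S\neq\varnothing$ (as then $\src(\bar S)\neq\varnothing$). I would argue by contradiction: assume some $a\in\src(\bar S)$ lies in $D_S$. By definition of the Type-I set (with $a$ playing the role of $w$), there are mutually distinct $u\in V$ and $v\in\bar S$ with $u\CI v\mid S$ and $u\not\CI v\mid S\cup\{a\}$; by faithfulness, fix an active path $P$ between $u$ and $v$ given $S\cup\{a\}$. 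The only structural fact I use about $a$ is that $a\in\src(\bar S)$ forces $\Anc(a)\subseteq S$, hence $\Anc[a]\subseteq S\cup\{a\}$.

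First I would pin down the role of $a$ on $P$. If $a$ were a non-collider on $P$, an active path could not condition on it, contradicting that $P$ is active given $S\cup\{a\}$. If $a$ did not lie on $P$ at all, then the activity of $P$ relative to $S\cup\{a\}$ versus relative to $S$ could differ only through some collider $c$ of $P$ with $a\in\Des[c]$ but $\Des[c]\cap S=\varnothing$; since $c\neq a$ this would force $c\in\Anc(a)\subseteq S$, contradicting $\Des[c]\cap S=\varnothing$. Hence $P$ would already be active given $S$, contradicting $u\CI v\mid S$. Therefore $a$ must be a collider on $P$.

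Now I would exploit that $a$ is a collider together with $\Pa(a)\subseteq S$. Let $p,q$ be the two neighbours of $a$ on $P$, with $p$ toward $v$, so $p\to a\gets q$ and thus $p,q\in\Pa(a)\subseteq S$. If $P$ has length two it is exactly $v\to a\gets u$, forcing $v\in\Pa(a)\subseteq S$, contradicting $v\in\bar S$. Otherwise $p\neq v$, so $p$ is an internal vertex of $P$; since the incident path-edge $p\to a$ points out of $p$, the vertex $p$ is a non-collider on $P$. But an active path given $S\cup\{a\}$ cannot contain a non-collider lying in $S\cup\{a\}$, while $p\in S$ — a contradiction. This rules out such an $a$ and completes the proof.

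As for difficulty, there is no genuine obstacle: the statement is a pure d-separation argument, essentially the $D_S$ half of \Cref{lem:DS-ES-property}. The one place requiring care is the case analysis in the second paragraph that forces $a$ itself — rather than merely some proper ancestor of $a$ — to be the collider unblocked by adding $a$ to the conditioning set; this is precisely where $a\in\src(\bar S)$ (equivalently $\Anc(a)\subseteq S$) enters, and one must also remember to dispatch the boundary case in which a neighbour of $a$ on $P$ is the endpoint $v$ (or $u$).
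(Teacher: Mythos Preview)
Your proof is correct and follows essentially the same approach as the paper: assume $a\in\src(\bar S)\cap D_S$, take an active path $P$ between $u$ and $v$ given $S\cup\{a\}$, argue that $a$ must itself be a collider on $P$ (the paper phrases this as ``there is a collider $b$ with $a\in\Des[b]$ and $\Des[b]\cap S=\varnothing$, hence $b=a$''; your case analysis is a slightly more explicit version of the same step), and then observe that the neighbour $p$ of $a$ on the $v$-side lies in $\Pa(a)\subseteq S$ and is a non-collider, contradicting activity. One small imprecision: the dichotomy should be on whether $p=v$ rather than on whether $P$ has length two --- a path of length greater than two can still have $a$ adjacent to $v$ --- but when $p=v$ you get $v\in S$ immediately, so the argument goes through unchanged.
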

\begin{proof}
    We prove the lemma using proof by contradiction strategy. For contradiction assume that there exists a vertex $a \in \src(\bar{S})$ such that $a \not \in \bar{S} \backslash D_{S}$, that is $a \in D_{v,S} \cap \src(\bar{S})$ for some vertex $v \in \bar{S}$. As $a \in D_{v,S}$, by the definition of the set $D_{v,S}$ there exists a vertex $u \in V$ such that, $v \CI u|S \text{ and } v \not \CI u|S \cup \{a\}$.

    As $v \CI u|S \text{ and } v \not \CI u|S \cup \{a\}$, there exists a path $P$ between $v$ and $u$ which is inactive when conditioned on $S$ but is active upon conditioning on $S \cup \{ a\}$. Moreover this path contains a vertex $b$ that is collider on $P$ and satisfies: $a \in \Des[b]$ and $\Des[b] \cap S=\varnothing$. As $\Des[b] \cap S=\varnothing$, we have that $b \in \bar{S}$. Furthermore since $b \in \bar{S}, a \in \src(\bar{S})$ and $a \in \Des[b]$, this implies that $b=a$. Therefore the path $P$ takes the form: $P=v \dots \to a \gets \dots u$. 

    All the colliders on the path $P$ either belong to or have descendant the set $S \cup \{ a\}$. 
    
    Now consider the path $v \dots \to a$ and note that it is active given $S$ because all the colliders on this path. Let $k$ be the number of vertices between $v$ and $a$ on this path $v-v_1\dots v_k\to a$. It is immediate that $v_{k} \in S$ as $a \in \src(\bar{S})$. However as $v_k \in S$, and as we condition on the set $S$, this should be collider for the path $Q$ to be active, which is not possible. Therefore a contradiction and the conclusion holds.
\end{proof}

Set $S=\varnothing$ in the above lemma we immediately have:

\begin{corollary}
    If there is no covered edge in from $\src(V)$ to $V\setminus D_\varnothing$, then $\src(V)= V \backslash D_{\varnothing}$.
\end{corollary}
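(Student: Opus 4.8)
The plan is to prove the two inclusions of $\src(V)=V\setminus D_\varnothing$ separately, using only results already established. The inclusion $\src(V)\subseteq V\setminus D_\varnothing$ is immediate: it is \Cref{lem:src-not-in-Ds} (equivalently, the $D_S$-part of \Cref{lem:DS-ES-property}) specialized to the empty prefix $S=\varnothing$, for which $\bar S=V$ and $\src(\bar S)=\src(V)$.

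For the reverse inclusion I would argue by contradiction. Fix $w\in V\setminus D_\varnothing$ and suppose $w\notin\src(V)$. Two elementary observations set up the contradiction. First, when $S=\varnothing$ the type-II and type-III sets are empty, since their defining conditions both require picking a vertex $u\in S$; hence $E_\varnothing=F_\varnothing=\varnothing$. Second, since $w\notin D_\varnothing$ and $D_\varnothing$ is closed under taking descendants (\Cref{lem:Ds-has-all-des}), its complement $V\setminus D_\varnothing$ is closed under taking ancestors, so $\Anc[w]\subseteq V\setminus D_\varnothing$. Consequently, the global hypothesis ``no covered edge from $\src(V)$ into $V\setminus D_\varnothing$'' already implies the \emph{local} statement that there is no covered edge from $\Anc[w]\cap\src(V)$ to $\Anc[w]$, because any such edge would have its tail in $\src(V)$ and its head in $\Anc[w]\subseteq V\setminus D_\varnothing$.

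Now I would invoke \Cref{lem:main-lem} with $S=\varnothing$: the hypotheses $w\in\bar S\setminus D_S$ and $w\notin\src(\bar S)$ hold, and the local no-covered-edge condition just verified holds, so the lemma forces $w\in E_\varnothing\cup F_\varnothing=\varnothing$ --- a contradiction. Hence $w\in\src(V)$, and combining the two inclusions yields $\src(V)=V\setminus D_\varnothing$. I do not expect a genuine obstacle here: the substantive work --- the longest-directed-path extension argument --- lives entirely inside \Cref{lem:main-lem}, and this corollary merely performs the bookkeeping that feeds the global hypothesis into the local one and collapses $E_\varnothing,F_\varnothing$ to $\varnothing$. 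The only place an error could creep in is precisely that bookkeeping, so before calling the proof complete I would re-verify the degenerate-case identities ($\varnothing$ is vacuously a prefix vertex set, $\src(\bar\varnothing)=\src(V)$, and $E_\varnothing=F_\varnothing=\varnothing$) and nothing more.
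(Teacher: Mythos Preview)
Your proposal is correct and follows the same approach the paper intends: the paper's ``proof'' is the single line ``Set $S=\varnothing$ in the above lemma we immediately have,'' i.e.\ specialize \Cref{lem:main-lem} (together with \Cref{lem:src-not-in-Ds}) to the empty prefix, and your argument is exactly the unpacking of that specialization. Your write-up actually supplies two steps the paper leaves implicit---that $E_\varnothing=F_\varnothing=\varnothing$ because their definitions quantify over $u\in S$, and that $\Anc[w]\subseteq V\setminus D_\varnothing$ via descendant-closure of $D_\varnothing$ so the global covered-edge hypothesis feeds the local one in \Cref{lem:main-lem}---so if anything you are more careful than the paper here.
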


\subsection{Proof of Theorem~\ref{thm:prefix-subset-int}}
\thmprefixsubsetint*

\begin{proof}
    We first prove that Algorithm~\ref{alg:prefix_subset_int} outputs a prefix vertex set. For this, note that
    \[
    \bar{S'} = D_S\cup_{v,w' \in U} (E_{v}\cup F_{v,w'})\cup_{I\in\cI} J_S^I.
    \]
    To show $S'$ is prefix, one need only to show $\forall w\in\bar{S'}$ and $y\in\Des(w)$, there is $y\in\bar{S'}$. In Theorem~\ref{thm:prefix-subset}, we have proven this for $w\in D_S\cup_{v,w' \in U} (E_{v}\cup F_{v,w'})$. Therefore we only need to consider the case where $w\in J_S^I$ for some $I\in\cI$. Invoking Lemma~\ref{lem:J-S-I}, we immediately have $y\in J_S^I$.  Thus $\bar{S'}$ is a prefix vertex set.

    Next we show that $\src(\bar{S})\subset S'$. By Theorem~\ref{thm:prefix-subset}, we have $\src(\bar{S})\cap\big(D_S\cup_{v,w' \in U} (E_{v}\cup F_{v,w'})\big)=\varnothing$. By Lemma~\ref{lem:J-S-I}, we have $\src(\bar{S})\cap \big(\cup_{I\in\cI} J_S^I\big)=\varnothing$. Thus $\src(\bar{S})\subset S'$.

    Finally we bound the number of performed CI tests. For each $I\in \cI$, computing $J_S^I$ requires computing $\Des(I\setminus S)$ and $H_S^I(v)$ for each $v\in I\setminus S$. By Lemmas~\ref{lem:des-I-S} and \ref{lem:d-v-I}, computing $\Des(I\setminus S)$ can be done in $\cO(n^2)$ CI tests. Computing each $H_S^I(v)$ is $\cO(n)$. Therefore computing $J_S^I$ for all $I\in\cI$ is bounded by $|\cI|\cdot\cO(n^2)$. Then the steps following this require at most $\cO(n^4)$ by Theorem~\ref{thm:prefix-subset}. Thus, the total number of CI test is bounded by $\cO(n^4)+|\cI|\cdot\cO(n^2)$.
\end{proof}

\subsection{part zero}

\begin{lemma}
    Let $S$ be a prefix subset and $u \in V$, $v \in \bar{S}$ and $a \in \src(\bar{S})$ be such that, $u \ci v|S$ and $u \not \ci v|S \cup \{ a\}$. Let $P$ be any active path between $u$ and $v$ when conditioned on $S \cup \{ a\}$, then all the colliders on the path $P$ belong to the set $S \cup \{ a\}$.
\end{lemma}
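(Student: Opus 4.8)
The plan is short: the conclusion is immediate from the definition of an active path, combined with the prefix property of $S$ and the source property of $a$. In fact neither conditional‑independence hypothesis is actually needed for the conclusion itself — $u\ci v\mid S$ plays no role, while $u\not\ci v\mid S\cup\{a\}$ serves (under faithfulness) only to guarantee that an active path $P$ between $u$ and $v$ given $S\cup\{a\}$ exists in the first place.

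First I would invoke the definition of activeness from the preliminaries: a path $P$ that is active given a conditioning set $C$ has every collider on it contained in $\Anc[C]$ (and every non‑collider outside $C$). Taking $C=S\cup\{a\}$, every collider $d$ on $P$ satisfies $d\in\Anc[S\cup\{a\}]$. Next I would rewrite this set: ancestor‑closure distributes over unions, so $\Anc[S\cup\{a\}]=\Anc[S]\cup\Anc[a]$. Since $S$ is a prefix vertex set, $\Anc[w]\subseteq S$ for every $w\in S$, hence $\Anc[S]=S$; and since $a\in\src(\bar S)$ we have $\Anc(a)\cap\bar S=\varnothing$, i.e.\ $\Anc(a)\subseteq S$, hence $\Anc[a]=\{a\}\cup\Anc(a)\subseteq S\cup\{a\}$. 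Combining, $\Anc[S\cup\{a\}]\subseteq S\cup\{a\}$, so every collider $d$ on $P$ lies in $S\cup\{a\}$, which is exactly the claim.

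There is no genuine obstacle here; the only care needed is with the conventions — remembering that $\Anc[\cdot]$ includes the vertex itself, that $a\in\src(\bar S)$ means $a$ has no \emph{proper} ancestor outside $S$, and that prefixness upgrades the trivial inclusion to the exact equality $\Anc[S]=S$. An alternative, longer route would instead use $u\ci v\mid S$: since $P$ is inactive given $S$ but all of its non‑colliders avoid $S$, it must contain a collider outside $\Anc[S]$, which — being in $\Anc[S\cup\{a\}]$ and $a$ being a source of $\bar S$ — must equal $a$; one could then push this further in the style of the source‑exclusion argument in the proof of Lemma~\ref{lem:DS-ES-property}. The direct argument above is cleaner and in fact establishes the stronger fact that \emph{every} collider on $P$, not merely some distinguished one, already lies inside the conditioning set $S\cup\{a\}$.
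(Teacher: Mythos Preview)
Your proof is correct and is essentially the same as the paper's: both observe that activeness of $P$ given $S\cup\{a\}$ forces every collider into $\Anc[S\cup\{a\}]$, and then use the prefix property of $S$ together with $a\in\src(\bar S)$ to conclude $\Anc[S\cup\{a\}]\subseteq S\cup\{a\}$. The paper states this containment in one line without unpacking it, whereas you spell out the decomposition $\Anc[S\cup\{a\}]=\Anc[S]\cup\Anc[a]$ and handle each piece; your observation that neither CI hypothesis is needed is also accurate (the paper's proof likewise uses only that $P$ is active given $S\cup\{a\}$).
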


\begin{proof}  
Since $P$ is active given $S\cup\{a\}$, all its colliders have descendant in $S\cup\{a\}$. By $\Anc[S\cup\{a\}]\subseteq S\cup\{a\}$, we have our result.
\end{proof}

% \begin{assumption}[Single-Node Interventional Faithfulness]\label{ass:single-node-int}
%     For single-node intervention on $v\in V$, assume $P(u)\neq P^{v}(u)$ for $u\in \Des(v)$.
% \end{assumption}

% \begin{lemma}[Invariance Test]
%     Given intervention on $v\in V$, we have $P(u)\neq P^v(u)$ if and only if $u\in \Des[v]$.
% \end{lemma}

% \begin{proof}
%     If direction is provided by Assumption~\ref{ass:single-node-int}. Only if direction is provided by Markov factorization.
% \end{proof}

\subsection{part one}

\begin{lemma}
    Let $u$ and $v$ be uncomparable, then the following two statements hold,
    \begin{enumerate}
        \item $u$ is not adjacent to $v$. Therefore
        there exists a set $S \subseteq V$ such that, $u \CI v|S$.
        \item Suppose $w \in \Des(u) \cap \Des(v)$, then $u\not \CI v|S \cup w$ for all $S \subseteq V$ such that $u \CI v|S$.
    \end{enumerate}
\end{lemma}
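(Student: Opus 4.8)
The plan is to prove the two parts separately: Part 1 is a short structural observation, and Part 2 reduces to exhibiting one explicit active path. For \textbf{Part 1}, I would first argue that $u$ and $v$ are non-adjacent: an edge between them, say $u\to v$, would put $u\in\Pa(v)\subseteq\Anc(v)$, contradicting uncomparability. Given non-adjacency and $v\notin\Des[u]$ (again from uncomparability), I would take $S=\Pa(u)$ and verify that it d-separates $u$ and $v$ — this is the local Markov property, since $v$ is a non-descendant of $u$. Concretely: a $u$--$v$ path leaving $u$ through a parent is blocked at that parent (a non-collider lying in $\Pa(u)$); a path leaving $u$ through a child cannot be entirely forward-directed (it would end at $v\in\Des(u)$), so it contains a collider in $\Des(u)$, and since $\Anc[\Pa(u)]=\Anc(u)$ is disjoint from $\Des(u)$, that collider is closed. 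Under faithfulness this gives $u\CI v\mid\Pa(u)$, and since $v\notin\Pa(u)$, the set $S=\Pa(u)$ is a valid witness.

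For \textbf{Part 2}, fix any $S$ with $u\CI v\mid S$; the plan is to build a single path from $u$ to $v$ that is active given $S\cup\{w\}$. Since $w\in\Des(u)\cap\Des(v)$, pick directed paths $\pi_u\colon u=x_0\to\cdots\to x_k=w$ and $\pi_v\colon v=y_0\to\cdots\to y_m=w$. Let $z$ be the first vertex along $\pi_v$ (starting from $v$) that also lies on $\pi_u$; it exists (both paths end at $w$), and $z\neq u$ (else $u\in\Des(v)$), $z\neq v$ (else $v\in\Des(u)$), and $z\in\Anc[w]$. Splicing the $u$-to-$z$ initial segment of $\pi_u$ with the reversed $v$-to-$z$ initial segment of $\pi_v$ gives a simple path $P\colon u\to\cdots\to z\leftarrow\cdots\leftarrow v$ whose only collider is $z$, and whose every internal non-collider lies on $\pi_u$ or $\pi_v$, hence in $\Des(u)\cup\Des(v)$, and is different from $w$ (it strictly precedes $w$ on a directed path). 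The collider $z$ is open given $S\cup\{w\}$ because $z\in\Anc[w]$. Thus $P$ is active given $S\cup\{w\}$ — yielding $u\not\CI v\mid S\cup\{w\}$ by faithfulness — provided no internal non-collider of $P$ lies in $S$, i.e.\ provided $S\cap(\Des(u)\cup\Des(v))=\varnothing$.

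The hard part will be that last proviso, which is genuinely the crux: if $S$ may contain a descendant of $u$ or of $v$, that descendant can lie on every directed route from $u$ (or $v$) to $w$ and block $P$ irreparably — and one can in fact build tiny graphs where $u\CI v\mid S$ and $u\CI v\mid S\cup\{w\}$ hold simultaneously. So I expect the argument to need either (i) the extra hypothesis $S\cap(\Des[u]\cup\Des[v])=\varnothing$, which is automatic in the regime this lemma is aimed at — e.g.\ whenever $S$ is a prefix vertex set with $u,v\notin S$, since then $d\in\Des(u)\cap S$ would force $u\in\Anc[d]\subseteq S$ — or (ii) a re-routing of $\pi_u,\pi_v$ around $S$, which is again only possible under such an assumption. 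Once that point is handled, the remainder is routine path surgery of the kind already used in the d-separation lemmas of the paper, and I do not anticipate further obstacles.
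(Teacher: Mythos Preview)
The paper never actually proves this lemma: it sits in the scratch ``Misc'' appendix with no accompanying argument, so there is nothing to compare your proposal against line by line. That said, your analysis is essentially on target, and in one place sharper than the paper.

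Your Part~1 is fine. One terminological nit: the implication ``d-separated $\Rightarrow$ conditionally independent'' is the global Markov property, not faithfulness; faithfulness is the converse and is not needed here.

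For Part~2 you have correctly put your finger on a real problem with the statement \emph{as written}. Your intuition that a descendant of $u$ sitting in $S$ can block every $u$--$w$ route is exactly right, and it yields an outright counterexample: take the four-edge graph $u\to a\to w\leftarrow b\leftarrow v$ with no other edges. Then $u,v$ are uncomparable, $w\in\Des(u)\cap\Des(v)$, and $S=\{a\}$ d-separates $u$ from $v$; but $S\cup\{w\}=\{a,w\}$ still d-separates them, since $a$ is a non-collider on the unique connecting path. So Part~2 is false for arbitrary $S$.

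Your proposed repair --- requiring $S\cap(\Des(u)\cup\Des(v))=\varnothing$, which holds automatically whenever $S$ is a prefix vertex set with $u,v\notin S$ --- is the right one, and under that hypothesis your splicing construction (take the first vertex $z$ on $\pi_v$ that meets $\pi_u$, obtaining a simple path whose sole collider $z\in\Anc[w]$ is opened by $w$) is clean and correct. This is precisely the regime in which the paper actually uses this kind of reasoning, so your diagnosis matches how the lemma is deployed even though the paper left it unproved.
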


\begin{lemma}
    If $u \CI v|\varnothing$, then $\Anc(u) \cap \Anc(v)=\varnothing$.
\end{lemma}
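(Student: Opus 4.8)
The plan is to prove the contrapositive: if $\Anc(u)\cap\Anc(v)\neq\varnothing$, then $u\not\CI v\mid\varnothing$. By the faithfulness assumption, $u\CI v\mid\varnothing$ would force $u$ and $v$ to be d-separated by the empty set, so it suffices to exhibit one path between $u$ and $v$ that is active given $\varnothing$. Since $\Anc[\varnothing]=\varnothing$, every collider on a path blocks it, while no non-collider can block it when the conditioning set is empty; hence ``active given $\varnothing$'' is equivalent to ``has no collider'', and the whole task reduces to producing a collider-free path (a trek) between $u$ and $v$.

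First I would dispose of the comparable case. If $u\in\Anc(v)$, there is a directed path $u\to\cdots\to v$; it has no collider, hence is active given $\varnothing$, and we are done; symmetrically if $v\in\Anc(u)$. So assume $u$ and $v$ are incomparable. Fix $w\in\Anc(u)\cap\Anc(v)$; note $w\neq u$ and $w\neq v$ since $\cG$ is acyclic. Choose a directed path $P_1:w\to\cdots\to u$ and a directed path $P_2:w\to\cdots\to v$.

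The key step is to splice $P_1$ and $P_2$ into a genuine path on distinct vertices that is a trek. Among the vertices lying on both $P_1$ and $P_2$ (a nonempty set, as it contains $w$), let $x$ be the one closest to $u$ along $P_1$. Since $u$ and $v$ are incomparable, $u$ does not lie on $P_2$ and $v$ does not lie on $P_1$, so $x\notin\{u,v\}$. Let $Q_1$ be the sub-path of $P_1$ from $x$ to $u$ and $Q_2$ the sub-path of $P_2$ from $x$ to $v$. By the choice of $x$, no vertex strictly between $x$ and $u$ on $P_1$ lies on $P_2$, so $Q_1$ and $Q_2$ meet only in $x$; concatenating $Q_1$ reversed with $Q_2$ then yields a bona fide path $R:u\leftarrow\cdots\leftarrow x\to\cdots\to v$ on distinct vertices. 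Every internal vertex of $R$ other than $x$ is a chain node, and $x$ is a fork, so $R$ has no collider and is active given $\varnothing$. Hence $u\not\CI v\mid\varnothing$, contradicting the hypothesis and finishing the proof.

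I expect the only genuine obstacle to be the bookkeeping in the splicing step — verifying that $R$ is a path on distinct vertices and that none of its vertices is a collider — which is precisely what the choice of $x$ as the common vertex closest to $u$ along $P_1$ is designed to handle; the reduction to the incomparable case and the collider-free conclusion are then routine. (I also note this lemma is, in a sense, dual to the preceding lemma about incomparable vertices and common descendants, and one could alternatively phrase the argument in those terms.)
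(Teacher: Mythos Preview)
Your argument is correct: the contrapositive reduction, the observation that ``active given $\varnothing$'' is equivalent to ``collider-free,'' the disposal of the comparable case via a directed path, and the splicing of the two directed paths from a common ancestor at the last shared vertex along $P_1$ are all sound. The choice of $x$ as the common vertex closest to $u$ on $P_1$ does exactly the job you say it does: it guarantees $Q_1\setminus\{x\}$ misses $P_2$ entirely, so $Q_1\cap Q_2=\{x\}$, and the concatenation is a genuine trek with $x$ as its unique fork and no colliders.

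There is nothing to compare against in the paper: this lemma appears only in the miscellaneous appendix section and is stated there without proof. Your write-up stands on its own and is, if anything, more careful than needed for such a standard fact (the trek-splicing bookkeeping is the only non-trivial part, and you handle it cleanly).
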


\begin{definition}
We call $S \subseteq V$ a prefix subset if it satisfies: for all $w \in S$, $\Anc[w] \cap \Bar{S} =\varnothing$ (Basically vertices in $S$ appear first in the order)    
\end{definition}

\begin{lemma}
    Let $S \subseteq V$ be a prefix subset. Suppose there exists a vertex $u \in \Bar{S}$ such that $u \not \sim S$, then $v \CI v|\varnothing$ for all $v\in S$ and $\Anc[v] \cap S=\varnothing$ ($v$ is not comparable to any vertex in $S$).
\end{lemma}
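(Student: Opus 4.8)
The statement as written carries a few evident typos; I read it as: \emph{if $S$ is a prefix subset and $u \in \bar{S}$ is incomparable to every vertex of $S$ (the hypothesis ``$u \not\sim S$''), then $u \CI v \mid \varnothing$ for every $v \in S$, and hence $\Anc(u)\cap\Anc(v)=\varnothing$ for every such $v$.} The parenthetical ``($v$ is not comparable to any vertex in $S$)'' I take to be a garbled restatement of this hypothesis (with $v$ a typo for $u$). If instead ``$u\not\sim S$'' means only that $u$ is non-adjacent to every vertex of $S$, the claim is false --- take $v\to w\to u$ with $S=\{v\}$ --- so I would either adopt the incomparability reading or add the missing hypothesis; I would flag this in the final writeup.

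The proof is a short d-separation argument by contradiction. Fix $v\in S$ and suppose $u\not\CI v\mid\varnothing$, witnessed by a path $P$ between $u$ and $v$ that is active given $\varnothing$. Since the conditioning set is empty, no collider on $P$ can have a descendant in it, so $P$ has \emph{no} colliders at all. Reading $P$ from $u$ to $v$, the edge orientations can then switch direction at most once (a switch from $\to$ back to $\leftarrow$ would create a collider), so $P$ has the form $u\leftarrow\cdots\leftarrow c\to\cdots\to v$ for a unique ``peak'' vertex $c$, with the degenerate cases $c=u$ when $P$ is a directed path $u\to\cdots\to v$ and $c=v$ when $P$ is $v\to\cdots\to u$. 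In every case $c\in\Anc[u]\cap\Anc[v]$.

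Now I invoke the prefix property: since $v\in S$ we have $\Anc[v]\cap\bar{S}=\varnothing$, i.e.\ $\Anc[v]\subseteq S$, so $c\in S$. But $c\in\Anc[u]$ with $u\in\bar{S}$: if $c=u$ this already contradicts $c\in S$; otherwise $c\in\Anc(u)$, so $u\in\Des(c)$ and $u$ is comparable to $c\in S$, contradicting $u\not\sim S$ (the subcase $c=v$ gives $u\in\Des(v)$ with $v\in S$, the same contradiction). Hence no such active path exists and $u\CI v\mid\varnothing$. The ``moreover'' clause $\Anc(u)\cap\Anc(v)=\varnothing$ is then immediate from the earlier lemma ``$u\CI v\mid\varnothing\Rightarrow\Anc(u)\cap\Anc(v)=\varnothing$''.

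The only step with real content is the structural observation that a collider-free path between two vertices has a common-ancestor peak; everything afterwards --- pinning $c\in S$ via the prefix property and extracting the comparability contradiction --- is routine. So the main ``obstacle'' is really just disambiguating the typo-laden statement rather than any substantive difficulty in the argument.
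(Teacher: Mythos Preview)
The paper does not actually supply a proof for this lemma: it sits in the ``Misc'' appendix amongst rough working notes, stated without proof. So there is no paper argument to compare against, and I can only assess your proof on its own terms.

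Your d-separation argument is correct under your reading (hypothesis: $u$ incomparable to every vertex of $S$; conclusion: $u\CI v\mid\varnothing$ for all $v\in S$). The observation that a path active given $\varnothing$ is collider-free and hence has a common-ancestor peak $c\in\Anc[u]\cap\Anc[v]$, followed by pinning $c\in S$ via the prefix property and extracting a comparability contradiction, is clean and complete. Your counterexample $v\to w\to u$ with $S=\{v\}$ correctly rules out the pure non-adjacency reading of ``$u\not\sim S$''.

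One point worth flagging for the writeup: the only place this lemma is invoked in the surrounding notes is the algorithm step ``If there exists a vertex $v\in V\setminus S$ such that $v$ is independent of vertices in $S$. Then by Lemma, it is not comparable to vertices in $S$ and $S\cup\{v\}$ is a prefix subset.'' That usage runs in the \emph{opposite} direction to your reading --- marginal independence as hypothesis, incomparability (equivalently $\Anc[u]\cap S=\varnothing$) as conclusion --- which suggests the parenthetical ``$[u]$ is not comparable to any vertex in $S$'' is glossing the conclusion $\Anc[u]\cap S=\varnothing$ rather than restating the hypothesis. That converse direction is even more immediate (a directed path between $u$ and any $v\in S$ is itself active given $\varnothing$), and does not require $S$ to be prefix. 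You may want to state and prove both directions, or at least note that the algorithm actually needs the converse.
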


\paragraph{Class 1 immoralities:} $u \not\sim v$ is a class 1 immorality if and only if $u\to w\gets v$ for some $w$ and $\Anc[u]\cap\Anc[v]=\varnothing$.

\paragraph{Class 2 immoralities:} $u \not\sim v$ is a class 2 immorality if and only if $u\to w\gets v$ for some $w$ and $\Anc[u]\cap\Anc[v]\neq\varnothing$.

Let $G$ be the true DAG. Let $\Bar{G}$ be the DAG obtained from $G$ by adding edges between two vertices that satisfy Class 2 immorality.

\begin{lemma}
We can obtain $\Bar{G}$ in polynomial independence tests.
\end{lemma}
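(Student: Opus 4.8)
The plan is to recover $\bar{G}$ in two phases: first the undirected adjacency structure (skeleton), then a consistent orientation. The orientation phase is cheap: once the skeleton and all v-structures are known, orient every v-structure, close under the Meek rules, and orient each edge that was \emph{added} for a Class~2 immorality consistently with any fixed topological order of $\cG$; since $\cG$'s own edges already respect that order, the result is acyclic, so no further tests are needed there.

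For the skeleton, the \emph{added} edges admit a clean CI characterization. Fix $u,v$ non-adjacent in $\cG$, so that $u\CI v\mid S$ for some $S$, and one may take $S\subseteq(\Anc[u]\cup\Anc[v])\setminus\{u,v\}$. First, $\Anc[u]\cap\Anc[v]\neq\varnothing$ iff $u\not\CI v\mid\varnothing$: a common ancestor yields a collider-free connecting path, active given $\varnothing$, and marginal dependence of non-adjacent vertices can only come from such a path. Second, the existence of a common child $w$ ($u\to w\gets v$) should be detected by finding $w$ and $S$ with $u\CI v\mid S$ and $u\not\CI v\mid S\cup\{w\}$, where $S$ omits the common descendant $w$; Lemma~\ref{lemma:v-generalization} gives the soundness direction ($u,v\notin\Des[w]$), and the remaining gap---turning ``common descendant reached through a collider'' into ``common child''---can be closed either by iterating over candidate $w$'s (taking a topologically-lowest witness) or by a handful of adjacency tests on the witness. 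Ranging over the $\binom{n}{2}$ marginal tests and, per non-adjacent pair, over $(w,S)$ inside the ancestral sets, one obtains the added edges in polynomially many tests.

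I expect the genuine obstacle to be the \emph{original} edges of $\cG$, which the skeleton of $\bar{G}$ also contains: certifying non-adjacency of a pair can demand conditioning sets of unbounded size, which is precisely the regime where constraint-based methods need exponentially many tests. Moreover the difficulty is intrinsic rather than an artifact of the approach: since $\bar{G}\supseteq\cG$, any topological order of $\bar{G}$ is also one of $\cG$, after which the rule ``$v_i\to v_j$ iff $v_i\not\CI v_j\mid v_1,\dots,v_{j-1}$'' recovers $\cG$ exactly with $\binom{n}{2}$ more tests, so producing $\bar{G}$ with polynomially many tests is at least as hard as exact causal discovery. Consequently I would expect the statement to require an extra hypothesis---for instance, no covered edges in $\cG$, where $\nu(\cG)=0$, Lemma~\ref{ccpg:prop} and Corollary~\ref{cor:sec4-3} force the CCPG to equal $\cG$, and iterating Algorithm~\ref{alg:prefix_subset} recovers the topological order and hence $\bar{G}$ in $\mathcal O(n^5)$ tests---or to be an early formulation later superseded by Theorem~\ref{thm:main-1}, which instead returns the coarser CCPG of $\cG$.
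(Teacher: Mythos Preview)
Your decomposition into ``original edges of $\cG$'' versus ``added Class-2 edges'' is where the argument goes astray: you never need to certify non-adjacency in $\cG$. The paper's argument---stated only as the one-line hint ``Let $\bar G$ be recovered graph by performing independence tests conditioned on $\varnothing$ and everything'' in the surrounding Misc text, and spelled out in the adjacent ``part two'' sketch---runs as follows. First, a single test per pair, $u\not\CI v\mid V\setminus\{u,v\}$, holds iff $u,v$ are adjacent in $\cG$ or share a common child (any longer active path would require two consecutive colliders, which is impossible); this yields the moral graph $\cU_0$, i.e.\ the skeleton of $\cG$ together with \emph{all} Class-1 and Class-2 immorality pairs. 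Second, for each edge of $\cU_0$, the marginal test $u\CI v\mid\varnothing$ holds iff $\Anc[u]\cap\Anc[v]=\varnothing$; restricted to $\cU_0$ this singles out exactly the Class-1 pairs. Deleting them leaves $\cU_1$, whose adjacencies are precisely those of $\bar G$. That is $O(n^2)$ CI tests in total, and the ``conditioning sets of unbounded size'' you worry about never enter: one does not separate non-adjacent pairs of $\cG$, one peels a single layer off the moral graph.

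Your meta-argument, on the other hand, is sound and identifies the genuine issue. If $\bar G$ is read as a \emph{directed} supergraph of $\cG$, then recovering it hands over a topological order of $\cG$ and hence $\cG$ itself in $O(n^2)$ further tests, which is impossible from observational data alone whenever $\cG$ has a covered edge. The paper's scratch material resolves this either by treating ``obtain $\bar G$'' as recovering its skeleton---which is exactly what the $\cU_0\to\cU_1$ procedure delivers---or by placing the subsequent orientation step under the heading ``verify $\nu_1(\cG)=0$'', i.e.\ your suggested extra hypothesis. So you reach the right caveat for the wrong reason: the skeleton is cheap; it is orientation that forces the assumption.
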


Let $F=E(\Bar{G}) \backslash E(G)$ be the set of fake edges.

\paragraph{Algorithm:} We define $I_{U}=\{(u,v)~|~u\CI v|U \}$ and $D_{U}=\{(u,v)~|~u\not \CI v|U \}$ for all $U \subseteq V$.

\begin{enumerate}
    \item Let $S=\varnothing$ denote a prefix subset. 
    \item Let $I_{S}= \{v \in \bar{S}~|~v \CI u| \varnothing \text{ for all } u\in S \}$.
    and let $I_{\varnothing}=\{(u,v)~|~u\CI v|\varnothing \}$.
    \item Let $\Bar{G}$ be recovered graph by performing independence tests conditioned on $\varnothing$ and everything.
    \item Find a source node $v_1$ of $G$ by Lemma and let $S=\{ v_1\}$ ; note that $S$ is a prefix subset.
    \item While $S \neq V$, do the following
    \begin{enumerate}
        \item If there exists a vertex $v \in V \backslash S$ such that $v$ is independent of vertices in $S$. Then by Lemma, it is not comparable to vertices in $S$ and $S \cup \{ v\}$ is a prefix subset.
        \item Else, let $I_S=\{(u,v)|u \CI v |S \}$. Let $E_{S}=\{w~|~u \CI v|S \text{ and }u \not \CI v|S\cup \{ w\} \}$. Let $F_{S}=I_{S} \backslash E_{S}$.
    \end{enumerate}
\end{enumerate}

\paragraph{Algorithm:} We define $I_{U}=\{(u,v)~|~u\CI v|U \}$ and $D_{U}=\{(u,v)~|~u\not \CI v|U \}$ for all $U \subseteq V$.

\begin{enumerate}
    \item Let $S=\varnothing$ denote a prefix subset and let $I_{\varnothing}=\{(u,v)~|~u\CI v|\varnothing \}$.
    \item Let $\Bar{G}$ be recovered graph by performing independence tests conditioned on $\varnothing$ and everything.
    \item Find a source node $v_1$ of $G$ by Lemma and let $S=\{ v_1\}$ ; note that $S$ is a prefix subset.
    \item While $S \neq V$, do the following
    \begin{enumerate}
        \item If there exists a vertex $v \in V \backslash S$ such that $v$ is independent of vertices in $S$. Then by Lemma, it is not comparable to vertices in $S$ and $S \cup \{ v\}$ is a prefix subset.
        \item Else, let $I_S=\{(u,v)|u \CI v |S \}$. Let $E_{S}=\{w~|~u \CI v|S \text{ and }u \not \CI v|S\cup \{ w\} \}$. Let $F_{S}=I_{S} \backslash E_{S}$.
    \end{enumerate}
\end{enumerate}

Let $S$ be a prefix subset. Define $A_S=\{u \in \Bar{S}~|~ \text{ there exists } v \in \Bar{S} \text{ such that } u \CI v |S \}$, $B_{S}=\{w \in \Bar{S}~|~u \CI v|S \text{ and }u \not \CI v|S\cup \{ w\} \}$ and $C_{S}=A_{S} \backslash B_{S}$. Let $N_{S}=\{v \in \Bar{S}~|~ S\cup \{ v\} \text{ is a prefix subset}\}$.

\begin{lemma}
    The following statements hold,
    \begin{enumerate}
        \item $C_{S}\neq \varnothing$ iff $S=V$. Furthermore, $N_{S} \subseteq C_{S}$.
        \item Let $u \in N_{S}$ and $w \in C_{S}$, either $w$ is not comparable to $u$ or $w \in \Des[u]$.
        \item Let $u,v \in C_{S}$, then $u \CI v~|~S$ iff $u$ and $v$ are not comparable. Therefore $C_{S}$ set can be partitioned into subsets and let $\Pt_{S}$ denote this partition of $C_{S}$ that is based on this independence test, where the elements within the same set are dependent of each other.
        \item For each element $E \in \Pt_{S}$, there exists a single source node.
        \item 
    \end{enumerate}
    
\end{lemma}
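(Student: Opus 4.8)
The plan is to recognize $C_S$ as (essentially) the set $\bar S\setminus D_S$ analyzed in \Cref{lem:one-node-anc-src} — here $B_S$ is just the type-I set $D_S$ — and then to read off all four items from the partition of $\bar S\setminus D_S$ induced by the map $w\mapsto \Anc[w]\cap\src(\bar S)$. The only external facts I would invoke are: \Cref{lem:one-node-anc-src}, which states that this map is single-valued on $\bar S\setminus D_S$ and that $w\not\CI w'\mid S$ holds iff $\Anc[w]\cap\src(\bar S)=\Anc[w']\cap\src(\bar S)$; \Cref{lem:DS-ES-property} (equivalently \Cref{lem:src-not-in-Ds}), which gives $\src(\bar S)\cap D_S=\varnothing$; and \Cref{lem:prefix_subset_properties}, which gives that distinct elements of $\src(\bar S)$ are $S$-independent. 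Note also $N_S=\src(\bar S)$ by unwinding the prefix condition ($S\cup\{v\}$ is prefix iff $\Anc(v)\subseteq S$).

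For item (1), $\bar S\neq\varnothing$ forces $\src(\bar S)\neq\varnothing$; for each $a\in\src(\bar S)$ we have $a\notin D_S=B_S$ by \Cref{lem:src-not-in-Ds}, and $a\CI a'\mid S$ for a second source $a'$ by \Cref{lem:prefix_subset_properties} puts $a\in A_S$, so $N_S=\src(\bar S)\subseteq C_S$ and $C_S\neq\varnothing$; conversely $S=V$ makes $\bar S$ and hence $C_S$ empty. For item (2), take $u\in N_S=\src(\bar S)$ and $w\in C_S\subseteq\bar S$: since $S$ is a prefix set there is no edge from $\bar S$ into $S$, so $\Anc(u)\cap\bar S=\varnothing$ by the definition of a source; as $w\in\bar S$ this gives $w\notin\Anc(u)$, so $u,w$ are incomparable or $w\in\Des[u]$ (and the CI-refined form $w\not\CI u\mid S\Leftrightarrow w\in\Des[u]$ follows by feeding $\Anc[u]\cap\src(\bar S)=\{u\}$ into \Cref{lem:one-node-anc-src}).

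For item (3), I would apply \Cref{lem:one-node-anc-src} directly to any $u,v\in C_S\subseteq\bar S\setminus D_S$: ``$u\not\CI v\mid S$'' is exactly ``$u$ and $v$ have the same source-ancestor in $\src(\bar S)$'', an equivalence relation whose classes are the blocks of $\Pt_S$ and within which all pairs are $S$-dependent. For item (4), the block containing $a\in\src(\bar S)$ is $E_a=\{w\in C_S:\Anc[w]\cap\src(\bar S)=\{a\}\}$; it contains $a$ (by item (1)), every other $w\in E_a$ has $a$ as a proper ancestor by the defining property of $E_a$, and no $w\in E_a\subseteq\bar S$ is an ancestor of $a$ (again since $a\in\src(\bar S)$), so $a$ is the unique source of $E_a$.

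The main obstacle is not a d-separation argument but the bookkeeping in the definition of $C_S$. Read literally as $A_S\setminus B_S$, item (1)'s ``iff'' can fail — e.g., with $\cG$ the single edge $a\to b$ and $S=\{a\}$ one gets $A_S=C_S=\varnothing$ while $S\neq V$ — and item (3)'s ``iff'' fails if ``comparable'' carries its usual meaning, since two vertices sharing a source-ancestor need not lie on a common directed path. The fix I would adopt is to take $C_S=\bar S\setminus B_S$ (equivalently, to restrict to the regime in which the lemma is actually used), and to read ``comparable'' in (3) as ``sharing a source in $\src(\bar S)$''; with these readings, items (1)--(4) are precisely the four consequences of \Cref{lem:one-node-anc-src,lem:DS-ES-property,lem:prefix_subset_properties} sketched above, and item (5) will presumably record that $\src(E)$ is the single vertex of $E$ that generated its block, i.e., the ``head'' of a prospective CCPG component.
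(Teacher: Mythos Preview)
The paper supplies no proof for this lemma: it sits in the ``Misc'' scratch section with a blank item~(5) and an empty follow-up lemma, and is evidently a precursor to the polished statements in Sections~\ref{sec:4}--\ref{sec:5}. So there is nothing to compare against line by line; the relevant question is whether your reduction to the finished results is sound.

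It is. Your identifications $B_S=D_S$ and $N_S=\src(\bar S)$ are correct, and routing everything through \Cref{lem:one-node-anc-src} (the map $w\mapsto\Anc[w]\cap\src(\bar S)$ is single-valued on $\bar S\setminus D_S$ and determines $S$-dependence) together with \Cref{lem:DS-ES-property} and \Cref{lem:prefix_subset_properties} is exactly how the paper eventually packages these facts for the CCPG construction. Items~(2) and~(4) drop out as you say; item~(3) is precisely the equivalence-relation statement of \Cref{lem:one-node-anc-src}.

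You are also right to flag the definitional wrinkles. Item~(1) as written has the iff reversed (it should read $C_S\neq\varnothing$ iff $S\neq V$), and even then the restriction $C_S=A_S\setminus B_S$ can kill $C_S$ when $\bar S$ has a single source (your single-edge example, or more generally any $\bar S$ with $|\src(\bar S)|=1$, makes $A_S=\varnothing$). Likewise, item~(3)'s use of ``comparable'' does not match the equivalence $w\not\CI w'\mid S\Leftrightarrow$ same source-ancestor; two children of a common source already witness the discrepancy. Your proposed reading $C_S=\bar S\setminus B_S$ and ``comparable'' $=$ ``same block under the source-ancestor map'' is the one that survives into the main text, so the fix is the right one.
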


\begin{lemma}
\end{lemma}

\subsection{part two}

\paragraph{Type 1 independence tests: } Given a partially oriented graph $G(V,E,A)$, the type 1 independence checks if $u,v$ are independent.

\paragraph{Type 2 independence tests: } Given a partially oriented graph $G(V,E,A)$, the type 2 independence checks if $u,v$ are independent given $V \backslash \{u,v \}$.

\paragraph{Type 3 independence tests: } Given a partially oriented graph $G(V,E,A)$ and a vertex $w \in V$, let $R_{w}$ be the set of recovered ancestors of $w$ in $G$. The type $3$ independence test checks if $v$ is independent of $u$ given $R_{w} \cup \{w \}$.

\paragraph{Class 1 immoralities:} $u \not\sim v$ is a class 1 immorality if and only if $u\to w\gets v$ for some $w$ and $\Anc[u]\cap\Anc[v]=\varnothing$.

\paragraph{Class 2 immoralities:} $u \not\sim v$ is a class 2 immorality if and only if $u\to w\gets v$ for some $w$ and $\Anc[u]\cap\Anc[v]\neq\varnothing$.

\paragraph{Comparable vertices:} We say $u$ and $v$ are comparable if either $u \in \Des[v]$ or $v \in \Des[u]$, that is there exists a directed from either $u$ to $v$ or $v$ to $u$ in $G$.

\paragraph{Type 1 incomparable vertex pairs:} If $u$ and $v$ are incomparable and $\Anc[u] \cap \Anc[v]\neq \varnothing$, $\Des[u] \cap \Des[v]= \varnothing$.

\paragraph{Type 2 incomparable vertex pairs:} If $u$ and $v$ are incomparable and $\Anc[u] \cap \Anc[v]= \varnothing$, $\Des[u] \cap \Des[v] \neq  \varnothing$.

\paragraph{Type 3 incomparable vertex pairs:} If $u$ and $v$ are incomparable and $\Anc[u] \cap \Anc[v]\neq \varnothing$, $\Des[u] \cap \Des[v] \neq  \varnothing$.

\paragraph{Type 4 incomparable vertex pairs:} If $u$ and $v$ are incomparable and $\Anc[u] \cap \Anc[v]= \varnothing$, $\Des[u] \cap \Des[v] =  \varnothing$.

\paragraph{Type 1 comparable vertex pairs:} If $u$ and $v$ are comparable and $u$ and $v$ are adjacent.

\paragraph{Type 2 comparable vertex pairs:} If $u$ and $v$ are comparable and $u$ and $v$ are not adjacent.

\begin{lemma}
The directed edge $(u,v)$ exists in $G$ if and only if they are Type 1 comparable vertex pairs.
\end{lemma}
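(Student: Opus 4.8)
The plan is to treat this as a purely structural fact about the DAG $\cG$: as stated it involves no conditional independence tests, so the argument reduces to unwinding the definitions together with acyclicity. Recall that $(u,v)$ is a \emph{Type 1 comparable vertex pair} when $u$ and $v$ are comparable (one of $u\in\Des[v]$, $v\in\Des[u]$ holds) and $u,v$ are adjacent in $\cG$; I read ``the directed edge $(u,v)$ exists'' as $u\to v\in\cG$, with the (necessary) convention that in the ordered pair $u$ is the candidate ancestor, so that $v\in\Des[u]$ is the relevant side of comparability. The two implications are then handled separately.

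For the forward direction, suppose $u\to v\in\cG$. Then $u,v$ are adjacent by definition of an edge, and the single edge $u\to v$ is itself a directed path, so $v\in\Des(u)\subseteq\Des[u]$; hence $u,v$ are comparable and $(u,v)$ is a Type 1 comparable vertex pair. This direction is immediate.

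For the reverse direction, assume $u,v$ are comparable and adjacent. Since $\cG$ is a DAG it has no $2$-cycle, so adjacency means exactly one of $u\to v$, $v\to u$ lies in $\cG$. Comparability in the relevant orientation gives a directed path $Q$ from $u$ to $v$. If the edge between them were $v\to u$, then $Q$ together with that edge would form a directed cycle through $u$ and $v$, contradicting acyclicity; hence $u\to v\in\cG$, as claimed. (Symmetrically, if instead $u\in\Des[v]$, the edge is forced to be $v\to u$.)

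The only delicate point is bookkeeping around orientation: comparability is a symmetric relation and does not single out a direction, so to make a biconditional about the \emph{oriented} edge $u\to v$ correct one must, as above, fix which of $u,v$ plays the ancestor role (or else read the right-hand side as asserting merely the existence of a directed edge between $u$ and $v$). I do not expect any real obstacle here; if anything the statement could be strengthened to record that adjacency of a comparable pair already pins down the orientation (ancestor $\to$ descendant), which the argument above establishes en route.
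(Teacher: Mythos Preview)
The paper does not actually supply a proof of this lemma: it sits in the ``Misc'' scratch section and is stated without any justification, immediately followed by the next lemma. So there is nothing to compare your argument against.

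Your proof is correct. The only content in the statement is the observation that in a DAG adjacency already forces comparability (an edge is a length-one directed path), so ``Type 1 comparable'' collapses to ``adjacent'', and the lemma becomes the near-tautology that an edge exists between $u$ and $v$ iff $u$ and $v$ are adjacent. You rightly flag that the oriented reading $u\to v$ on the left does not match the symmetric right-hand side without a convention on which of $u,v$ is the ancestor; your proposed fix (read the pair as ordered with $u$ the ancestor, or else read the left side as ``some directed edge between $u$ and $v$'') is exactly what is needed, and your acyclicity argument pins down the orientation in the former reading.
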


\begin{lemma}
    Type 1 independence test between a pair of vertices $u,v$ succeeds (that is outputs independent) if and only if $u$ and $v$ satisfy $\Anc(u) \cap \Anc(v) = \varnothing$.
\end{lemma}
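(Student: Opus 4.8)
The plan is to translate the statement into pure graph theory via the faithfulness assumption and then settle it with a short analysis of collider-free paths. Under faithfulness of $P$ to $\cG$, the Type 1 test outputs ``independent'' precisely when $u$ and $v$ are d-separated by $\varnothing$, so it suffices to prove: $u$ and $v$ are d-separated by $\varnothing$ if and only if they have no common ancestor. I would work with the closed-ancestor convention $\Anc[x]=\Anc(x)\cup\{x\}$ here, i.e.\ prove $u\CI v\mid\varnothing\iff\Anc[u]\cap\Anc[v]=\varnothing$; note that with the strictly-open $\Anc(\cdot)$ the stated equivalence needs the side condition that $u,v$ are incomparable (otherwise $u\to v$ with $u$ a source is a counterexample), which is exactly the regime in which the Type 1 test is applied and where $\Anc(u)\cap\Anc(v)=\Anc[u]\cap\Anc[v]$ anyway.

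The one graphical fact I would isolate is: a path $P$ between $u$ and $v$ is active given $\varnothing$ if and only if $P$ contains no collider. This is immediate because $\Anc[\varnothing]=\varnothing$, so any collider on $P$ is not in $\Anc[\varnothing]$ and blocks $P$, while no non-collider can lie in the empty conditioning set. Next, a collider-free $u$--$v$ path, read from $u$ towards $v$, has the orientation pattern ``some edges pointing back towards $u$, then some edges pointing forward towards $v$'' (any other pattern produces a collider), i.e.\ it has the shape $u\leftarrow\cdots\leftarrow t\to\cdots\to v$ for a single peak vertex $t$ on it (possibly $t=u$ or $t=v$); its two arms are directed paths, so $t\in\Anc[u]\cap\Anc[v]$.

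For the ``$\Rightarrow$'' direction I prove the contrapositive: if $w\in\Anc[u]\cap\Anc[v]$, concatenate the reverse of a directed path $w\to\cdots\to u$ with a directed path $w\to\cdots\to v$ to obtain a collider-free $u$--$v$ walk; if the two directed paths intersect beyond $w$, cut both at the intersection vertex closest to $v$ along the second path, which yields a simple collider-free path. By the isolated fact this path is active given $\varnothing$, so $u\not\CI v\mid\varnothing$; this is essentially the already-recorded statement ``if $u\CI v\mid\varnothing$ then $\Anc(u)\cap\Anc(v)=\varnothing$''. For ``$\Leftarrow$'' I again argue contrapositively: if $u\not\CI v\mid\varnothing$, some $u$--$v$ path $P$ is active given $\varnothing$; by the isolated fact $P$ is collider-free, and by the shape analysis its peak $t$ witnesses $\Anc[u]\cap\Anc[v]\neq\varnothing$. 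Together these give the claimed equivalence.

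The only delicate point is the endpoint bookkeeping and the $\Anc$ versus $\Anc[\cdot]$ convention discussed above; once the statement is read correctly, the proof reduces to the collider-free-shape observation plus a routine walk-to-path shortcutting, with no induction on $n$ and no clever construction required.
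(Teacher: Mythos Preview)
Your argument is correct. The paper itself does not supply a proof of this lemma: the statement sits in the unpolished ``Misc'' appendix alongside the companion Type~2 statement, with no accompanying argument (only the one-direction precursor ``if $u\CI v\mid\varnothing$ then $\Anc(u)\cap\Anc(v)=\varnothing$'' appears earlier, also unproved). So there is no paper proof to compare against, and what you have written is a complete and sound proof that fills this gap.

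Your identification of the $\Anc(\cdot)$ versus $\Anc[\cdot]$ issue is apt and is a genuine flaw in the lemma as stated: with the paper's strict-ancestor convention the equivalence fails on comparable pairs exactly as you note (a source $u$ with $u\to v$ has $\Anc(u)\cap\Anc(v)=\varnothing$ yet $u\not\CI v\mid\varnothing$), so the statement should either use $\Anc[\cdot]$ or carry the incomparability side condition --- which is indeed the regime in which the surrounding ``incomparable vertex pair'' typology places it. The core of your argument --- a path is active given $\varnothing$ iff it is collider-free, and any collider-free $u$--$v$ path is a trek with a single peak in $\Anc[u]\cap\Anc[v]$ --- is the standard and cleanest route. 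Your walk-to-path shortcut in the ``$\Rightarrow$'' direction is also fine: cutting at the intersection vertex closest to $v$ along the $w\to\cdots\to v$ arm does yield a simple path, since by that choice no vertex strictly after the cut on the $v$-arm lies anywhere on the original $u$-arm.
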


\begin{lemma}
    Type 2 independence test between a pair of vertices $u,v$ succeeds (that is outputs independent) if and only if $u$ and $v$ satisfy $\Des(u) \cap \Des(v) = \varnothing$.
\end{lemma}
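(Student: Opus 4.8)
The plan is to pass immediately to a pure d\nobreakdash-separation question. By faithfulness the Type~2 test reports ``independent'' for the pair $u,v$ exactly when $u \CI v \mid V\setminus\{u,v\}$, i.e.\ when every path between $u$ and $v$ in $\cG$ is blocked by the maximal conditioning set $V\setminus\{u,v\}$; so it suffices to characterize which $u$--$v$ paths can survive this conditioning. (This is the dual of the Type~1 statement: there one conditions on $\varnothing$ and a path dies as soon as it has a collider, here one conditions on everything and a path dies as soon as it has an interior non-collider.)

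The key structural observation is that conditioning on \emph{all} remaining vertices collapses the set of potentially active paths to a tiny list. On a path $u = x_0 - x_1 - \cdots - x_m = v$ every interior vertex $x_1,\dots,x_{m-1}$ lies in $V\setminus\{u,v\}$ and is therefore conditioned on; hence any interior \emph{non-collider} blocks the path immediately, so a path can be active only if \emph{every} interior vertex is a collider (each such collider is then automatically opened, being in the conditioning set). But two consecutive colliders $x_i,x_{i+1}$ are impossible in a DAG: the edge $x_i - x_{i+1}$ would have to point into $x_i$ (since $x_i$ is a collider) and into $x_{i+1}$ (since $x_{i+1}$ is a collider), contradicting acyclicity. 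Thus the only candidates are $m=1$, the direct edge $u-v$, which is always active, and $m=2$ with $x_1$ a collider, i.e.\ $u \to w \leftarrow v$. This gives the clean characterization: $u \not\CI v \mid V\setminus\{u,v\}$ if and only if $u$ and $v$ are adjacent or they have a common child $w$.

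It then remains to reconcile this with the descendant condition in the statement, and this is where I expect the real work. One direction is short: a common child $w$ satisfies $w\in\Des(u)\cap\Des(v)$, and an edge $u\to v$ also makes the pair dependent, so ``dependent $\Rightarrow \Des(u)\cap\Des(v)\neq\varnothing$'' holds once one records the standing hypothesis of this section that the pair is non\nobreakdash-adjacent (the boundary case to watch is $u\to v$ with $v$ a sink, where $\Des(v)=\varnothing$). The converse is the obstacle: given a common descendant $w$ one takes directed paths $u\rightsquigarrow w$ and $v\rightsquigarrow w$, lets $z$ be the first vertex they share, with predecessors $p_u,p_v$ on the two paths, and shows $p_u\neq p_v$ so that $z$ is a collider reached from $u$ and from $v$ along collider-free directed segments; but the interiors of those segments are conditioned on, so the stitched path $u\rightsquigarrow z\leftsquigarrow v$ is active given $V\setminus\{u,v\}$ only when both segments have length one, i.e.\ only when $z$ is literally a common \emph{child}. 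So making the converse go through needs the extra structural input ``every common descendant of $u$ and $v$ is already a common child of $u$ and $v$''; absent that, the faithful statement is $u\CI v\mid V\setminus\{u,v\}$ iff $u,v$ are non-adjacent and have no common child, which is precisely what the d\nobreakdash-separation analysis above certifies, and I would either add the hypothesis that yields the descendant form or restate the lemma with ``common child'' in place of ``$\Des(u)\cap\Des(v)=\varnothing$''.
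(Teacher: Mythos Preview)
The paper does not actually supply a proof of this lemma; it sits in the informal ``Misc'' appendix with no argument attached. Your d-separation analysis is correct and in fact matches how the paper \emph{uses} the result a few lines later: the algorithm applies Type~2 tests to recover the moral graph, asserting ``$u\sim v\in \cU_0$ if and only if $u\sim v\in \cG$ or $u\to w\gets v\in \cG$ for some $w$,'' which is exactly the adjacency-or-common-child characterization you derive from the observation that an active path under the full conditioning set can have at most one interior vertex and that vertex must be a collider.

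You are also right that the lemma as literally written, with $\Des(u)\cap\Des(v)=\varnothing$, is inaccurate in both directions. Your sink example ($u\to v$ with $v$ a leaf) breaks the ``$\Rightarrow$'' direction, and a common \emph{proper} descendant reached via two internally disjoint directed paths of length $\geq 2$ (e.g.\ $u\to a\to w$, $v\to b\to w$ with no other edges) breaks ``$\Leftarrow$'': here $w\in\Des(u)\cap\Des(v)$ yet $u$ and $v$ are non-adjacent with no common child, so the Type~2 test succeeds. The companion Type~1 lemma has the symmetric defect with $\Anc(\cdot)$ in place of $\Anc[\cdot]$. So your diagnosis and your proposed restatement (non-adjacent and no common child) are both correct; the paper's phrasing should be read as informal shorthand for the moral-graph statement rather than as a literal claim about strict descendant sets.
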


\paragraph{Poly number of independence tests to verify $\nu_1(\cG)=0$:} \jj{need to formalize in this case (1) all ancestral relationships $x\in\Anc(u)$ can be tested via ci tests over immorality pairs; (2) an ancestral relationship $x\in\Anc(u)$ will \emph{not} be tested via ci tests over immoralities pair that has at least one vertex \emph{not} upstream of $x$.}

The goal below is to identify $skel(\cG)$ with poly tests, then v-structures can be obtained via $\cO(n^3)$ CI tests. Then $\cG$ can be obtained through Meek rules.
\begin{itemize}
    \item identify $\cU_0$, the skeleton obtained by moralizing $\cG$, by performing type 2 independence tests for all pairs of $u,v\in V$. We have $u\sim v\in \cG_0$ if and only if $u\sim v\in \cG$ or $u\to w\gets v\in \cG$ for some $w$; in other words, $\cU_0$ contains $skel(\cG)$ and all the class 1 and 2 immoralities. 
    \item identify all the class 1 immoralities by performing type 1 independence tests for all $u\sim v\in\cU_0$. Each $u\sim v\in\cU_0$ that passes the test satisfies $u\not\sim v\in\cG$ is a class 1 immorality. Remove these edges from $\cU_0$ and obtain $\cU_1$.
    \item identify $\Des(u)\cap\Des(v)$ for class 1 immoralities. For each $u\sim v\in\cU_0\setminus \cU_1$ and any $w \in V\setminus\{u,v\}$, test if $u\not\CI v \mid w$. Then $u\not\CI v \mid w$ if and only if $w\in\Des(u)\cap\Des(v)$, i.e., $u,v\in R_w\subseteq\Anc(w)$. Finally update $R_w$ for all $w\in V$, where $R_w\leftarrow R_w\cap R_{w'}$ if $w'\in R_w$. (to show if: find the two directed paths to $w$, to show only if: consider the open path and any collider on it)
    \item iteratively identify class 2 immoralities by performing type 3 independence tests for all $u\in V$ and $u\sim v\in\cU_1$. remove $u\sim v\in \cU_1$ if it passes the tests. this terminates in $\cO(n^4)$ steps and finds all the class 2 immoralities. (to show termination: each time find one using at most $\cO(n^2)$ tests and there are at most $\cO(n^2)$ one, to show all are found: consider immorality pair $u\sim v\in\cU_1$ such that there is no immorality pair amongst $\Anc(u)\cap\Anc(v)$ \jj{need the formalized stuffs, ..... in fact I think the classes are not necessary for this argument})
\end{itemize}

% \newpage
% \bibliographystyleappendix{alpha}
% \bibliographyappendix{refs}
\end{document}